\documentclass{article}

\usepackage{microtype}
\usepackage{graphicx}
\usepackage{subcaption}
\usepackage{booktabs} %
\usepackage{bbm}
\usepackage{hyperref}
\usepackage{xspace}

\usepackage[accepted]{include/icml2026}

\usepackage{amsmath}
\usepackage{amssymb}
\usepackage{mathtools}
\usepackage{amsthm}
\usepackage{mathrsfs}
\usepackage{appendix}
\usepackage{aliascnt}
\usepackage[nameinlink,noabbrev]{cleveref}
\usepackage{titletoc}
\usepackage[numbers]{natbib}
\usepackage{tikz}

\allowdisplaybreaks

\newtheorem{thm}{Theorem}[section]

\newaliascnt{lem}{thm}
\newtheorem{lem}[lem]{Lemma}
\aliascntresetthe{lem}

\newaliascnt{prop}{thm}
\newtheorem{prop}[prop]{Proposition} 
\aliascntresetthe{prop}

\newaliascnt{assumption}{thm}
\newtheorem{assumption}[assumption]{Assumption}
\aliascntresetthe{assumption}

\newaliascnt{rem}{thm}
\newtheorem{rem}[rem]{Remark} 
\aliascntresetthe{rem}

\newaliascnt{defi}{thm}
\newtheorem{defi}[defi]{Definition} 
\aliascntresetthe{defi}

\crefname{thm}{theorem}{theorems}
\Crefname{thm}{Theorem}{Theorems}
\crefname{lem}{lemma}{lemmas}
\Crefname{lem}{Lemma}{Lemmas}
\crefname{prop}{proposition}{propositions}
\Crefname{prop}{Proposition}{Propositions}
\crefname{assumption}{assumption}{assumptions}
\Crefname{assumption}{Assumption}{Assumptions}

\newenvironment{talign*}
 {\csname align*\endcsname}
 {\endalign}

\usepackage[textsize=tiny]{todonotes}

\icmltitlerunning{Thinned Mean Field Langevin Dynamics}
\newcommand{\E}{\mathbb{E}}
\newcommand{\R}{\mathbb{R}}

\newcommand{\calH}{\mathcal{H}}
\newcommand{\calE}{\mathcal{E}}

\newcommand{\calO}{\mathcal{O}}
\newcommand{\calY}{\mathcal{Y}}

\newcommand{\calP}{\mathcal{P}}
\newcommand{\calN}{\mathcal{N}}

\newcommand{\dd}{\mathrm{d}}

\newcommand{\bH}{\mathbf{H}}

\newcommand{\scrF}{\mathscr{F}}

\newcommand{\scrx}{\mathscr{X}}

\newcommand{\scrQ}{\mathscr{Q}}

\newcommand{\op}{\operatorname{op}}

\newcommand{\Id}{\mathrm{Id}}
\newcommand{\N}{\mathbb{N}}
\newcommand{\kl}{\mathrm{KL}}

\newcommand{\bm}[1]{\boldsymbol{#1}}

\newcommand{\Var}{\operatorname{Var}}

\newcommand{\textfrac}[2]{{\textstyle\frac{#1}{#2}}}

\newcommand{\kts}{\textsc{kt-split}\xspace}
\newcommand{\ktsc}{\textsc{kt-split}-Compress\xspace}
\newcommand{\ktscd}[1][\delta]{\textsc{kt-split}-Compress$(#1)$\xspace}
\newcommand{\ktc}{KT-Compress\xspace}
\newcommand{\ktcd}{KT-Compress$(\delta)$\xspace}

\newcommand{\event}{\mathcal{E}}
\newcommand{\eventd}{\event(\delta)}
\def\norm#1{\|{#1}\|} %

\def\indic#1{\mathbbm{1}_{#1}} %

\begin{document}

\twocolumn[
  \icmltitle{Thinned Mean Field Langevin Dynamics}

  \icmlsetsymbol{equal}{*}

  \begin{icmlauthorlist}
    \icmlauthor{Zonghao Chen}{a}
    \icmlauthor{Heishiro Kanagawa}{d}
    \icmlauthor{François-Xavier Briol}{a}
    \icmlauthor{Chris J. Oates}{d,e,equal}
    \icmlauthor{Lester Mackey}{f,equal}
  \end{icmlauthorlist}

  \icmlaffiliation{a}{University College London}
  \icmlaffiliation{d}{Newcastle University}
  \icmlaffiliation{e}{The Alan Turing Institute}
  \icmlaffiliation{f}{Microsoft Research New England}
  \icmlcorrespondingauthor{Zonghao Chen}{zonghao.chen.22@ucl.ac,uk}
  \vskip 0.3in
]
\printAffiliationsAndNotice{\icmlEqualContribution}

\begin{abstract}
Several important learning tasks can be formulated as minimizing an entropy-regularized objective over an appropriate space of probability distributions.
Mean-field Langevin dynamics (MFLD) facilitate computation in this general context, casting the minimizer as the invariant distribution of a McKean--Vlasov process, which can be numerically discretized using $N$ particles and thus simulated.
However, simulating this interacting particle system has computational complexity of order $N^2$. 
Motivated by recent research into \emph{kernel thinning}, we propose \texttt{KT-MFLD}, in which each particle interacts only with a thinned particle coreset of size $\mathcal{O}(N^{\frac{1}{2}})$. 
\texttt{KT-MFLD} thus reduces the computational complexity to order $N^{\frac{3}{2}}$ while, under mild regularity conditions, achieving the same convergence guarantees (up to logarithmic factors) as MFLD. 
Our theoretical analysis is empirically confirmed on tasks including the training of student-teacher neural networks, quantization with maximum mean discrepancy, and computation of predictively-oriented posteriors in a post-Bayesian framework.
\end{abstract}

\section{Introduction}

Arguably the most popular regularizer in probabilistic machine learning is \emph{entropy};
several diverse tasks can be formulated as finding the distribution $\pi$ which minimizes an entropy-regularized functional~\citep{suzuki2023convergence,chizatmean,chazal2025computable}; 
\vspace{-5pt}
\begin{align}\label{eq:objective}
    \scrF(\mu) := F(\mu) - \sigma \mathrm{Ent}(\mu).  
\end{align}
Here, $\mathrm{Ent}(\mu) = -\int  \log \mu(\bm{x})  \mu(\dd \bm{x})$ denotes the differential entropy, and $\sigma > 0$ controls the strength of entropic regularization. Here we overload notation so that the measure $\mu$ and its density are represented by the same symbol. 
In settings where $F$ is linear with respect to $\mu$, i.e. $F(\mu)=\int V(\bm{x}) \dd \mu(\bm{x})$, the minimizer $\pi$ can be written up to its normalization constant as $\pi(\bm{x}) \propto \exp(-V(\bm{x})/\sigma)$~\citep{jordan1998variational,murphy2022probabilistic}.
In contrast, when $F$ is \emph{nonlinear}, no such explicit characterization of $\pi$ is available, precluding standard sampling methods such as Markov chain Monte Carlo \citep{neal1993probabilistic}. 
Prominent examples of the above formulation with \emph{nonlinear} $F$ include the optimal mean-field distributions of two-layer infinitely wide neural networks \citep{chizat2018global,hu2021mean,rotskoff2022trainability}, 
quantization with maximum mean discrepancy \citep{arbel2019maximum,chen2025regularized,chen2025stationary}, emerging post-Bayesian frameworks for inference \citep{wild2023rigorous,mclatchie2025predictively,shen2025prediction}, trajectory inference~\citep{chizat2022trajectory,gu2025partially}, and sparse inverse problems~\citep{boyd2017alternating}.

One popular approach in the nonlinear setting is mean-field Langevin dynamics (MFLD), a descent scheme that minimizes $\scrF$ in the \emph{steepest} direction over $\calP_2(\R^d)$, the space of probability distributions with finite second moment, with respect to the Wasserstein-2 metric~\citep{del2013mean,hu2021mean,suzuki2023convergence,chizatmean}.
MFLD can be interpreted via its dual formulation as a McKean--Vlasov process; given an initial distribution $\mu_0\in\calP_2(\R^d)$, 
\begin{align}\label{eq:cont_mfld}
    \dd X_s=-\nabla F'[\mu_s](X_s) \dd s + \sqrt{2 \sigma} \, \dd B_s , \quad s > 0, 
\end{align}
where $\mu_s=\textrm{Law}(X_s)$,  $F'[\mu_s]:\R^d\to\R$ is the \emph{first variation} of $F$ at $\mu_s$~\citep[Chapter 7]{santambrogio2015optimal}, $\nabla$ denotes the Euclidean gradient, and $B_s$ is a standard Brownian motion on $\mathbb{R}^d$.

\newcommand{\dotc}[1]{%
  \tikz[baseline=-0.8ex]\draw[fill=#1,draw=#1] (0,0) circle (0.8ex);%
}

\begin{figure}
    \centering
    \includegraphics[width=0.7\linewidth]{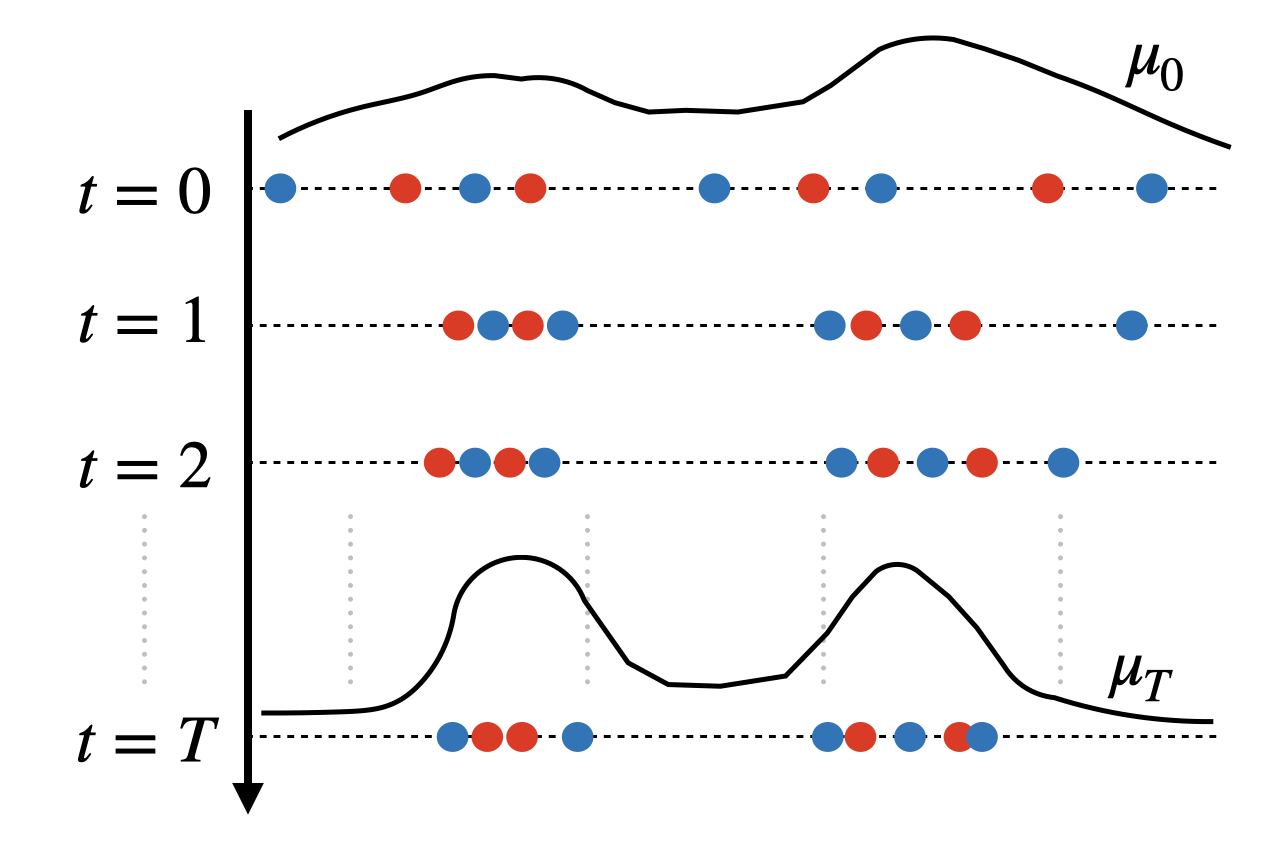}
    \vspace{-8pt}
    \caption{
    \textit{Illustration of \texttt{KT-MFLD}.}
    In \texttt{MFLD}, all $N$ particles interact pairwise at each iteration
    $t \in \{0,1,\ldots,T\}$, resulting in a cost of $\Omega(N^2)$. 
    In contrast, in \texttt{KT-MFLD}, all $N$ particles (\dotc{red}; \dotc{blue})
    interact only with a selected subset of $M$ particles (\dotc{red}),
    chosen via kernel thinning, leading to a reduced cost of
    $\mathcal{O}(N^{3/2})$.
    }
    \label{fig:page1}
    \vspace{-4mm}
\end{figure}

For linear $F$, the first variation $F'[\mu_s](X_s)$ is independent of $\mu_s$, and we recover Langevin dynamics, which can be simulated using a single particle~\citep{durmus2019high}. 
In contrast, for nonlinear $F$ the first variation $F'[\mu_s](X_s)$ depends on $\mu_s$ as well as on $X_s$.
Since $\mu_s$ is intractable, Eq.~\eqref{eq:cont_mfld} cannot be directly simulated.
A popular solution is to approximate Eq.~\eqref{eq:cont_mfld} using a system of $N$ interacting particles $\scrx_s:=\{\bm{x}_s^{(i)} \}_{i=1}^N$ with associated empirical measure $\mu_{\scrx_s}=\frac{1}{N} \sum_{i=1}^N \delta_{\bm{x}_s^{(i)}}$, replacing the population-level drift $\nabla F'[\mu_s]$ by its empirical counterpart $\nabla F'[\mu_{\scrx_s}]$.
This interacting particle system can be simulated, for example using the Euler--Maruyama method at discrete time steps $t=\{0, \ldots, T\}$, and, under mild conditions on $F$, it has been proven that the empirical distribution $\mu_{\scrx_t}$ converges to the minimizer $\pi$ in  Wasserstein-2 distance, as both $T, N\to\infty$~\citep{chizatmean,suzuki2023convergence,nitanda2024improved,nitanda2025propagation}. 

Unfortunately,  simulating the state $\bm{x}_{t+1}^{(i)}$ of the $i$-th particle at time $t+1$ requires computing interaction terms involving all of the $N$ particles at time $t$. 
This results in an $\Omega(N)$ cost per particle and an overall $\Omega(N^2)$ cost per iteration.
This limitation becomes particularly severe when the target distribution $\pi$ is high-dimensional and multimodal, as representing such distributions typically requires a large number of particles. 
Such computational constraints often force practitioners to work with relatively small particle populations. For instance, in mean-field neural networks \citep{nitanda2022convex} and post-Bayesian inference \citep{shen2025prediction}, the number of particles is typically restricted to the order of $10^2$. 
Our experiments in \Cref{sec:experiments} demonstrate that increasing the number of particles to the order of $10^3$ consistently leads to improved performance, highlighting the importance of more scalable particle-based simulations~\citep{jin2020random}. 

To reduce the $\Omega{O}(N^2)$ per-iteration cost, we propose a new descent scheme that, at each iteration $t$, selects a representative subset
$\{\bar{\bm{x}}_t^{(i)}\}_{i=1}^M \subset \{\bm{x}_t^{(i)}\}_{i=1}^N$
of size $M=\mathcal{O}(N^{1/2})$. We refer to this selection procedure as \emph{thinning}. Each particle then interacts only with these $M$ representative particles, reducing the per-iteration computational complexity from $\Omega{O}(N^2)$ to $\mathcal{O}(N^{3/2})$. 
The effectiveness of this approach depends on how the representative particles are selected.
For this work, we exploit a new near-linear time algorithm called \ktsc~\citep{shetty2021distribution} 
and we call the corresponding descent scheme \texttt{KT-MFLD}. 

Our main theoretical contribution is to establish sufficient conditions under which \texttt{KT-MFLD} achieves accuracy comparable to MFLD, measured by the discrepancy between $\mu_{\scrx_T}$ and target distribution $\pi$. 
We further demonstrate these conditions are satisfied in the applications of MFLD considered in this work.
In particular, for a fixed large $T$ and small step size, the finite-particle approximation error of \texttt{KT-MFLD} scales as $\calO(N^{-1} (\log N)^2)$. 
This matches the approximation error of the original MFLD up to the logarithmic factor in $N$. 
By contrast, if the representative subset $\{\bar{\bm{x}}_t^{(i)}\}_{i=1}^M$ was selected via random subsampling at each iteration, referred to as \texttt{Random-MFLD}, the resulting finite-particle approximation error would deteriorate to $\Omega(N^{-\frac{1}{2}})$. 
\Cref{sec:experiments} presents empirical results verifying that \texttt{KT-MFLD} outperforms \texttt{Random-MFLD} and other coreset-related acceleration methods~\citep{jin2020random} over a range of tasks for which MFLD is commonly used. 

\paragraph{Notations:} 
Boldface symbols (e.g. $\bm{x}$) are used exclusively to denote particles, while all remaining variables are written in standard font. 
$\|\cdot\|$ without any subscripts denotes the Euclidean norm. 
For a bivariate differentiable function $q: \R^d \times\R^d\rightarrow \R$, we write $\nabla_1 q(\bm{x}, \bm{x}^{\prime}) \in \R^d$ (resp. $\nabla_2 q(\bm{x}, \bm{x}^{\prime})$) as the gradient with respect to the first (resp. second) argument and $\bH_1 q(\bm{x}, \bm{x}^{\prime}) \in \R^{d\times d}$ (resp $\bH_2 q(\bm{x}, \bm{x}^{\prime})$) as the Hessian with respect to the first (resp. second) argument.  
For a matrix $A\in\R^{d\times d}$, $\|A\|_{\op}$ denotes its spectral/operator norm. 
For a Hilbert space $\mathcal{H}$, we write $\mathcal{H}^{\otimes d}$ for its $d$-fold tensor product. 
For $\mu\in\calP_2(\R^d)$, we write $\mu^{\otimes N}\in\calP_2((\R^d)^N)$ for the $N$-fold product measure. 
We use the standard asymptotic notation $\calO(\cdot)$, $\Omega(\cdot)$, and $\Theta(\cdot)$ to denote upper, lower, and matching upper--lower bounds, respectively, up to multiplicative constants.
$\tilde{\mathcal{O}}(f(n))$ means $\mathcal{O}(f(n))$ up to polylog factors in $n$. 

\section{Background}\label{sec:background}

We first discuss background material and related work. 
For the purposes of this work, consider the broad class of non-linear functionals of the form $F(\mu)=F_0(\mu) + \frac{\zeta}{2}\E_\mu[\|\bm{x}\|^2]$ for $\zeta>0$, where 
\begin{align}\label{eq:functional_F}
    F_0(\mu) &= \E_{u\sim \rho} \left[ R_1\left( \int q_1(u, \bm{x}) \dd \mu(\bm{x}) \right) \right] \nonumber \\
    & \qquad + \frac{1}{2} \iint q_2(\bm{x}, \bm{x}^\circ) \dd \mu(\bm{x}) \dd \mu(\bm{x}^\circ),
\end{align}
for differentiable $R_1: \R\to\R$, $q_1: \R^{d_u}\times \R^d \to \R$, $q_2: \R^d \times \R^d \to \R$, and $u\in\R^{d_u}$. 
In addition, we assume that $\rho$ is some distribution on the latent variable $u$, that $R_1$ is convex, and that $q_2$ is symmetric ($q_2(\bm{x},\bm{x}^\circ)=q_2(\bm{x}^\circ,\bm{x})$) and positive definite~\citep[Def.~4.15]{steinwart2008support}. 
This form of $F$  encompasses the vast majority of existing applications of MFLD, as reviewed below; more general forms of 
$F$ are discussed in \Cref{sec:mckean}. 
The minimizer $\pi$ is unique and absolutely continuous with respect to the Lebesgue measure under mild smoothness and boundedness conditions on $R_1, q_1, q_2$~\citep[Prop. 2.5]{hu2021mean}. 

\vspace{1mm}
\textbf{Mean-Field Langevin Dynamics}
In the above setting, the velocity field governing the particle evolution of MFLD takes the form: 
\begin{align}\label{eq:vector_field}
\nabla F'[\mu](\bm{x}) &= \nabla F_0'[\mu](\bm{x})+ \zeta \bm{x} \nonumber \\
\nabla F_0'[\mu](\bm{x}) &= \E_{u\sim \rho} \left[ R_1^\prime \big( \smallint q_1(u, \bm{x}^\circ) \dd \mu(\bm{x}^\circ) \big) \nabla_2 q_1(u, \bm{x}) \right] \nonumber \\
& \qquad + \int \nabla_{1} q_2(\bm{x}, \bm{x}^\circ) \dd \mu(\bm{x}^\circ) . 
\end{align}
As a result, the McKean-Vlasov interpretation of MFLD in Eq.~\eqref{eq:cont_mfld} simplifies to the following stochastic differential equation: given an initial distribution $\mu_0\in\calP_2(\R^d)$, 
\begin{align*}
    \dd X_s=-\big\{\nabla F_0'[\mu_s](X_s) + \zeta X_s \big\} \dd s + \sqrt{2 \sigma} \, \dd B_s , \quad s > 0, 
\end{align*}
where $\mu_s=\textrm{Law}(X_s)$. 
In practice, the above continuous-time dynamics is approximated by a system of interacting particles $\{\bm{x}_t^{(i)} \}_{i=1}^N$, evolving over discrete time steps for a fixed time horizon $T\in\N$. 
Given an initial collection of particles $\scrx_0 = \{\bm{x}_0^{(i)}\}_{i=1}^N$ sampled i.i.d from $\mu_0$, the update rule is given by
\vspace{-5pt}
\begin{align}\label{eq:practical_mfld}
    &\bm{x}_{t+1}^{(i)} \\
    &= \bm{x}_t^{(i)} - \gamma \{ \nabla F_0'[\mu_{\scrx_t}](\bm{x}_t^{(i)}) + \zeta \bm{x}_t^{(i)} \} + \sqrt{2 \sigma \gamma} \; \xi_t^{(i)}\nonumber
\end{align}
for each time
$t\in\{0, \ldots, T\}$ and particle $i\in\{1, \ldots, N\}$.
Here, 
$\gamma>0$ is a fixed step size, and $\{\xi_t^{(i)}\}_{i=1}^N$ are $N$ i.i.d. samples from the unit Gaussian $\calN(0, \Id)$. 
We refer to the discrete-time particle system defined in Eq.~\eqref{eq:practical_mfld} as \texttt{MFLD}, to distinguish it from MFLD defined in Eq.~\eqref{eq:cont_mfld}. 
For each particle $\bm{x}_t^{(i)}$, the vector field $\nabla F_0'[\mu_{\scrx_t}](\bm{x}_t^{(i)})$, as shown in Eq.~\eqref{eq:vector_field}, requires computing the integral with respect to an empirical distribution $\mu_{\scrx_t}$, 
which amounts to an empirical average over the set of $N$ particles $\{\bm{x}_t^{(i)}\}_{i=1}^N$. 
Specifically, 
\begin{align}\label{eq:original_vector_field}
    & \hspace{-5pt} \nabla F_0'[\mu_{\scrx_t}](\bm{x}_t^{(i)}) \nonumber \\
    &= \E_{u\sim \rho} \left[ R_1^\prime \left( \frac{1}{N}\sum_{j=1}^N q_1(u, \bm{x}_t^{(j)}) \right) \nabla_2 q_1(u, \bm{x}_t^{(i)}) \right] \nonumber \\
    &\qquad + \frac{1}{N}\sum_{j=1}^N \nabla_{1} q_2(\bm{x}_t^{(i)}, \bm{x}_t^{(j)}) . 
\end{align}
Therefore, repeating this computation for all $N$ particles in the descent scheme of Eq.~\eqref{eq:practical_mfld} results in a $\Omega(N^2)$ per-iteration computational cost.

\textit{Additional Linear Terms:}
The objective $F_0$ in Eq.~\eqref{eq:functional_F} may additionally contain a linear term $\int q_3(\bm{x}) \dd \mu(\bm{x})$, which would contribute to an additional $\frac{1}{N}\sum_{j=1}^N \nabla q_3(\bm{x}_t^{(j)})$ in the vector field in Eq.~\eqref{eq:original_vector_field}. 
Since this new term can be computed in $\calO(N)$ time, which is dominated by the $\Omega(N^2)$ interaction term, we omit it from the presentation.

\vspace{1mm}
\textbf{Illustrative Applications}
Here we highlight several representative applications of MFLD and their objectives $F_0$. 
Application 1 corresponds to the instantiation in which only the first term of $F_0$ is present, whereas Applications 2 and 3 correspond to the instantiations in which only the second term is present. 

\textit{1. Mean-Field Neural Networks:} Consider a mean-field two-layer neural network: $h_\mu(z)= \E_{\bm{x}\sim\mu}[\Psi(z, \bm{x})]$ where $\Psi(z, \bm{x}) = w_2 a(w_1^\top z)$ with input $z$ 
and parameters $\bm{x} = (w_1, w_2)$. Here, $w_1$ and $w_2$ denote the first- and second-layer weights, and $a$ is the activation function. Given a loss function $\ell:\R\times\R\to\R$ and a data distribution $\rho$ on $(z,y)$, the corresponding loss objective is $F_0(\mu) = \E_{(z,y)\sim\rho}[\ell(h_\mu(z), y)]$. 
Compared with Eq.~\eqref{eq:functional_F}, this setting involves only the first term with $q_1(u, \bm{x}) = \Psi(z, \bm{x})$, and $R_1$ is the pointwise loss function: $R_1(t ; u):=\ell(t, y)$ with $u=(z, y)$. 
The dynamics of Eq.~\eqref{eq:practical_mfld} coincides with noisy gradient descent on a finite-width neural network of width $N$. 
This correspondence was established by \cite{nitanda2017stochastic,rotskoff2022trainability,mei2018mean,chizat2018global} and has recently been exploited in subsequent analyses~\citep{hu2021mean,chen2024uniform,suzuki2023convergence,chizatmean,nitanda2022convex,nitanda2025propagation,chen2025towards} to derive non-asymptotic convergence guarantees for finite-width two-layer networks. 
In this paper, we consider an \emph{online} setting in which, at each iteration, a fresh batch of samples is drawn from the teacher network to update each network particle; see \Cref{rem:precompute} for further discussion. 
    
\textit{2. Quantization with Maximum Mean Discrepancy:} 
A recent line of work considers selecting a set of particles $\{\bm{x}^{(i)}\}_{i=1}^N$ that best approximate a target distribution $\pi$ by minimizing the squared maximum mean discrepancy (MMD) with respect to a kernel $\varkappa$~\citep{gretton2012kernel}: $F_0(\mu) = \iint \varkappa(\bm{x}, \bm{x}^\circ) \dd \mu(\bm{x}) \dd \mu(\bm{x}^\circ) - 2 \varkappa(\bm{x}, \bm{y}) \dd \mu(\bm{x}) \dd \pi(\bm{y}) + \text{const}$. 
Disregarding the term that is linear in $\mu$, and comparing with Eq.~\eqref{eq:functional_F}, this setting involves only the second term with $q_2(\bm{x},\bm{x}^\circ) = \varkappa(\bm{x}, \bm{x}^\circ)$. 
This framework has been explored in the context of generative modeling~\citep{arbel2019maximum,chen2025regularized}, sampling from unnormalized distributions~\citep{korba2021kernel,tian2026sobolev}, and numerical integration~\citep{xu2022accurate,chen2025stationary}. 
    
\textit{3. Predictively-Oriented (PrO) Posteriors:} An emerging direction in post-Bayesian methodology aims to learn a distribution $\mu$ over model parameters $\bm{x}$ by matching the induced predictive distribution $\int P_{\bm{x}} \dd\mu(\bm{x})$ to a dataset $\{y_i\}_{i=1}^n \subset \mathcal{Y}$, where, for each $\bm{x}$, $P_{\bm{x}}$ denotes a conditional probability distribution on the observation space $\mathcal{Y}$ (e.g., a likelihood or generative model) indexed by some parameters $\bm{x}$
\citep{masegosa2020learning,jankowiak2020parametric,sheth2020pseudo,morningstar2022pacm,lai2024predictive,shen2025prediction,liu2025detecting}. 
This can be achieved using a proper scoring rule $S : \mathcal{P}_2(\R^d) \times \mathcal{Y} \rightarrow \R$, for example the log scoring rule $S(\nu,y) = - \log \nu(y)$ or the kernel scoring rule $S(\nu,y) = 2 \int \varkappa(y, y') \dd\nu(y') - \iint \varkappa(y',y'') \dd\nu(y') \dd\nu(y'')$ \citep{gneiting2007strictly}. 
The \emph{PrO posterior} is obtained as the solution to an optimization problem with $F_0(\mu) = \sum_{i=1}^n S( \int P_{\bm{x}} \dd\mu(\bm{x}) , y_i)$. 
For the kernel scoring rule, and disregarding the term that is linear in $\mu$, this setting involves only the second term with $q_2(\bm{x}, \bm{x}^\circ) = \iint \varkappa(y, y') \dd P_{\bm{x}}(y) \dd P_{\bm{x}^\circ}(y')$. 
PrO posteriors are attracting interest because they are capable of adapting to model misspecification while the standard Bayesian posterior is not \citep{mclatchie2025predictively}.
However, their widespread adoption is currently limited by the lack of effective numerical methods compared to the standard Bayesian setting.

\vspace{1mm}
\textbf{Other Interacting Particle Systems}
The update scheme of MFLD in Eq.~\eqref{eq:practical_mfld} is the canonical interacting particle system for approximating $\pi$ \citep{del2013mean}, but other interacting particle systems have been proposed. 
For example, \emph{Stein variational gradient descent} \citep{wang2019nonlinear} and \emph{kernel gradient descent}~\citep{chazal2025computable} are deterministic alternatives to MFLD.
This paper focuses on MFLD, for which the sharpest results can be established, but our proposed thinning technique can be applied also to other interacting particle systems; we reserve a discussion for \Cref{sec:mckean}. 
There also exist alternative acceleration strategies for interacting particle systems beyond subsampling the particles~\cite{liu2019understanding}; we provide a detailed discussion of these related approaches in \Cref{sec:mckean}.

\vspace{1mm}
\textbf{Kernel Thinning} We have now concluded our presentation of MFLD and can discuss the main tool we will use to accelerate it.
\emph{Kernel thinning} aims to \emph{thin} (i.e., subsample) a sequence of points $\{\bm{x}^{(i)} \}_{i=1}^N$ with size $N$ to a much smaller subset $\{\bar{\bm{x}}^{(i)} \}_{i=1}^M$ of size $M=\lceil\sqrt{N}\rceil$ points without degrading its quality~\citep{dwivedi2024kernel,dwivedi2021generalized,shetty2021distribution,li2024debiased}. 
For a symmetric positive semi-definite kernel $k: \R^d \times \R^d \rightarrow \R$, its corresponding reproducing kernel Hilbert space (RKHS) $\calH_k$ is a Hilbert space with inner product $\langle\cdot, \cdot\rangle_{\calH_k}$ and norm $\|\cdot\|_{\calH_k}$~\citep{aronszajn1950theory}, such that (i) $k(\bm{x}, \cdot) \in \calH_k$ for all $\bm{x}\in \R^d$, and (ii) the reproducing property holds, i.e. for all $f \in \calH_k, \bm{x} \in \R^d, f(\bm{x})=\langle f, k(\bm{x}, \cdot)\rangle_{\calH_k}$. 
The quality of the thinned coreset is evaluated by the integration error 
\begin{align*}
    \calE(f):= \Big| \frac{1}{N}\sum_{i=1}^N f(\bm{x}^{(i)}) - \frac{1}{M}\sum_{i=1}^M f(\bar{\bm{x}}^{(i)}) \Big|
\end{align*} 
for functions $f \in \calH_k$. 
In this paper, among the family of kernel thinning algorithms, we are primarily interested in the \ktscd algorithm proposed in \cite{shetty2021distribution}, because of its near-linear $\calO(N\log N)$ computational cost and strong theoretical guarantees:

\begin{prop}[Adapted from Theorem 1 of \cite{shetty2021distribution}; see \Cref{sec:proof_thinned_property}] \label{prop:thinned_property}
    Suppose that $\kappa := \sup_{\bm{x}\in\R^d} k(\bm{x},\bm{x}) < \infty$. 
    Let $\mathfrak{g}\in\N_0$. 
    Let $\{\bm{x}^{(i)} \}_{i=1}^N \subset \R^d$ be a point set and  $\{\bar{\bm{x}}^{(i)} \}_{i=1}^M$ be the thinned output of \emph{\ktscd} with  $M=2^{\mathfrak{g}} \lceil\sqrt{N}\rceil$ and 
    $\delta\in(0, 1)$. 
    Then, for any fixed $f \in \calH_k$,
    \begin{align*}
    \calE(f) \leq \frac{2 \|f\|_{\calH_k} \sqrt{\kappa} }{M}   \sqrt{\log_2(\textfrac{N}{M})\log \left(\textfrac{6M}{\delta}\log_2(\textfrac{N}{M}) \right)\log(\textfrac{1}{\delta})},
    \end{align*}
    with probability at least $1-\delta$. 
\end{prop}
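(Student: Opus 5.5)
The plan is to specialize Theorem 1 of \cite{shetty2021distribution} to a single test function. That theorem bounds the integration error of \ktsc for a fixed $f\in\calH_k$. We then track its constants under the assumption $\sup_{\bm{x}} k(\bm{x},\bm{x})=\kappa<\infty$. The structure of \ktscd is as follows: \textsc{Compress} recursively splits the input into four blocks and \emph{halves} each recursion's output with \kts (the symmetric-splitting stage of kernel thinning). It stops after $\beta:=\log_2(N/M)$ levels of halving, which is why the output size is $M=2^{\mathfrak{g}}\lceil\sqrt N\rceil$ with oversampling parameter $\mathfrak{g}$. The factor $\log_2(N/M)$ in the bound counts these levels.

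\textbf{Step 1 (one halving round).} For a single \kts halving of $n$ points, the self-balancing walk produces signed sums $\psi=\sum_i \epsilon_i k(\bm{x}^{(i)},\cdot)$. The thresholds are chosen so that, with probability at least $1-\delta'$, no step fails. On that event, $\langle f,\psi\rangle_{\calH_k}$ is sub-Gaussian with variance proxy of order
\[
\|f\|_{\calH_k}^2\,\kappa\,\log(n/\delta').
\]
Note that we test against a fixed $f$ rather than taking a supremum over the unit ball. Consequently no covering-number or dimension term appears; only $\sqrt{\kappa}$ enters, through $\|k(\bm{x},\cdot)\|_{\calH_k}\le\sqrt\kappa$.

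\textbf{Step 2 (aggregate over the Compress tree).} Write the total error $\calE(f)$ as a weighted sum of the halving errors from every node of the recursion. At depth $\ell$, the weights scale like $2^{\ell}/N$ times the node size. Two things follow.
\begin{itemize}
\item Because the walks at different nodes use independent randomness (given the tree), the errors form a martingale-difference sum, so the sub-Gaussian proxies add.
\item Summing the squared weights over the nodes at one depth gives a contribution of order $1/M^2$ for that depth. Summing over the $\beta=\log_2(N/M)$ depths then gives a total variance proxy of order
\[
\frac{\|f\|_{\calH_k}^2\,\kappa}{M^2}\,\log_2(N/M)\,\log\!\Big(\frac{6M}{\delta}\log_2(N/M)\Big).
\]
\end{itemize}
The argument of the inner logarithm comes from the failure-probability budget. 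We allocate $\delta'$ to each \kts call, and union-bounding over roughly $6M\log_2(N/M)/\delta$-many thresholds and calls (as in Shetty et al.) produces exactly that expression.

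\textbf{Step 3 (tail bound and union).} On the good event, apply the sub-Gaussian tail at level $\delta/2$. This contributes the factor $\sqrt{2\log(2/\delta)}$, which we absorb into $2\sqrt{\log(1/\delta)}$ by adjusting the split of $\delta$ between the tail and the good event; this adjustment fixes the constant $2$. A union bound with the complementary good event then gives probability at least $1-\delta$.

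The main obstacle is the constant bookkeeping. Shetty et al.\ state their bound in terms of $\|k\|_\infty$ and a generic \textsc{Halve} sub-Gaussian parameter. We must verify that instantiating \textsc{Halve} $=$ \kts with failure parameter $\delta$ yields precisely the displayed logarithmic arguments. If exact matching fails, we would state the result with an absolute constant and note that it is adapted from their theorem.
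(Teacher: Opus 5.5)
Your proposal follows essentially the same route as the paper. The paper likewise takes the generic \textsc{Compress} sub-Gaussian guarantee of \cite{shetty2021distribution} with \textsc{Halve} $=$ \textsc{kt-split} (their Theorem~3 and Example~3), which gives parameter $\nu_0\le\frac{2}{M\sqrt3}\sqrt{\kappa\log_2(N/M)\log\bigl(\frac{6M}{\delta}\log_2(N/M)\bigr)}$ outside a threshold-failure event of probability $\delta/2$, then applies the sub-Gaussian tail at level $\delta/2$ and a union bound; your Steps~1--2 just unpack what the paper cites.

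The one claim to drop is in Step~3, a leap the paper's proof also makes silently. The factor these results deliver is $\frac{2}{\sqrt3}\sqrt{2\log(c/\delta)}$ with $c\in\{2,4\}$, depending on how the two-sided tail is taken. This is at most $2\sqrt{\log(1/\delta)}$ only for $\delta$ bounded away from $1$; $\delta\le 1/16$ suffices. No re-splitting of $\delta$ can repair this, because the stated right-hand side tends to $0$ as $\delta\to1$. For example, take $N=4$, $M=2$ and $f\in\mathcal{H}_k$ equal to $0,0,0,1$ on the four points: every $2$-point coreset mean is $0$ or $1/2$, so the error is deterministically $1/4$. The bound should therefore be stated for small $\delta$, or with $\log(4/\delta)$ in place of $\log(1/\delta)$.
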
 
Here, $\mathfrak{g}\in\N_0$ is an \emph{oversampling} parameter that determines the coreset size $M=2^{\mathfrak{g}}\lceil\sqrt{N}\rceil$: larger values of $\mathfrak{g}$ lead to smaller integration error but results in higher computational cost in downstream tasks. 
In a special case of $\mathfrak{g}=0$ such that $M=\lceil \sqrt{N}\rceil$, \Cref{prop:thinned_property} implies that, for each $f\in\calH_k$, the coreset achieves $\tilde{\calO}(N^{-\frac{1}{2}})$ integration error.
The additional logarithmic factor can be avoided by an appropriate choice of the oversampling parameter $\mathfrak{g}=\lceil \log_2 \log N\rceil$, which yields a coreset of size $M=2^\mathfrak{g}\lceil\sqrt{N}\rceil$ and achieves $\calO(N^{-\frac{1}{2}})$ integration error. 
Both are a substantial improvement over the $\Omega(N^{-\frac{1}{4}})$ integration error if the points $\{\bar{\bm{x}}^{(i)} \}_{i=1}^M$ are instead uniformly sampled from the original point set $\{\bm{x}^{(i)} \}_{i=1}^N$. 
The proof of \Cref{prop:thinned_property} can be found in \Cref{sec:proof_thinned_property}. 

Alternative thinning methods can also be considered, such as \ktcd~\citep{shetty2021distribution} which constructs a coreset of size
$M = 2^{\mathfrak{g}}\lceil \sqrt{N} \rceil$ in near-linear time and provably controls integration error under conditions on $\max_{i} \| \bm{x}^{(i)}\|$. 
At present it is unclear whether a uniform-in-time bound on
$\max_i \|\bm{x}_t^{(i)}\|$ can be derived for MFLD, but nevertheless we also include \ktc in our empirical assessment in \Cref{sec:experiments}. 
Other thinning (also known as \emph{compression}) algorithms that provably achieve smaller integration error than uniform subsampling, include kernel thinning and \kts (without Compress)~\citep{dwivedi2024kernel,dwivedi2021generalized}, Gram-Schmidt thinning~\citep{carrell2025low}, and stationary point methods~\citep{chen2025stationary}. 
Other works establish superior rates for idealized point sets but do not analyze algorithms that are guaranteed to construct such point sets~\citep{borodachov2014low,mak2018support,joseph2019deterministic,xu2022accurate}. 
Some methods, like kernel herding~\citep{chen2010super} and sequential greedy algorithms~\citep{paige2016super,teymur2021optimal}, demonstrate strong empirical performances without theoretical guarantees of outperforming uniform subsampling. 
In this paper, we focus on \ktsc and \ktc due to their favorable \emph{near-linear} computational cost. 
In contrast, all of the other aforementioned methods incur at least quadratic $\Omega(N^2)$ computational cost.

\section{Thinned Mean Field Langevin Dynamics}\label{sec:mfld}
Despite the broad interest in MFLD, its $\Omega(N^2)$  computational cost limits its widespread use on large-scale problems. 
We now propose a novel algorithm, called \emph{kernel-thinned mean field Langevin dynamics} (\texttt{KT-MFLD}), which reduces this cost to $\calO(N^{\frac{3}{2}})$ whilst maintaining the same convergence guarantees as the original MFLD up to logarithmic factors in $N$; see \Cref{thm:thinned_mfld}.

\texttt{KT-MFLD} closely resembles MFLD but, at each iteration, we first thin the set of $N$ particles into a smaller coreset of size $M=\lceil\sqrt{N}\rceil$ using  \ktscd with $\delta=\frac{(\log_2 N)^3}{N}$ and oversampling parameter $\mathfrak{g}=0$, then replace the empirical average in the vector field of Eq.~\eqref{eq:practical_mfld}, originally taken over all $N$ particles, with an average over the thinned subset of particles. 
More precisely, given an initial collection of particles $\scrx_0$ and a time horizon $T\in\N$, for each time
$t\in\{0, \ldots, T\}$ and for each particle $i \in \{1, \ldots, N\}$, the update rule is given by: 
\vspace{-5pt}
\begin{align}\label{eq:thinned_practical_mfld}
    &\bm{x}_{t+1}^{(i)} \\ \nonumber
    &=\bm{x}_t^{(i)} - \gamma \big[\nabla F_0'(\mu_{\bar{\scrx}_t})(\bm{x}_t^{(i)}) + \zeta \bm{x}_t^{(i)}\big] + \sqrt{2 \sigma \gamma} \; \xi_t^{(i)}
\end{align}
where $\{\xi_t^{(i)}\}_{i=1}^N \sim \calN(0, \Id)$
and $\bar{\scrx}_t = \{\bar{\bm{x}}_t^{(i)}\}_{i=1}^M$ is obtained from thinning $\scrx_t$ using \ktsc as just described.
Compared with the original MFLD in Eq.~\eqref{eq:practical_mfld}, the update scheme of \texttt{KT-MFLD} is governed by the new vector field $\nabla F_0'(\mu_{\bar{\scrx}_t})$ evaluated at the thinned empirical distribution $\mu_{\bar{\scrx}_t}$, rather than the empirical distribution made of the full set of particles. 
Specifically, 
\vspace{-6pt}
\begin{align}\label{eq:thinned_vector_field}
  & \hspace{-5pt} \nabla F_0'(\mu_{\bar{\scrx}_t})(\bm{x}_t^{(i)}) \nonumber \\
    &= \E_{u\sim \rho} \left[ R_1^\prime \left( \frac{1}{M}\sum_{j=1}^M q_1(u, \bar{\bm{x}}_t^{(j)}) \right) \nabla_2 q_1(u, \bm{x}_t^{(i)}) \right] \nonumber \\
    &\qquad + \frac{1}{M} \sum_{j=1}^M \nabla_{1} q_2(\bm{x}_t^{(i)}, \bar{\bm{x}}_t^{(j)}), 
\end{align}
now involves an empirical average over the thinned subset of size $M$, 
 thereby reducing the per-iteration computational cost to $\calO(N^{\frac{3}{2}})$. 
Since \ktsc incurs a cost of $\tilde{\calO}(N)$ and is executed only once per iteration, the total computational cost per iteration is now reduced to $\calO(N^{\frac{3}{2}}) + \tilde{\calO}(N) = \calO(N^{\frac{3}{2}})$.

Next, we  rigorously show that \texttt{KT-MFLD} in Eq.~\eqref{eq:thinned_practical_mfld} retains nearly the same convergence guarantee as the original MFLD in Eq.~\eqref{eq:practical_mfld} under the following regularity assumptions:

\begin{assumption}[Boundedness, Lipschitzness and Convexity] \label{asst:lipschitz}
There exists positive constants $Q_1, Q_2$, such that  $\sup_{u, \bm{x}}|q_1(u, \bm{x})| \leq Q_1$, $\sup_{u, \bm{x}} \| \nabla_2 q_1(u, \bm{x})\| \leq Q_1$, $\sup_{u, \bm{x}} \| \bH_2 q_1(u, \bm{x})\|_{\op} \leq Q_1$; and such that 
$\sup_{\bm{x}, \bm{x}^\circ}|q_2(\bm{x}, \bm{x}^\circ)| \leq Q_2$, 
$\sup_{\bm{x},\bm{x}^\circ}\|\nabla_1 q_2(\bm{x},\bm{x}^\circ)\| \leq Q_2$,  $\sup_{\bm{x},\bm{x}^\circ}\|\bH_1 q_2(\bm{x},\bm{x}^\circ)\|_{\op} \leq Q_2$. 
$R_1:\R\to\R$ is convex and its derivative is $L_R$-Lipschitz, i.e., $|R_1'(y) - R_1'(y^\circ)|\leq L_R |y -y^\circ|$ for any $y,y^\circ\in\R$. 
\end{assumption}

\begin{assumption}\label{asst:kt_rkhs}
The reproducing kernel $k:\R^d \times \R^d \to\R$  is bounded; i.e. $\kappa := \sup_{\bm{x}\in\R^d} k(\bm{x},\bm{x}) < \infty$.
    Further, $q_1(u,\cdot) \in \calH_k$ for each $u$ and $\nabla_1 q_2(\bm{x},\cdot) \in \calH_k^{\otimes d}$ for each $\bm{x}$, and $\mathscr{Q}_1 := \sup_u \|q_1(u,\cdot)\|_{\calH_k} < \infty$, $\mathscr{Q}_2 := \sup_{\bm{x}} \|\nabla_1 q_2(\bm{x}, \cdot)\|_{\calH_k^{\otimes d}} < \infty$. 
\end{assumption}

\Cref{asst:lipschitz} provides sufficient conditions that imply the following key properties of $F_0$ (as established in \Cref{sec:proof_thin_mfld}):
(i) the vector field $\nabla F_0'$ is uniformly bounded and Lipschitz continuous (as established in \Cref{lem:bounded_lip});
(ii) the functional $F_0: \calP_2(\R^d) \to \R$ is convex (as established in \Cref{lem:convex}); and
(iii) $F_0$ satisfies a leave-one-out stability property (as established in \Cref{lem:leave_one_out}).
These three key properties align with the sufficient conditions used in state-of-the-art analyses of MFLD~\citep[Assumptions~1, 2, and~4]{nitanda2025propagation}. \Cref{asst:kt_rkhs}, however, is a new assumption imposed in this paper such that $q_1$ and $\nabla q_2$ belong to appropriate RKHSs which ensure thinning properties of \ktsc in \Cref{prop:thinned_property} hold.

\begin{thm}[Convergence of \texttt{KT-MFLD}] \label{thm:thinned_mfld}
Suppose \Cref{asst:lipschitz,asst:kt_rkhs}  hold.  
Let $\mu_t^{(N)}\in\calP_2((\R^d)^N)$ be the joint distribution of $N$ particles from Eq.~\eqref{eq:thinned_practical_mfld} at iteration $t$.\footnote{Note that $\mu_{\scrx_t}$ denotes the empirical distribution of $N$ particles at time $t$, while $\mu_t^{(N)}$ denotes the joint distribution on $(\mathbb{R}^d)^N$.} 
Suppose the step size $\gamma\leq \frac{1}{8} \wedge \frac{1}{2\zeta}$. 
Suppose the initial $N$ particles are sampled i.i.d from $\mu_0\in\calP_2(\R^d)$ with $\mathrm{Ent}(\mu_0) < \infty$. 
Then there exist constants $C_{\mu_0}$, $C_\gamma$, $B$, $c_0$ and $C_3$ such that, for each $T \in \mathbb{N}$, 
\begin{align}\label{eq:upper_bound}
    & \hspace{-5pt} \frac{\sigma}{N} \kl\left(\mu_{T}^{(N)} \| \pi^{\otimes N}\right) \nonumber \\
    &\leq \exp (-T \bar{\alpha} \sigma \gamma) C_{\mu_0} + \frac{B}{N} + \frac{C_\gamma}{2 \bar{\alpha}} + \frac{c_0 (\log N)^3}{N \bar{\alpha}} ,
\end{align}
where $\bar{\alpha} = \frac{\zeta}{2\sigma} \exp(-\frac{4 C_3}{\zeta \sigma} \sqrt{\frac{2d}{\pi}}))$.
The constant $C_{\mu_0}$ depends only on the initial distribution $\mu_0$ and $C_\gamma = \mathcal{O}(\gamma \sigma d + \gamma^2)$ vanishes as the time step $\gamma$ decreases. 
The constants $B$, $c_0$, and $C_3$ depend only on the constants in \Cref{asst:lipschitz,asst:kt_rkhs}. 
\end{thm}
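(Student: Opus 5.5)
We follow the one-step entropy-dissipation analysis of discrete-time \texttt{MFLD} in \citet{nitanda2025propagation}, treating the thinning error as an additional perturbation of the drift. The first step is to verify that \Cref{asst:lipschitz} yields the three structural properties that analysis needs: boundedness and Lipschitzness of $\nabla F_0'$ (\Cref{lem:bounded_lip}), convexity of $F_0$ (\Cref{lem:convex}), and leave-one-out stability (\Cref{lem:leave_one_out}). With these in hand, we work with the $N$-particle free energy $\scrF^{(N)}(\mu^{(N)}) = N\,\E_{\mu^{(N)}}[F(\mu_{\scrx})] - \sigma\,\mathrm{Ent}(\mu^{(N)})$. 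Its gap to $N\scrF(\pi)$ controls $\sigma\kl(\mu^{(N)}\|\pi^{\otimes N})$ up to a $B$-type term. This is the standard entropy sandwich, and it is where $B/N$ comes from.

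Next we compare one step of \texttt{KT-MFLD} with one step of exact \texttt{MFLD}. We interpolate the Gaussian step as a continuous-time process on $[t\gamma,(t+1)\gamma]$ with frozen drift $\nabla F_0'[\mu_{\bar\scrx_t}](\bm{x}_t^{(i)}) + \zeta\bm{x}_t^{(i)}$. Differentiating $\scrF^{(N)}$ along this interpolation gives
\[
-\tfrac{d}{ds}\scrF^{(N)} \ge \tfrac{1}{2}\,\E\sum_i\bigl\|\nabla_i \tfrac{\delta \scrF^{(N)}}{\delta\mu}\bigr\|^2 - (\text{discretization error}) - (\text{thinning error}).
\]
The thinning error is bounded by $\E\sum_i\|\nabla F_0'[\mu_{\bar\scrx_t}](\bm{x}_t^{(i)}) - \nabla F_0'[\mu_{\scrx_t}](\bm{x}_t^{(i)})\|^2$, via Young's inequality. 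The discretization error involves $\|\nabla F'[\mu_{\scrx_s}](\bm x_s) - \nabla F'[\mu_{\scrx_t}](\bm x_t)\|$ and gives $C_\gamma = \calO(\gamma\sigma d+\gamma^2)$, using Lipschitzness, the bound $\gamma\le \frac18\wedge\frac1{2\zeta}$, and uniform second-moment bounds.

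The thinning error is where \Cref{asst:kt_rkhs} and \Cref{prop:thinned_property} enter, and we expect it to be the main obstacle. By \Cref{eq:original_vector_field} and \Cref{eq:thinned_vector_field}, the drift difference splits into two parts. The first is $L_R Q_1\,\E_u|\calE(q_1(u,\cdot))|$, which uses Lipschitzness of $R_1'$. The second is $\|\frac1N\sum_j\nabla_1q_2(\bm x,\bm x^{(j)})-\frac1M\sum_j\nabla_1q_2(\bm x,\bar{\bm x}^{(j)})\|$.

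\Cref{prop:thinned_property} applies only to a fixed function, whereas here $u$ and the evaluation point $\bm x=\bm x_t^{(i)}$ vary. We handle this with the RKHS structure. Let $\Delta := \frac1N\sum_j k(\bm x^{(j)},\cdot) - \frac1M\sum_j k(\bar{\bm x}^{(j)},\cdot)$ denote the mean-embedding difference. Then $\calE(f) = |\langle f,\Delta\rangle| \le \|f\|_{\calH_k}\|\Delta\|_{\calH_k}$, so it suffices to bound $\|\Delta\|_{\calH_k}$, i.e.\ the MMD, with high probability uniformly in $f$. The \ktsc guarantee in \citet{shetty2021distribution} in fact controls the MMD, at the cost of extra log factors at worst. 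Alternatively, the thinning is conditionally independent of the particle $\bm x_t^{(i)}$ it is tested against only in a weak sense, so we apply the fixed-function bound conditionally on $\scrx_t$ together with a union bound over the $N$ query points. The $u$-average is then handled through Jensen's inequality and the MMD bound.

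Either route gives a bound of order $\mathscr Q^2\kappa\,\log^3 N/M^2$ on the squared error, with probability $1-\delta$. On the failure event we use the crude bound $4(L_RQ_1^2+Q_2)^2$, weighted by $\delta=(\log_2N)^3/N$. Since $M^2\asymp N$, the per-particle squared error is $\calO((\log N)^3/N)$, and summed over particles it becomes an $N\cdot c_0(\log N)^3/N$ contribution.

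Finally, we convert the dissipation term into a contraction. Via the proximal Gibbs measure, the uniform log-Sobolev inequality for $\pi^{\otimes N}$-type targets holds with constant $\bar\alpha$, by Holley--Stroock perturbation of the Gaussian $\calN(0,\sigma/\zeta)$ with bounded potential $C_3$. This gives $\scrF^{(N)}_{t+1}-N\scrF(\pi) \le (1-\bar\alpha\sigma\gamma)(\scrF^{(N)}_t-N\scrF(\pi)) + N\gamma(C_\gamma/2 + c_0(\log N)^3/N) + \gamma B'$. Unrolling the recursion and dividing by $N$ yields \eqref{eq:upper_bound}, with the constants absorbed.
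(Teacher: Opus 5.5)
Your overall architecture (frozen-drift one-step interpolation, Young's inequality to isolate $\sum_i\E\|\omega(\bm{x}_t^{(i)})\|^2$, the Fisher-information lower bound of \citet{nitanda2025propagation}, Gronwall, and the entropy sandwich) matches the paper's. However, the thinning-error step, which you rightly flag as the crux, rests on a claim that is not available under the theorem's hypotheses.

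\Cref{prop:thinned_property} is a fixed-function (sub-Gaussian) guarantee. The MMD guarantees for kernel thinning \citep{dwivedi2024kernel,shetty2021distribution} go through covering numbers of the RKHS unit ball on a set containing the inputs, typically a ball of radius $\max_i\|\bm{x}^{(i)}\|$. They therefore need kernel regularity beyond $\kappa<\infty$, and a uniform-in-time bound on $\max_i\|\bm{x}_t^{(i)}\|$, which the paper explicitly says it cannot supply for MFLD particles.

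Under \Cref{asst:kt_rkhs} alone the MMD route genuinely fails. Replace $k$ by $k+\indic{\bm{x}=\bm{x}'}$. This keeps the assumption valid, since $\kappa$ grows by $1$ and RKHS norms can only decrease. Yet for $N$ distinct particles, every $M$-point coreset then has squared MMD at least $\frac1M-\frac1N$. So $\|f\|_{\calH_k}\|\Delta\|_{\calH_k}$ only yields a squared drift error of order $N^{-1/2}$, which is the \texttt{Random-MFLD} rate.

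Your fallback route does handle the query points correctly. Your ``weak independence'' worry is unfounded: conditional on $\scrx_t$, the test function $\nabla_1q_2(\bm{x}_t^{(i)},\cdot)$ is deterministic, and the only randomness is the algorithm's internal coins. But for the $u$-average you again invoke the MMD bound. Your template of a single good event of probability $1-\delta$ plus a crude bound on its complement would need control uniform over a continuum of $u$, and no union bound provides that.

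The missing idea is that no uniformity is needed, because only the second moment $\E\|\omega(\bm{x}_t^{(i)})\|^2$ enters the dissipation inequality. The paper works conditionally on $\scrx_t$. On the \kts thresholding event $\eventd$ (probability at least $1-\delta$, the same event for every $f$), each coreset error is $\nu_0\|f\|_{\calH_k}$-sub-Gaussian. Hence $\E[\calE(f)^2\indic{\eventd}]\le\nu_0^2\|f\|_{\calH_k}^2$ for each fixed $f$. This is applied in two places:
\begin{itemize}
\item to $f=q_1(u,\cdot)$ for each fixed $u$, after Jensen moves the square inside $\E_u$ and Fubini swaps $\E_u$ with the thinning expectation;
\item to $f=\partial_{1,m}q_2(\bm{x}_t^{(i)},\cdot)$ coordinatewise and particle by particle, using linearity over $i$.
\end{itemize}
The complement of $\eventd$ costs $\calO(\delta)$ through the uniform bound on $\|\omega\|$, and choosing $\delta=(\log_2N)^3/N$ gives $\E\|\omega\|^2\le c_0(\log N)^3/N$. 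This also works with $f$-dependent failure events: for each fixed $u$, turn the high-probability bound into a second-moment bound using $|\calE(q_1(u,\cdot))|\le2Q_1$ before integrating over $u$.

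Two smaller points.
\begin{itemize}
\item $B/N$ enters through the leave-one-out term in the Fisher-information bound (Lemma~2 of \citet{nitanda2025propagation}). It does not come from the entropy sandwich, which reads $\sigma\kl(\mu^{(N)}\|\pi^{\otimes N})\le\scrF^{(N)}(\mu^{(N)})-N\scrF(\pi)$ with no $B$.
\item Holley--Stroock with a bounded potential gives an LSI constant of a different form from the stated $\bar\alpha$. The $\sqrt{2d/\pi}$ factor in $\bar\alpha$ comes from Miclo's Lipschitz-perturbation trick \citep{bardet2018functional}.
\end{itemize}
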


\begin{figure*}[t]
    \centering 
    \vspace{-5pt}
    \includegraphics[width=0.8\linewidth]{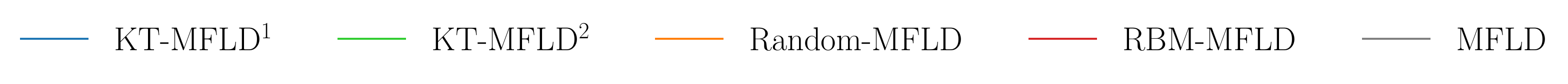}
    \vspace{-10pt}
    \centering
    \includegraphics[width=\linewidth]{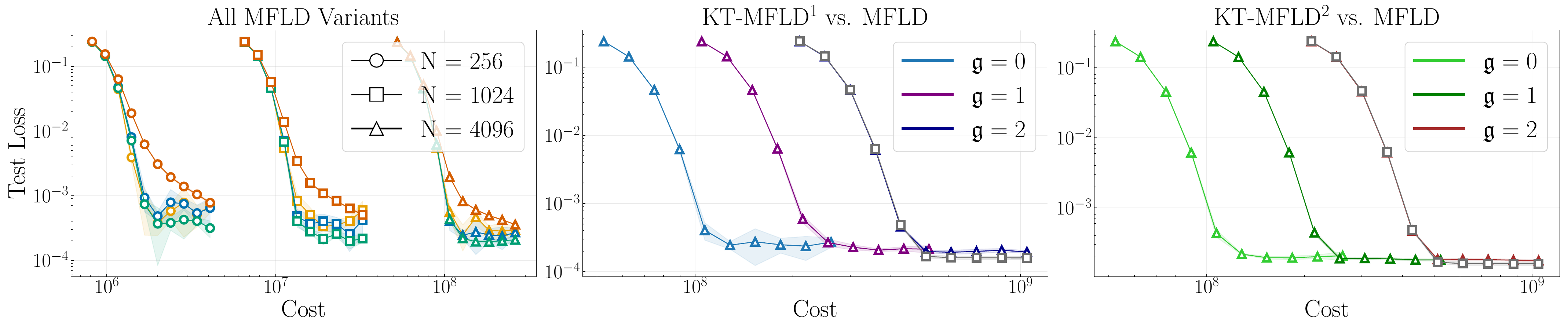}
    \vspace{-5pt}
    \caption{\textit{Student-Teacher network.} 
    \textbf{Left:} Comparison of \texttt{KT-MFLD}$^1$ and \texttt{KT-MFLD}$^2$ with all subsampling based acceleration baselines. \textbf{Middle:} Comparison of \texttt{KT-MFLD}$^1$ with original \texttt{MFLD} under varying $\mathfrak{g}$. \textbf{Right:} Comparison of \texttt{KT-MFLD}$^2$ with original \texttt{MFLD} under varying $\mathfrak{g}$. 
    Results are averaged over 20 independent runs. Shaded areas represent $\pm2\times$ standard error.  }
    \label{fig:student_teacher}
    \vspace{-10pt}
\end{figure*}

The proof of \Cref{thm:thinned_mfld} is  in \Cref{app:proof_thinned_mfld}, and is adapted from the state-of-the-art convergence result of standard MFLD by \cite{nitanda2025propagation}.
\Cref{thm:thinned_mfld} shows that, for sufficiently large $N$ and $T$ and a sufficiently small step size $\gamma$, the particles $\{\bm{x}_T^{(i)}\}_{i=1}^N$ generated by \texttt{KT-MFLD} behave asymptotically as $N$ i.i.d. samples from the target $\pi$. 

The main difference in our setting, from standard MFLD, is the presence of an additional error introduced by taking the empirical average over the \emph{thinned coreset} when computing the vector field at each iteration. 
Compared with point 2 of Theorem 1 in \cite{nitanda2025propagation}, 
\begin{align*}
    \frac{\sigma}{N} \kl\left(\mu_{T}^{(N)} \| \pi^{\otimes N}\right) \leq \exp (-T \bar{\alpha} \sigma \gamma) C_{\mu_0} + \frac{B}{N} + \frac{C_\gamma}{2 \bar{\alpha}} . 
\end{align*}
our convergence bound includes an extra $\calO(N^{-1}(\log N)^3)$ term accounting for this thinning-induced approximation error. 
By contrast, if one were to randomly select $M = \lceil \sqrt{N} \rceil$ samples from the $N$ particles at each iteration, as in \cite{suzuki2023convergence}, the resulting upper bound would entail an extra term that scales as $\Omega(N^{-\frac{1}{2}})$ (cf. \Cref{rem:random_vs_kt}).

Therefore, for our \texttt{KT-MFLD} to achieve an error
$\frac{1}{N} \kl(\mu_{T}^{(N)} \| \pi^{\otimes N}) \le \varepsilon$, it suffices to choose the step size
$\gamma=\calO(\varepsilon)$, the number of iterations
$T=\Omega(\log(\varepsilon^{-1})\varepsilon^{-1})$, and particle number $N=\tilde{\Omega}(\varepsilon^{-1})$. This yields a per-iteration computational complexity of
$\tilde{\Theta}(\varepsilon^{-3/2})$. 
For the same choice of step size $\gamma$ and number of iterations $T$, standard \texttt{MFLD} requires
$N=\Omega(\varepsilon^{-1})$ particles, while \texttt{Random-MFLD} requires
$N=\Omega(\varepsilon^{-2})$ particles. Their resulting per-iteration complexities are therefore
$\Omega(\varepsilon^{-2})$ and $\Omega(\varepsilon^{-3})$, respectively. 
We note, however, that the thinning-induced error term
$c_0(\log N)^3/(N\bar{\alpha})$ suffers from a curse of dimensionality, since $\bar{\alpha}$ can be exponentially small in $d$~\citep{chewi2024uniform}. 
Consequently, the particle complexity of \texttt{KT-MFLD} hides an adverse dependence on the dimension. 
By contrast, in the original \texttt{MFLD} bound, $\bar{\alpha}$ enters only through the choice of step size and the resulting iteration complexity, rather than through the required number of particles~\citep{nitanda2024improved,nitanda2025propagation}.

One closely related sub-sampling based acceleration technique is the random batch method (\texttt{RBM-MFLD})~\citep{jin2020random}, which is widely used for accelerating interacting particle systems in computational physics. 
In \texttt{RBM-MFLD}, the full particle system at each iteration $\{\bm{x}_t^{(i)}\}_{i=1}^N$ is randomly partitioned into batches of size $p$. The update scheme in Eq.~\eqref{eq:practical_mfld} is then applied independently within each batch, so that the drift term in Eq.~\eqref{eq:original_vector_field} only involves interactions among particles within the same batch. 
As a result, the per-iteration computational cost is reduced to $\mathcal{O}((N/p) \times p^2)=\calO(Np)$. 
Although Corollary 3.1 of \cite{jin2020random} provides a convergence guarantee, it relies on a strong contraction assumption that is not satisfied in our MFLD setting. Consequently, their associated theoretical batch-size prescriptions are not applicable here. 
In our experiments, we therefore choose the batch size of \texttt{RBM-MFLD} as $p=\lceil \sqrt{N}\rceil$ purely for the purpose of fair computational comparison. 
With this choice, the per-iteration cost of \texttt{RBM-MFLD} scales as $\calO(N^{\frac{3}{2}})$ and matches that of \texttt{Random-MFLD} and \texttt{KT-MFLD}. 
RBM-MFLD coincides with the ensembling method proposed in \citet{nitanda2025propagation} in the context of mean-field neural networks.

\begin{figure*}[t]
    \centering 
    \vspace{-5pt}
    \includegraphics[width=0.8\linewidth]{figures/legend_only.pdf}
    \vspace{-10pt}
    \centering    \includegraphics[width=\linewidth]{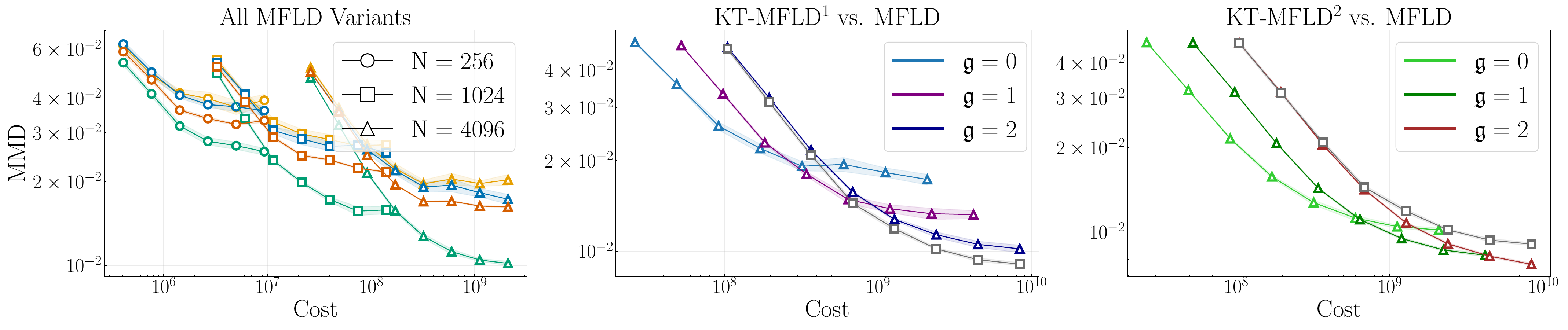}
    \vspace{-5pt}
    \caption{\textit{MMD quantization.} 
    \textbf{Left:} Comparison of \texttt{KT-MFLD}$^1$ and \texttt{KT-MFLD}$^2$ with all subsampling based acceleration methods. \textbf{Middle:} Comparison of \texttt{KT-MFLD}$^1$ with original \texttt{MFLD} under varying $\mathfrak{g}$. \textbf{Right:} Comparison of \texttt{KT-MFLD}$^2$ with original \texttt{MFLD} under varying $\mathfrak{g}$. 
    Results are averaged over 20 independent runs. Shaded areas represent $\pm2$ standard errors.}
    \label{fig:mmd}
    \vspace{-10pt}
\end{figure*}

Next, we show that \Cref{asst:lipschitz,asst:kt_rkhs} can indeed be satisfied under mild conditions in all the practical applications considered in \Cref{sec:background}. 
\begin{rem}[Sufficient conditions for \Cref{asst:lipschitz,asst:kt_rkhs}]\label{rem:suff_cond}
    \text{  } \\
    1. \textit{Mean-Field Two-Layer Neural Networks:} 
    Recall that $\Psi(z, \bm{x})$ represents a two-layer neural network with input $z$ and parameters $\bm{x}$. 
    \Cref{asst:lipschitz} holds when $\sup_z\sup_{\bm{x}}|\Psi(z, \bm{x})|$, $ \sup_z\sup_{\bm{x}}\|\nabla_2 \Psi(z, \bm{x})\|$, $ \sup_z\sup_{\bm{x}}\|\bH_2 \Psi(z, \bm{x})\|_{\op}$ are bounded and when the loss function $\ell: \R\times\R\to\R$ is convex and Lipschitz with respect to its first argument~\citep{suzuki2023convergence}.
    These conditions on $\Psi$ are satisfied by smoothly-clipped two-layer neural networks with ReLU, tanh, or sigmoid activation functions.
    The conditions on $\ell$ are satisfied by both the squared loss and cross entropy loss. 
    Let $\{(z_s,y_s)\}_{s=1}^n$ be all the  training data at each iteration. 
    \Cref{asst:kt_rkhs} holds with a bounded kernel $k(\bm{x},\bm{x}^\circ) = \sum_{s=1}^n \Psi(z_s, \bm{x})\Psi(z_s, \bm{x}^\circ)$\footnote{While the kernel is bounded for a fixed choice of $n$, the bound would increase with a larger $n$ which would deteriorate the quality of the samples from \ktsc.} and its associated RKHS $\calH_k$ , in which $\|\Psi(z_s, \cdot)\|_{\calH_k} \leq 1$ for any $s\in\{1, \ldots, n\}$ (see \Cref{lem:rhks_norm_1}). 
    \\
    2. \textit{Quantization with MMD: } 
    Recall that $q_2 $ is the kernel $\varkappa: \R^d\times\R^d\to\R$ used in MMD. \Cref{asst:lipschitz} holds when $\varkappa$ is bounded (i.e. $\sup_{\bm{x},\bm{x}'} |\varkappa(\bm{x},\bm{x}')| < \infty $) and its first and second order derivatives are bounded (i.e. $\sup_{\bm{x},\bm{x}'} \| \nabla_1 \varkappa (\bm{x},\bm{x}')\| < \infty$, $\sup_{\bm{x},\bm{x}'} \|\bH_1 \varkappa (\bm{x},\bm{x}')\|_{\op} < \infty$). 
    \Cref{asst:kt_rkhs} holds trivially with kernel $k=\varkappa$. These conditions include popular kernels like the Gaussian and Mat\'{e}rn of order at least $\frac{5}{2}$. \\
    3. \textit{PrO Posteriors: } Recall that $q_2(\bm{x},\bm{x}^\circ) = \iint \varkappa(y, y') p_{\bm{x}}(y) p_{\bm{x}^\circ}(y') \dd y \dd y'$ where $p_{\bm{x}}$ is the density for the statistical model $P_{\bm{x}}$. \Cref{asst:lipschitz} holds when the kernel $\varkappa$ satisfies $\sup_{y,y'} |\varkappa(y,y')| < \infty$, and when $\sup_{\bm{x}}\int \|\nabla_{\bm{x}} p_{\bm{x}}(y) \| \dd y < \infty$ and $\sup_{\bm{x}} \int \| \bH_{\bm{x}} p_{\bm{x}}(y) \|_{\op}\dd y < \infty$. 
    \Cref{asst:kt_rkhs} holds when $\sup_{y}\|\bm{x}\mapsto p_{\bm{x}}(y)\|_{\calH_k} < \infty$ where $\calH_k$ is an RKHS associated with a bounded kernel $k$ (cf. \Cref{lem:post_bayes_lemma}). These conditions are satisfied, for example, when the kernel $k(x,x') = \exp(-\|m(x)-m(x')\|^2/(2\varsigma^2))$ and $P_{\bm{x}}(y) = \calN(y \mid m(\bm{x}), \varsigma^2)$ with variance $\varsigma>0$ and mean function $m:\R^d\to\R$ satisfies $\sup_{\bm{x}}\|\nabla m(\bm{x})\|<\infty$ and $\sup _{\bm{x}}\|\bH m(\bm{x})\|_{\op}<\infty$~\citep[cf. Theorem 2 in][]{liu2025detecting}. 
\end{rem}

\section{Experiments}\label{sec:experiments}
In this section, we provide empirical evidence supporting the claimed advantages of
\texttt{KT-MFLD} in terms of the trade-off between computational cost and accuracy. 
In particular, we demonstrate that, when compared under the \emph{same computational cost}, \texttt{KT-MFLD} consistently outperforms both the original \texttt{MFLD} and other subsampling-based methods, in agreement with the convergence guarantees established in \Cref{thm:thinned_mfld} and associated computational complexities. 
In our experiments, we implement two variants of \texttt{KT-MFLD}, differing only in the thinning procedure applied at each iteration: one using \ktsc and the other using \ktc. We refer to these two variants as \texttt{KT-MFLD}$^{1}$ and \texttt{KT-MFLD}$^{2}$, respectively. 
The baselines we consider are standard mean-field Langevin dynamics with no thinning (\texttt{MFLD}), MFLD with random thinning (\texttt{Random-MFLD}), and the random batch method (\texttt{RBM-MFLD}). 
The first three experiments cover all the three illustrative applications of MFLD considered in \Cref{sec:mfld} where all assumptions can hold (\Cref{rem:suff_cond}); while the last mean-field game setting represents a closely-related application of particle simulations beyond the scope of MFLD. 
More experimental details can be found in \Cref{sec:details}. 
The code to reproduce all experiments is available at \url{https://github.com/hudsonchen/thinned_mfld}.

\vspace{1mm}
\textbf{Mean Field Neural Networks}
First, we consider the problem of training a mean-field neural network in a student–teacher setting. 
Given a teacher network $h_{\text{teacher}}(z)= \frac{1}{N_0} \sum_{j=1}^{N_0} \Psi(z, \bm{w}^{(j)})$, the dataset $\{z_i, y_i\}_{i=1}^n$ of size $n$ is generated with additive Gaussian noise corruption: $y_i = h_{\text{teacher}}(z_i) + \varsigma_i$ with $\varsigma_i\sim\calN(0, 0.1)$. 
The teacher network parameters $\{\bm{w}^{(j)}\}_{j=1}^{N_0}$ are sampled from a mixture of $10$ Gaussian distributions on $\R^d$ with $d=12$ and are then held fixed throughout training.  
The covariates $\{z_i\}_{i=1}^n$ are sampled i.i.d. from a uniform distribution over the $d$-dimensional unit sphere. 
The student network shares the same functional form as the teacher, $h_{\text{student}}(z)= \sum_{j=1}^{N} \frac{1}{N} \Psi(z, \bm{x}^{(j)})$, whose parameters $\{\bm{x}^{(j)}\}_{j=1}^{N}$ are initialized randomly and trained via \emph{noisy} gradient descent over the regularized empirical squared loss. 
During training, each update of a student particle uses a freshly generated batch of samples, corresponding to an online teacher–student learning setting~\citep{ren2025emergence,arous2021online}. 
With $\ell_2$-regularization of strength $\zeta=10^{-4}$ and additive noise of scale $\sigma=10^{-3}$ injected at each iteration, the training procedure corresponds to MFLD simulated with $N$ particles, as outlined earlier. 

In \Cref{fig:student_teacher}, we report the performances of all methods evaluated by the loss on the test dataset holding against \emph{equal total computational cost}, defined as $\mathrm{Cost} := N^{\beta} \times T$ where $\beta = 2$ for the original \texttt{MFLD} and $\beta = \tfrac{3}{2}$ for \texttt{KT-MFLD}, \texttt{Random-MFLD}, and \texttt{RBM-MFLD}. 
Although \Cref{asst:kt_rkhs} could in principle be satisfied by using an empirical kernel over the full dataset (\Cref{rem:suff_cond}), doing so would be computationally prohibitive. For this reason, we instead use a Sobolev kernel for both versions of \texttt{KT-MFLD} in our experiments. 
In the \textbf{Left} panel, we observe that \texttt{KT-MFLD}$^{1}$ and  \texttt{KT-MFLD}$^{2}$ consistently attain smaller test loss than both \texttt{Random-MFLD} and \texttt{RBM-MFLD}. 
Given that these methods have the same cost per iteration, this amounts to showing that \texttt{KT-MFLD} consistently outperforms \texttt{Random-MFLD} and \texttt{RBM-MFLD} under the same number of particles $N$ and iterations $T$. 
In the \textbf{Middle} and \textbf{Right} panel, we compare the performance of \texttt{KT-MFLD}$^{1}$ and \texttt{KT-MFLD}$^{2}$ against the original \texttt{MFLD} under varying $\mathfrak{g}= \{0, 1, 2\}$, where the cost scales as $\text{Cost} = 2^{\mathfrak{g}} \times N^{\frac{3}{2}} \times T$. 
Both variants of \texttt{KT-MFLD} achieve lower test loss than \texttt{MFLD} for all values of $\mathfrak{g}$, despite operating under a smaller computational budget while using a larger number of particles. 
Moreover, increasing $\mathfrak{g}$ consistently improves the final performance of both versions of \texttt{KT-MFLD} mitigating the extra logarithmic factor in $N$.

\vspace{1mm}
\textbf{Quantization with MMD} 
Now we consider the experiment of distribution quantization with MMD under the objective $F_0(\mu;\nu)=\iint \varkappa(\bm{x}, \bm{x}^\circ) \dd \mu(\bm{x}) \dd \mu(\bm{x}^\circ) - 2 \varkappa(\bm{x}, \bm{y}) \dd \mu(\bm{x}) \dd \nu(\bm{y}) + \text{const}$, where the kernel $\varkappa$ is a Gaussian kernel with a fixed length scale $1$. 
We follow the experimental setup of \cite[Section 5.1]{chen2025stationary}, where $\nu$ is chosen to be a mixture of Gaussian distributions and the goal is to find a good approximation of $\nu$ (in the sense of MMD) using a finite number of particles. 
While the original formulation of \cite{chen2025stationary} minimizes $F_0$ directly without either $\ell_2$ regularization or entropy regularization, we incorporate both here in this setting as a benchmark for MFLD and related thinning-based acceleration techniques. 
Specifically, we pick $\zeta=10^{-4}$ and $\sigma=10^{-3}$. 
We use the same step size $\gamma=1.0$ for all the methods for a consistent comparison. 

\begin{figure*}[t]
    \centering
    \vspace{-8pt}
    \includegraphics[width=0.8\linewidth]{figures/legend_only.pdf}
    \vspace{-10pt}
    \centering        \includegraphics[width=\linewidth]{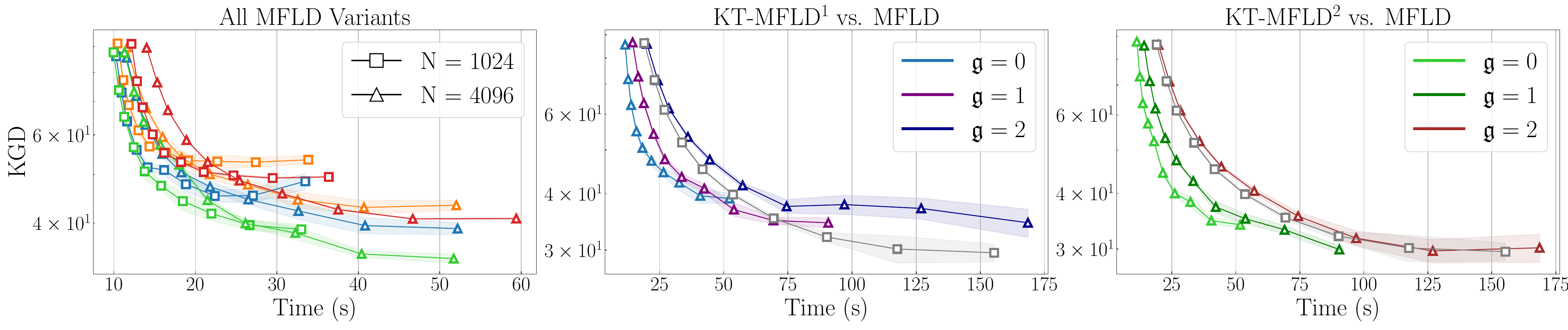}
    \vspace{-5pt}
    \caption{\textit{PrO Posteriors.} \textbf{Left:} Comparison of \texttt{KT-MFLD} with all subsampling based acceleration methods. \textbf{Middle:} Comparison of \texttt{KT-MFLD}$^1$ with original \texttt{MFLD} under varying $\mathfrak{g}$. \textbf{Right:} Comparison of \texttt{KT-MFLD}$^2$ with original \texttt{MFLD} under varying $\mathfrak{g}$. 
    Results are averaged over 20 independent runs. Shaded areas represent $\pm2$ standard errors. The reported wall-clock time is measured on a AMD Ryzen 9 5950X CPU workstation with 16 physical cores and 32 threads. }
    \label{fig:vlm}
    \vspace{-10pt}
\end{figure*}

In \Cref{fig:mmd}, we report the performance of all methods using the MMD computed with the same kernel $\varkappa$ as in the definition of the objective $F_0$, under an \emph{equal total computational cost}. 
We also use this kernel for both versions of \texttt{KT-MFLD} in our experiments. 
In the \textbf{Left} panel, we observe that \texttt{KT-MFLD}$^{1}$ and  \texttt{KT-MFLD}$^{2}$ consistently produce smaller test loss than both \texttt{Random-MFLD} and \texttt{RBM-MFLD}. 
Given that these methods have the same cost per iteration, this again shows that \texttt{KT-MFLD} outperforms \texttt{Random-MFLD} and \texttt{RBM-MFLD} under the same number of particles $N$ and iterations $T$. 
In the \textbf{Middle} and \textbf{Right} panel, we compare the performance of \texttt{KT-MFLD}$^{1}$ and \texttt{KT-MFLD}$^{2}$ against the original \texttt{MFLD} under varying $\mathfrak{g}= \{0, 1, 2\}$. 
In this experiment, \texttt{KT-MFLD}$^{1}$ is outperformed by \texttt{MFLD}, whereas \texttt{KT-MFLD}$^{2}$ consistently attains a lower MMD than \texttt{MFLD} across all values of $\mathfrak{g}$. 
The superior performance of \texttt{KT-MFLD}$^{2}$ over \texttt{KT-MFLD}$^{1}$ can be attributed to the strong MMD guarantee of \ktc~\citep{shetty2021distribution}, which is particularly well-suited in this task of MMD minimization.

\vspace{1mm}
\textbf{PrO Posteriors}
We consider parameter inference for a misspecified Lotka–Volterra model following \cite[Section 6.1]{shen2025prediction}. 
In this set-up, the statistician assumes the prey ($u_1$) and the predator population ($u_2$) follow a Lotka–Volterra model (LVM), which captures the cyclical rise and fall driven by predation and reproduction~\citep{wangersky1978lotka}. 
The dynamics are governed by the coupled differential equations
\begin{align}
\begin{aligned}
\label{eq:lvm}
\dd u_1 / \dd \tau & = \mathfrak{a} u_1 - \mathfrak{b} u_1 u_2, \quad u_1(0)=\xi_1, \\
\dd u_2 / \dd \tau & = \mathfrak{c} u_1 u_2- \mathfrak{d} u_2, \quad u_2(0) = \xi_2, 
\end{aligned}
\end{align}
with non-negative parameters $\bm{x}=(\mathfrak{a}, \mathfrak{b}, \mathfrak{c}, \mathfrak{d})$ and given initial conditions $\xi_1,\xi_2$. 
However, the actual trajectories $\bm{u}(\tau)=[u_1(\tau), u_2(\tau)]$ arise from a \emph{stochastic} LVM, meaning that Eq.~\eqref{eq:lvm} is misspecified, a setting where standard Bayesian inference can fail.

Observations $y_i$ of both species are made at discrete times $\tau_i$, corrupted by independent Gaussian noise of variance $1$.
The statistician's model $P_{\bm{x}}$ has density
\vspace{-5pt}
\begin{align*}
    p_{\bm{x}}(y_i \mid \tau_i) = \prod_i \frac{1}{\sqrt{2 \pi}} \exp \Big(-\frac{\left\|y_i-\bm{u}_{\bm{x}} (\tau_i)\right\|^2}{2}\Big),
\end{align*}
where $\bm{u}_{\bm{x}} (\tau_i)$ represents the outcome of the LVM Eq.~\eqref{eq:lvm} simulated until time $\tau_i$.
All simulation settings were identical to those used in \cite{shen2025prediction}.
Aware of the potential for model misspecification, the statistician eschews the Bayesian posterior in favour of the PrO posterior based on the kernel scoring rule, defined as the minimizer of an entropy-regularized objective in Eq.~\eqref{eq:objective} with 
\begin{align}
    \hspace{-5pt} F_0(\mu) = \mathrm{MMD}^2 \Big( \int \prod_i p_{\bm{x}}(\cdot \mid \tau_i) \dd \mu(\bm{x}), \prod_i \delta_{y_i} \Big) ,   \label{eq: PrO}
\end{align}
where the MMD here is computed under a product of Gaussian kernels with fixed length scale $1$ (i.e. for each time point $\tau_i$) was used.  
Our task is now to numerically approximate the PrO posterior defined by the minimizer of the objective  $\scrF(\mu) := F_0(\mu) + \frac{\zeta}{2} \E_{\mu}[\|\bm{x}\|^2] - \sigma \mathrm{Ent}(\mu)$. 
Here, we take $\zeta=0.1$ and $\sigma=10^{-3}$.

In \Cref{fig:vlm}, we report the performances of all the methods under an evaluation metric called \emph{kernel gradient discrepancy} (KGD), a divergence specifically designed to measure discrepancies with respect to distributions $\pi$ defined implicitly as minimizers of functionals in Eq.~\eqref{eq:objective}~\citep{chazal2025computable}. 
We use Sobolev kernels for both version of \texttt{KT-MFLD}. 
In \Cref{fig:vlm}, the comparison is done under same computational \emph{time} in wall-clock. 
In the \textbf{Left} panel, both \texttt{KT-MFLD}$^{1}$ and \texttt{KT-MFLD}$^{2}$ consistently achieve lower KGD than \texttt{Random-MFLD} and \texttt{RBM-MFLD}. In the \textbf{Middle} and \textbf{Right} panels, we observe that both variants of \texttt{KT-MFLD} attain smaller KGD than \texttt{MFLD} for $\mathfrak{g}=0$ under the same computational time, despite operating with a larger number of particles. 
In this setting, evaluating the interaction between any pair of particles $\bm{x}^{(i)}$ and $\bm{x}^{(j)}$ requires solving Eq.~\eqref{eq:lvm}, which constitutes the dominant computational cost. As a result, the superior performance of \texttt{KT-MFLD} at a fixed computational budget directly translates into superior performance under equal wall-clock time.

\vspace{1mm}
\textbf{Mean-field Games} 
Beyond MFLD, a closely related framework is that of \emph{mean-field games} (MFGs), which characterize equilibrium configurations of noncooperative differential games with a continuum of interacting agents~\citep{lasry2007mean,bensoussan2013mean}---a topic receiving increasing interest in multi-agent reinforcement learning~\citep{yang2018mean} and generative modeling~\citep{berneroptimal,liu2022deep}. 
Numerical simulation of MFGs similarly incurs an $\Omega(N^2)$ computational cost per time step, due to the evaluation of pairwise interactions among $N$ agents. 
We hope to use the same thinning technique to reduce this computational cost. 
Following \cite{agrawal2022random}, we consider a MFG in which each agent chooses a trajectory so as to minimize a risk functional that balances control effort and interaction with the population. 
Specifically, at equilibrium the value function $\phi:[0,T]\times\R^d\to\R$ and distribution $\rho_t$ satisfy the coupled Hamilton–Jacobi–Bellman and continuity equations: 
\begin{align}
-\partial_t \phi(t, \bm{x}) + \frac{1}{2}|\nabla \phi(t, \bm{x})|^2
& = \int K(\bm{x}, \bm{y}) \dd \rho_t(\bm{y}), \nonumber \\
\partial_t \rho_t(\bm{x}) - \nabla \cdot \bigl(\rho_t(\bm{x}) \nabla \phi(t, \bm{x}) \bigr)
& = 0, \label{eq:HJB}
\end{align}
subject to the initial condition $\rho_0=\nu$ and terminal condition $\phi(T,\bm{x})=\psi(\bm{x})$.
The trajectories induced by the coupled system in Eq.~\eqref{eq:HJB}, minimize the risk functional 
$J(z) := \int_0^T [ \frac{1}{2} | \frac{\dd}{\dd s} z(s) |^2 - 
\int K\bigl(z(s), \bm{y}\bigr) \dd \rho_s(\bm{y})]\dd s + 
\psi\bigl(z(T)\bigr)$ over all admissible trajectories $z:[0,T]\to\mathbb{R}^d$.

We numerically solve Eq.~\eqref{eq:HJB} using $N=4096$ particles over the time interval $[0,1]$ with step size $\Delta t=0.01$. 
The terminal cost function is $\psi(\bm{x})=10\|\bm{x}-\bm{x}_{\text {target }}\|^2$ with $\bm{x}_{\text {target }}=0$. 
The kernel interaction term $K$ is a Gaussian kernel with length scale $1.0$. 
The particles are initialized by sampling from a mixture of eight Gaussian distributions $\rho_0$. 
We refer to the basic method as \texttt{MFG}. 
We additionally solve Eq.~\eqref{eq:HJB} using thinning-based accelerations, where the kernel interaction for each particle is evaluated on a thinned subset selected either by random subsampling (\texttt{Random-MFG}), \ktsc (\texttt{KT-MFG}$^1$), \ktc (\texttt{KT-MFG}$^2$). 
The kernel used for both \texttt{KT-MFG} is a Gaussian kernel with length scale $1.0$. 
The random batch method~\citep{jin2020random} is not applicable in this setting. 
In \Cref{fig:mfg}, we observe that 
both \texttt{KT-MFLD}$^{1}$ and \texttt{KT-MFLD}$^{2}$ consistently achieve lower risk than \texttt{Random-MFLD} and \texttt{RBM-MFLD} under the same computational wall-clock-time. This proves the applicability of \ktsc and \ktc in reducing the cost of simulating particles beyond MFLD. 

\begin{figure}[t]
    \centering
    \includegraphics[width=0.9\linewidth]{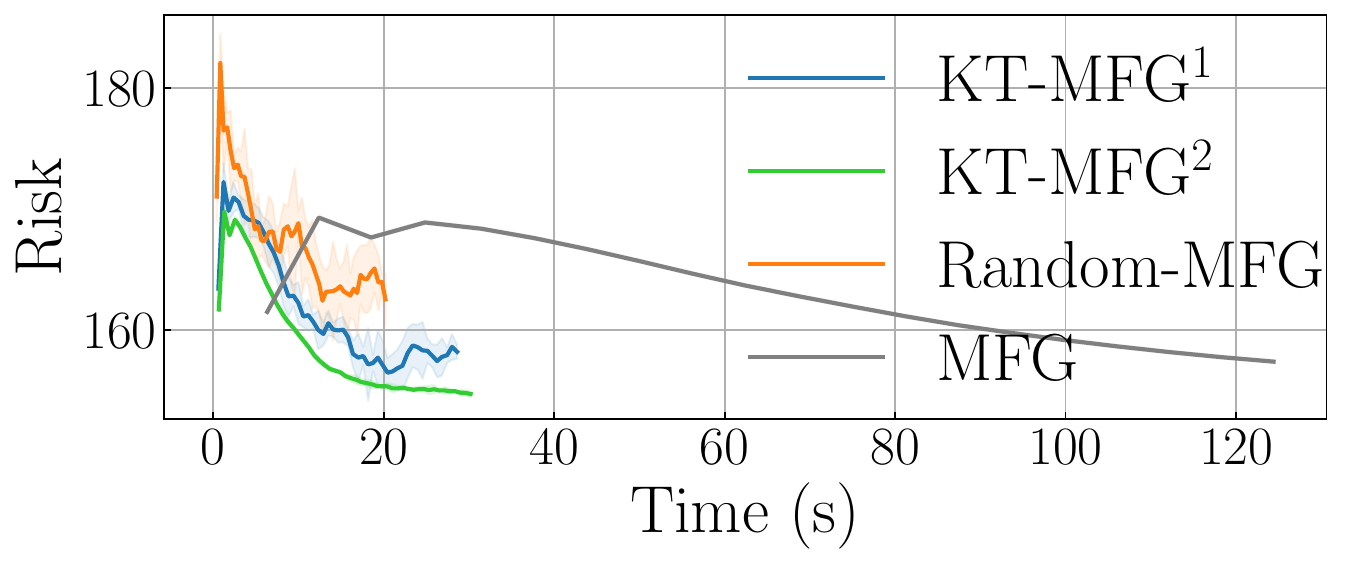}
    \caption{\textit{Mean-field Games}. Results are averaged over $20$ independent runs. Shaded areas represent $\pm2$ standard errors. } 
    \label{fig:mfg}
\end{figure}

\section{Conclusions, Limitations \& Future Work}
Several important tasks in machine learning can be formulated as the minimization of entropy-regularized functionals, for which computational efficiency is often a major bottleneck.
Our proposed \texttt{KT-MFLD} provides a step toward alleviating this challenge. 
A key limitation of the present work is the relatively modest scale of our experiments.
Since this is primarily a theoretical paper in the context of MFLD, it is difficult to identify large-scale experimental settings that both satisfy our assumptions and are natural benchmarks; consequently, our experiments are mainly restricted to kernel-based particle flows.
Promising future directions include combining \texttt{KT-MFLD} with momentum-based acceleration techniques~\citep{liu2019understanding} and applying kernel thinning to particle simulations beyond MFLD and MFG, such as Stein variational gradient descent~\citep{liu2016stein}.

\section*{Acknowledgement}
ZC was supported by the Engineering and Physical Sciences Research Council (EPSRC) EP/S021566/1. 
HK and CJO were supported by  EP/W019590/1. CJO was supported by a Philip Leverhulme Prize PLP-2023-004. 
 FXB was supported by EPSRC EP/Y022300/1. 
ZC would like to thank Zhichao Chen for helpful discussion on the implementation of mean field games and for pointing out relevant references.

\section*{Impact Statement}
This paper presents work whose goal is to advance the field of Machine Learning. There are many potential societal consequences of our work, none which we feel must be specifically highlighted here.

\bibliographystyle{abbrvnat}
\bibliography{main}

\begin{thebibliography}{87}
\providecommand{\natexlab}[1]{#1}
\providecommand{\url}[1]{\texttt{#1}}
\expandafter\ifx\csname urlstyle\endcsname\relax
  \providecommand{\doi}[1]{doi: #1}\else
  \providecommand{\doi}{doi: \begingroup \urlstyle{rm}\Url}\fi

\bibitem[Agrawal et~al.(2022)Agrawal, Lee, Fung, and Nurbekyan]{agrawal2022random}
S.~Agrawal, W.~Lee, S.~W. Fung, and L.~Nurbekyan.
\newblock Random features for high-dimensional nonlocal mean-field games.
\newblock \emph{Journal of Computational Physics}, 459:\penalty0 111136, 2022.

\bibitem[Antonelli and Kohatsu-Higa(2002)]{antonelli2002rate}
F.~Antonelli and A.~Kohatsu-Higa.
\newblock Rate of convergence of a particle method to the solution of the {M}c{K}ean--{V}lasov equation.
\newblock \emph{The Annals of Applied Probability}, 12\penalty0 (2):\penalty0 423--476, 2002.

\bibitem[Arbel et~al.(2019)Arbel, Korba, Salim, and Gretton]{arbel2019maximum}
M.~Arbel, A.~Korba, A.~Salim, and A.~Gretton.
\newblock Maximum mean discrepancy gradient flow.
\newblock \emph{Advances in Neural Information Processing Systems}, 32, 2019.

\bibitem[Aronszajn(1950)]{aronszajn1950theory}
N.~Aronszajn.
\newblock Theory of reproducing kernels.
\newblock \emph{Transactions of the American mathematical society}, 68\penalty0 (3):\penalty0 337--404, 1950.

\bibitem[Arous et~al.(2021)Arous, Gheissari, and Jagannath]{arous2021online}
G.~B. Arous, R.~Gheissari, and A.~Jagannath.
\newblock Online stochastic gradient descent on non-convex losses from high-dimensional inference.
\newblock \emph{Journal of Machine Learning Research}, 22\penalty0 (106):\penalty0 1--51, 2021.

\bibitem[Bardet et~al.(2018)Bardet, Gozlan, Malrieu, and Zitt]{bardet2018functional}
J.-B. Bardet, N.~Gozlan, F.~Malrieu, and P.-A. Zitt.
\newblock Functional inequalities for {L}angevin convolutions of compactly supported measures: Explicit bounds and dimension dependence.
\newblock \emph{Bernoulli}, 24\penalty0 (1):\penalty0 333--353, 2018.

\bibitem[Bensoussan et~al.(2013)Bensoussan, Frehse, and Yam]{bensoussan2013mean}
A.~Bensoussan, J.~Frehse, and P.~Yam.
\newblock \emph{Mean field games and mean field type control theory}.
\newblock Springer, 2013.

\bibitem[Berner et~al.(2024)Berner, Richter, and Ullrich]{berneroptimal}
J.~Berner, L.~Richter, and K.~Ullrich.
\newblock An optimal control perspective on diffusion-based generative modeling.
\newblock \emph{Transactions on Machine Learning Research}, 2024.

\bibitem[Borodachov et~al.(2014)Borodachov, Hardin, and Saff]{borodachov2014low}
S.~V. Borodachov, D.~P. Hardin, and E.~B. Saff.
\newblock Low complexity methods for discretizing manifolds via riesz energy minimization.
\newblock \emph{Foundations of Computational Mathematics}, 14\penalty0 (6):\penalty0 1173--1208, 2014.

\bibitem[Boyd et~al.(2017)Boyd, Schiebinger, and Recht]{boyd2017alternating}
N.~Boyd, G.~Schiebinger, and B.~Recht.
\newblock The alternating descent conditional gradient method for sparse inverse problems.
\newblock \emph{SIAM Journal on Optimization}, 27\penalty0 (2):\penalty0 616--639, 2017.

\bibitem[Buldygin and Kozachenko(2000)]{buldygin2000metric}
V.~V. Buldygin and Y.~V. Kozachenko.
\newblock \emph{Metric characterization of random variables and random processes}, volume 188 of \emph{Translations of Mathematical Monographs}.
\newblock American Mathematical Society, Providence, RI, 2000.
\newblock ISBN 0-8218-0533-9.
\newblock \doi{10.1090/mmono/188}.
\newblock URL \url{https://doi.org/10.1090/mmono/188}.
\newblock Translated from the 1998 Russian original by V. Zaiats.

\bibitem[Cao et~al.(2026)Cao, Das, Langren{\'e}, and Lauri{\`e}re]{cao2026scalable}
Z.~Cao, K.~Das, N.~Langren{\'e}, and M.~Lauri{\`e}re.
\newblock Scalable method for mean field control with kernel interactions via random {F}ourier features.
\newblock \emph{arXiv preprint arXiv:2601.01175}, 2026.

\bibitem[Carmona and Delarue(2018)]{carmona2018probabilistic}
R.~Carmona and F.~Delarue.
\newblock \emph{Probabilistic Theory of Mean Field Games with Applications I-II}.
\newblock Springer, 2018.

\bibitem[Carrell et~al.(2025)Carrell, Gong, Shetty, Dwivedi, and Mackey]{carrell2025low}
A.~M. Carrell, A.~Gong, A.~Shetty, R.~Dwivedi, and L.~Mackey.
\newblock Low-rank thinning.
\newblock In \emph{International Conference on Machine Learning}, pages 6811--6848. PMLR, 2025.

\bibitem[Carrillo et~al.(2018)Carrillo, Choi, Totzeck, and Tse]{carrillo2018analytical}
J.~A. Carrillo, Y.-P. Choi, C.~Totzeck, and O.~Tse.
\newblock An analytical framework for consensus-based global optimization method.
\newblock \emph{Mathematical Models and Methods in Applied Sciences}, 28\penalty0 (06):\penalty0 1037--1066, 2018.

\bibitem[Chazal et~al.(2025)Chazal, Kanagawa, Shen, Korba, and Oates]{chazal2025computable}
C.~Chazal, H.~Kanagawa, Z.~Shen, A.~Korba, and C.~J. Oates.
\newblock A computable measure of suboptimality for entropy-regularised variational objectives.
\newblock \emph{arXiv preprint arXiv:2509.10393}, 2025.

\bibitem[Chen et~al.(2024{\natexlab{a}})Chen, Lin, Ren, and Wang]{chen2024uniform}
F.~Chen, Y.~Lin, Z.~Ren, and S.~Wang.
\newblock Uniform-in-time propagation of chaos for kinetic mean field {L}angevin dynamics.
\newblock \emph{Electronic Journal of Probability}, 29:\penalty0 1--43, 2024{\natexlab{a}}.

\bibitem[Chen et~al.(2010)Chen, Welling, and Smola]{chen2010super}
Y.~Chen, M.~Welling, and A.~Smola.
\newblock Super-samples from kernel herding.
\newblock In \emph{Proceedings of the Twenty-Sixth Conference on Uncertainty in Artificial Intelligence}, pages 109--116, 2010.

\bibitem[Chen et~al.(2024{\natexlab{b}})Chen, Li, Wang, Zhang, Xu, Jiang, Song, and Wang]{chen2024rethinking}
Z.~Chen, H.~Li, F.~Wang, O.~Zhang, H.~Xu, X.~Jiang, Z.~Song, and H.~Wang.
\newblock Rethinking the diffusion models for missing data imputation: A gradient flow perspective.
\newblock \emph{Advances in Neural Information Processing Systems}, 37:\penalty0 112050--112103, 2024{\natexlab{b}}.

\bibitem[Chen et~al.(2025{\natexlab{a}})Chen, Mustafi, Glaser, Korba, Gretton, and Sriperumbudur]{chen2025regularized}
Z.~Chen, A.~Mustafi, P.~Glaser, A.~Korba, A.~Gretton, and B.~K. Sriperumbudur.
\newblock ({D}e)-regularized maximum mean discrepancy gradient flow.
\newblock \emph{Journal of Machine Learning Research}, 26\penalty0 (235):\penalty0 1--77, 2025{\natexlab{a}}.

\bibitem[Chen et~al.(2025{\natexlab{b}})Chen, Nitanda, Gretton, and Suzuki]{chen2025towards}
Z.~Chen, A.~Nitanda, A.~Gretton, and T.~Suzuki.
\newblock Towards a unified analysis of neural networks in nonparametric instrumental variable regression: Optimization and generalization.
\newblock \emph{arXiv preprint arXiv:2511.14710}, 2025{\natexlab{b}}.

\bibitem[Chen et~al.(2026)Chen, Karvonen, Kanagawa, Briol, and Oates]{chen2025stationary}
Z.~Chen, T.~Karvonen, H.~Kanagawa, F.-X. Briol, and C.~J. Oates.
\newblock Stationary {{MMD}} points.
\newblock In \emph{International Conference on Machine Learning}, 2026.

\bibitem[Chewi et~al.(2024)Chewi, Nitanda, and Zhang]{chewi2024uniform}
S.~Chewi, A.~Nitanda, and M.~S. Zhang.
\newblock Uniform-in-$ n $ log-sobolev inequality for the mean-field {L}angevin dynamics with convex energy.
\newblock \emph{arXiv preprint arXiv:2409.10440}, 2024.

\bibitem[Chizat(2022)]{chizatmean}
L.~Chizat.
\newblock Mean-field {L}angevin dynamics: Exponential convergence and annealing.
\newblock \emph{Transactions on Machine Learning Research}, 2022.

\bibitem[Chizat and Bach(2018)]{chizat2018global}
L.~Chizat and F.~Bach.
\newblock On the global convergence of gradient descent for over-parameterized models using optimal transport.
\newblock \emph{Advances in Neural Information Processing Systems}, 2018.

\bibitem[Chizat et~al.(2022)Chizat, Zhang, Heitz, and Schiebinger]{chizat2022trajectory}
L.~Chizat, S.~Zhang, M.~Heitz, and G.~Schiebinger.
\newblock Trajectory inference via mean-field {L}angevin in path space.
\newblock \emph{Advances in Neural Information Processing Systems}, pages 16731--16742, 2022.

\bibitem[Crucinio et~al.(2024)Crucinio, De~Bortoli, Doucet, and Johansen]{crucinio2024solving}
F.~R. Crucinio, V.~De~Bortoli, A.~Doucet, and A.~M. Johansen.
\newblock Solving a class of {F}redholm integral equations of the first kind via {L}angevin gradient flows.
\newblock \emph{Stochastic Processes and their Applications}, 173:\penalty0 104374, 2024.

\bibitem[Del~Moral(2013)]{del2013mean}
P.~Del~Moral.
\newblock Mean field simulation for {M}onte {C}arlo integration.
\newblock \emph{Monographs on Statistics and Applied Probability}, 126\penalty0 (26):\penalty0 6, 2013.

\bibitem[Dick and Pillichshammer(2014)]{dick2014discrepancy}
J.~Dick and F.~Pillichshammer.
\newblock Discrepancy theory and quasi-{M}onte {C}arlo integration.
\newblock In \emph{A Panorama of Discrepancy Theory}, pages 539--619. Springer, 2014.

\bibitem[Djete(2022)]{djete2022non}
M.~F. Djete.
\newblock Non--regular {M}c{K}ean--{V}lasov equations and calibration problem in local stochastic volatility models.
\newblock \emph{arXiv preprint arXiv:2208.09986}, 2022.

\bibitem[Doucet et~al.(2001)Doucet, De~Freitas, and Gordon]{doucet2001introduction}
A.~Doucet, N.~De~Freitas, and N.~Gordon.
\newblock An introduction to sequential {M}onte {C}arlo methods.
\newblock In \emph{Sequential {M}onte {C}arlo methods in Practice}, pages 3--14. Springer, 2001.

\bibitem[Dudley(2014)]{dudley2014uniform}
R.~M. Dudley.
\newblock \emph{Uniform Central Limit Theorems}.
\newblock Cambridge University Press, 2014.

\bibitem[Durmus and Moulines(2019)]{durmus2019high}
A.~Durmus and {\'E}.~Moulines.
\newblock High-dimensional {B}ayesian inference via the unadjusted {L}angevin algorithm.
\newblock \emph{Bernoulli}, 25\penalty0 (4A):\penalty0 2854--2882, 2019.

\bibitem[Dwivedi and Mackey(2022)]{dwivedi2021generalized}
R.~Dwivedi and L.~Mackey.
\newblock Generalized kernel thinning.
\newblock In \emph{International Conference on Learning Representations}, 2022.

\bibitem[Dwivedi and Mackey(2024)]{dwivedi2024kernel}
R.~Dwivedi and L.~Mackey.
\newblock Kernel thinning.
\newblock \emph{Journal of Machine Learning Research}, 25\penalty0 (152):\penalty0 1--77, 2024.

\bibitem[Evans(2022)]{evans2022partial}
L.~C. Evans.
\newblock \emph{Partial differential equations}, volume~19.
\newblock American mathematical society, 2022.

\bibitem[Fr{\"o}hlich and Zegarlinski(1987)]{frohlich1987some}
J.~Fr{\"o}hlich and B.~Zegarlinski.
\newblock Some comments on the {S}herrington--{K}irkpatrick model of spin glasses.
\newblock \emph{Communications in Mathematical Physics}, 112\penalty0 (4):\penalty0 553--566, 1987.

\bibitem[Gneiting and Raftery(2007)]{gneiting2007strictly}
T.~Gneiting and A.~E. Raftery.
\newblock Strictly proper scoring rules, prediction, and estimation.
\newblock \emph{Journal of the American Statistical Association}, 102\penalty0 (477):\penalty0 359--378, 2007.

\bibitem[Gretton et~al.(2012)Gretton, Borgwardt, Rasch, Sch{\"o}lkopf, and Smola]{gretton2012kernel}
A.~Gretton, K.~M. Borgwardt, M.~J. Rasch, B.~Sch{\"o}lkopf, and A.~Smola.
\newblock A kernel two-sample test.
\newblock \emph{Journal of Machine Learning Research}, 13\penalty0 (1):\penalty0 723--773, 2012.

\bibitem[Gu et~al.(2025)Gu, Chien, and Greenewald]{gu2025partially}
A.~Gu, E.~Chien, and K.~Greenewald.
\newblock Partially observed trajectory inference using optimal transport and a dynamics prior.
\newblock In \emph{International Conference on Learning Representations}, 2025.

\bibitem[He et~al.(2025)He, Balasubramanian, Sriperumbudur, and Lu]{he2025regularized}
Y.~He, K.~Balasubramanian, B.~K. Sriperumbudur, and J.~Lu.
\newblock Regularized {L}angevin variational gradient flow.
\newblock \emph{Foundations of Computational Mathematics}, 25\penalty0 (4):\penalty0 1199--1257, 2025.

\bibitem[Hu et~al.(2021)Hu, Ren, {\v{S}}i{\v{s}}ka, and Szpruch]{hu2021mean}
K.~Hu, Z.~Ren, D.~{\v{S}}i{\v{s}}ka, and {\L}.~Szpruch.
\newblock Mean-field {L}angevin dynamics and energy landscape of neural networks.
\newblock In \emph{Annales de l'Institut Henri Poincare (B) Probabilites et statistiques}, volume~57, pages 2043--2065. Institut Henri Poincar{\'e}, 2021.

\bibitem[Jankowiak et~al.(2020)Jankowiak, Pleiss, and Gardner]{jankowiak2020parametric}
M.~Jankowiak, G.~Pleiss, and J.~Gardner.
\newblock Parametric {G}aussian process regressors.
\newblock In \emph{International Conference on Machine Learning}, pages 4702--4712. PMLR, 2020.

\bibitem[Jin et~al.(2020)Jin, Li, and Liu]{jin2020random}
S.~Jin, L.~Li, and J.-G. Liu.
\newblock Random batch methods ({RBM}) for interacting particle systems.
\newblock \emph{Journal of Computational Physics}, 400:\penalty0 108877, 2020.

\bibitem[Jordan et~al.(1998)Jordan, Kinderlehrer, and Otto]{jordan1998variational}
R.~Jordan, D.~Kinderlehrer, and F.~Otto.
\newblock The variational formulation of the {F}okker--{P}lanck equation.
\newblock \emph{SIAM Journal on Mathematical Analysis}, 29\penalty0 (1):\penalty0 1--17, 1998.

\bibitem[Joseph et~al.(2019)Joseph, Wang, Gu, Lyu, and Tuo]{joseph2019deterministic}
V.~R. Joseph, D.~Wang, L.~Gu, S.~Lyu, and R.~Tuo.
\newblock Deterministic sampling of expensive posteriors using minimum energy designs.
\newblock \emph{Technometrics}, 61:\penalty0 297--308, 2019.

\bibitem[Katzfuss et~al.(2016)Katzfuss, Stroud, and Wikle]{katzfuss2016understanding}
M.~Katzfuss, J.~R. Stroud, and C.~K. Wikle.
\newblock Understanding the ensemble {K}alman filter.
\newblock \emph{The American Statistician}, 70\penalty0 (4):\penalty0 350--357, 2016.

\bibitem[Kazashi and Nobile(2023)]{kazashi2023density}
Y.~Kazashi and F.~Nobile.
\newblock Density estimation in {RKHS} with application to {K}orobov spaces in high dimensions.
\newblock \emph{SIAM Journal on Numerical Analysis}, 61\penalty0 (2):\penalty0 1080--1102, 2023.

\bibitem[Kloeden and Pearson(1977)]{kloeden1977numerical}
P.~E. Kloeden and R.~Pearson.
\newblock The numerical solution of stochastic differential equations.
\newblock \emph{The ANZIAM Journal}, 20\penalty0 (1):\penalty0 8--12, 1977.

\bibitem[Korba et~al.(2021)Korba, Aubin-Frankowski, Majewski, and Ablin]{korba2021kernel}
A.~Korba, P.-C. Aubin-Frankowski, S.~Majewski, and P.~Ablin.
\newblock Kernel {S}tein discrepancy descent.
\newblock In \emph{International Conference on Machine Learning}, pages 5719--5730. PMLR, 2021.

\bibitem[Lai and Yao(2024)]{lai2024predictive}
J.~Lai and Y.~Yao.
\newblock Predictive variational inference: Learn the predictively optimal posterior distribution.
\newblock \emph{arXiv preprint arXiv:2410.14843}, 2024.

\bibitem[Lasry and Lions(2007)]{lasry2007mean}
J.-M. Lasry and P.-L. Lions.
\newblock Mean field games.
\newblock \emph{Japanese Journal of Mathematics}, 2\penalty0 (1):\penalty0 229--260, 2007.

\bibitem[Li et~al.(2024)Li, Dwivedi, and Mackey]{li2024debiased}
L.~Li, R.~Dwivedi, and L.~Mackey.
\newblock Debiased distribution compression.
\newblock In \emph{International Conference on Machine Learning}, pages 27675--27731, 2024.

\bibitem[Liu et~al.(2019)Liu, Zhuo, Cheng, Zhang, and Zhu]{liu2019understanding}
C.~Liu, J.~Zhuo, P.~Cheng, R.~Zhang, and J.~Zhu.
\newblock Understanding and accelerating particle-based variational inference.
\newblock In \emph{International Conference on Machine Learning}, pages 4082--4092. PMLR, 2019.

\bibitem[Liu et~al.(2022)Liu, Chen, So, and Theodorou]{liu2022deep}
G.-H. Liu, T.~Chen, O.~So, and E.~Theodorou.
\newblock Deep generalized {S}chr{\"o}dinger bridge.
\newblock \emph{Advances in Neural Information Processing Systems}, 35:\penalty0 9374--9388, 2022.

\bibitem[Liu and Wang(2016)]{liu2016stein}
Q.~Liu and D.~Wang.
\newblock {S}tein variational gradient descent: A general purpose {B}ayesian inference algorithm.
\newblock \emph{Advances in Neural Information Processing Systems}, 29, 2016.

\bibitem[Liu et~al.(2025)Liu, Fisher, Shen, Tant, Zhao, Curtis, and Oates]{liu2025detecting}
Q.~Liu, M.~A. Fisher, Z.~Shen, K.~Tant, X.~Zhao, A.~Curtis, and C.~J. Oates.
\newblock Detecting model misspecification in {B}ayesian inverse problems via variational gradient descent.
\newblock \emph{arXiv preprint arXiv:2512.01667}, 2025.

\bibitem[Mak and Joseph(2018)]{mak2018support}
S.~Mak and V.~R. Joseph.
\newblock Support points.
\newblock \emph{The Annals of Statistics}, 46\penalty0 (6A):\penalty0 2562--2592, 2018.

\bibitem[Masegosa(2020)]{masegosa2020learning}
A.~Masegosa.
\newblock Learning under model misspecification: {A}pplications to variational and ensemble methods.
\newblock \emph{Advances in Neural Information Processing Systems}, 2020.

\bibitem[McLatchie et~al.(2025)McLatchie, Cherief-Abdellatif, Frazier, and Knoblauch]{mclatchie2025predictively}
Y.~McLatchie, B.-E. Cherief-Abdellatif, D.~T. Frazier, and J.~Knoblauch.
\newblock Predictively oriented posteriors.
\newblock \emph{arXiv preprint arXiv:2510.01915}, 2025.

\bibitem[Mei et~al.(2018)Mei, Montanari, and Nguyen]{mei2018mean}
S.~Mei, A.~Montanari, and P.-M. Nguyen.
\newblock A mean field view of the landscape of two-layer neural networks.
\newblock \emph{Proceedings of the National Academy of Sciences}, 115\penalty0 (33):\penalty0 E7665--E7671, 2018.

\bibitem[Morningstar et~al.(2022)Morningstar, Alemi, and Dillon]{morningstar2022pacm}
W.~R. Morningstar, A.~Alemi, and J.~V. Dillon.
\newblock {PACm-Bayes}: {N}arrowing the empirical risk gap in the misspecified {B}ayesian regime.
\newblock In \emph{International Conference on Artificial Intelligence and Statistics}, pages 8270--8298. PMLR, 2022.

\bibitem[Murphy(2022)]{murphy2022probabilistic}
K.~P. Murphy.
\newblock \emph{Probabilistic Machine Learning: An Introduction}.
\newblock MIT Press, 2022.

\bibitem[Neal(1993)]{neal1993probabilistic}
R.~M. Neal.
\newblock \emph{Probabilistic inference using {M}arkov chain {M}onte {C}arlo methods}.
\newblock Department of Computer Science, University of Toronto Toronto, ON, Canada, 1993.

\bibitem[Nitanda(2024)]{nitanda2024improved}
A.~Nitanda.
\newblock Improved particle approximation error for mean field neural networks.
\newblock \emph{Advances in Neural Information Processing Systems}, 37:\penalty0 113823--113845, 2024.

\bibitem[Nitanda and Suzuki(2017)]{nitanda2017stochastic}
A.~Nitanda and T.~Suzuki.
\newblock Stochastic particle gradient descent for infinite ensembles.
\newblock \emph{arXiv preprint arXiv:1712.05438}, 2017.

\bibitem[Nitanda et~al.(2022)Nitanda, Wu, and Suzuki]{nitanda2022convex}
A.~Nitanda, D.~Wu, and T.~Suzuki.
\newblock Convex analysis of the mean field {L}angevin dynamics.
\newblock In \emph{International Conference on Artificial Intelligence and Statistics}, pages 9741--9757. PMLR, 2022.

\bibitem[Nitanda et~al.(2025)Nitanda, Lee, Kai, Sakaguchi, and Suzuki]{nitanda2025propagation}
A.~Nitanda, A.~Lee, D.~T.~X. Kai, M.~Sakaguchi, and T.~Suzuki.
\newblock Propagation of chaos for mean-field {L}angevin dynamics and its application to model ensemble.
\newblock In \emph{International Conference on Machine Learning}, 2025.

\bibitem[Paige et~al.(2016)Paige, Sejdinovic, and Wood]{paige2016super}
B.~Paige, D.~Sejdinovic, and F.~Wood.
\newblock Super-sampling with a reservoir.
\newblock In \emph{Conference on Uncertainty in Artificial Intelligence}. AUAI Press, 2016.

\bibitem[Rainforth et~al.(2016)Rainforth, Naesseth, Lindsten, Paige, Vandemeent, Doucet, and Wood]{rainforth2016interacting}
T.~Rainforth, C.~Naesseth, F.~Lindsten, B.~Paige, J.-W. Vandemeent, A.~Doucet, and F.~Wood.
\newblock Interacting particle {M}arkov chain {M}onte {C}arlo.
\newblock In \emph{International Conference on Machine Learning}, pages 2616--2625. PMLR, 2016.

\bibitem[Ren et~al.(2025)Ren, Nichani, Wu, and Lee]{ren2025emergence}
Y.~Ren, E.~Nichani, D.~Wu, and J.~D. Lee.
\newblock Emergence and scaling laws in {SGD} learning of shallow neural networks.
\newblock \emph{Advances in Neural Information Processing Systems}, 2025.

\bibitem[Rotskoff and Vanden-Eijnden(2022)]{rotskoff2022trainability}
G.~Rotskoff and E.~Vanden-Eijnden.
\newblock Trainability and accuracy of artificial neural networks: An interacting particle system approach.
\newblock \emph{Communications on Pure and Applied Mathematics}, 75\penalty0 (9):\penalty0 1889--1935, 2022.

\bibitem[Santambrogio(2015)]{santambrogio2015optimal}
F.~Santambrogio.
\newblock \emph{Optimal Transport for Applied Mathematicians}.
\newblock Springer, 2015.

\bibitem[Shen et~al.(2025)Shen, Knoblauch, Power, and Oates]{shen2025prediction}
Z.~Shen, J.~Knoblauch, S.~Power, and C.~J. Oates.
\newblock Prediction-centric uncertainty quantification via {{MMD}}.
\newblock In \emph{The 28th International Conference on Artificial Intelligence and Statistics}, 2025.

\bibitem[Sheth and Khardon(2020)]{sheth2020pseudo}
R.~Sheth and R.~Khardon.
\newblock Pseudo-{B}ayesian learning via direct loss minimization with applications to sparse {G}aussian process models.
\newblock In \emph{Symposium on Advances in Approximate {B}ayesian Inference}, pages 1--18. PMLR, 2020.

\bibitem[Shetty et~al.(2022)Shetty, Dwivedi, and Mackey]{shetty2021distribution}
A.~Shetty, R.~Dwivedi, and L.~Mackey.
\newblock Distribution compression in near-linear time.
\newblock In \emph{International Conference on Learning Representations}, 2022.

\bibitem[Stein and Li(2025)]{stein2025accelerated}
V.~Stein and W.~Li.
\newblock Accelerated {S}tein variational gradient flow.
\newblock In \emph{International Conference on Geometric Science of Information}, pages 80--90. Springer, 2025.

\bibitem[Steinwart and Christmann(2008)]{steinwart2008support}
I.~Steinwart and A.~Christmann.
\newblock \emph{Support vector machines}.
\newblock Springer Science \& Business Media, 2008.

\bibitem[Suzuki et~al.(2023)Suzuki, Wu, and Nitanda]{suzuki2023convergence}
T.~Suzuki, D.~Wu, and A.~Nitanda.
\newblock Convergence of mean-field {L}angevin dynamics: {T}ime-space discretization, stochastic gradient, and variance reduction.
\newblock \emph{Advances in Neural Information Processing Systems}, 36:\penalty0 15545--15577, 2023.

\bibitem[Teymur et~al.(2021)Teymur, Gorham, Riabiz, and Oates]{teymur2021optimal}
O.~Teymur, J.~Gorham, M.~Riabiz, and C.~J. Oates.
\newblock Optimal quantisation of probability measures using maximum mean discrepancy.
\newblock In \emph{International Conference on Artificial Intelligence and Statistics}, pages 1027--1035. PMLR, 2021.

\bibitem[Tian et~al.(2026)Tian, Sriperumbudur, Gretton, and Chen]{tian2026sobolev}
C.~Tian, B.~K. Sriperumbudur, A.~Gretton, and Z.~Chen.
\newblock Sobolev regularized {MMD} gradient flow.
\newblock \emph{arXiv preprint arXiv:2605.11884}, 2026.

\bibitem[Wang and Liu(2019)]{wang2019nonlinear}
D.~Wang and Q.~Liu.
\newblock Nonlinear {S}tein variational gradient descent for learning diversified mixture models.
\newblock In \emph{International Conference on Machine Learning}, pages 6576--6585. PMLR, 2019.

\bibitem[Wangersky(1978)]{wangersky1978lotka}
P.~J. Wangersky.
\newblock Lotka-{V}olterra population models.
\newblock \emph{Annual Review of Ecology and Systematics}, 9:\penalty0 189--218, 1978.

\bibitem[Welling and Teh(2011)]{welling2011bayesian}
M.~Welling and Y.~W. Teh.
\newblock Bayesian learning via stochastic gradient {L}angevin dynamics.
\newblock In \emph{International Conference on Machine Learning}, pages 681--688, 2011.

\bibitem[Wild et~al.(2023)Wild, Ghalebikesabi, Sejdinovic, and Knoblauch]{wild2023rigorous}
V.~D. Wild, S.~Ghalebikesabi, D.~Sejdinovic, and J.~Knoblauch.
\newblock A rigorous link between deep ensembles and (variational) {B}ayesian methods.
\newblock \emph{Advances in Neural Information Processing Systems}, 36:\penalty0 39782--39811, 2023.

\bibitem[Xu et~al.(2022)Xu, Korba, and Slepcev]{xu2022accurate}
L.~Xu, A.~Korba, and D.~Slepcev.
\newblock Accurate quantization of measures via interacting particle-based optimization.
\newblock In \emph{International Conference on Machine Learning}, pages 24576--24595. PMLR, 2022.

\bibitem[Yang et~al.(2018)Yang, Luo, Li, Zhou, Zhang, and Wang]{yang2018mean}
Y.~Yang, R.~Luo, M.~Li, M.~Zhou, W.~Zhang, and J.~Wang.
\newblock Mean field multi-agent reinforcement learning.
\newblock In \emph{International Conference on Machine Learning}, pages 5571--5580. PMLR, 2018.

\end{thebibliography}

\begin{appendices}

\crefalias{section}{appendix}
\crefalias{subsection}{appendix}
\crefalias{subsubsection}{appendix}

\makeatletter
\@addtoreset{equation}{section} %
\makeatother
\setcounter{equation}{0}
\renewcommand{\theequation}{\thesection.\arabic{equation}}
\renewcommand{\theHequation}{\thesection.\arabic{equation}} %

\newcommand{\appsection}[1]{
  \refstepcounter{section}
  \section*{Appendix \thesection: #1}
  \addcontentsline{toc}{section}{Appendix \thesection: #1}
}

\onecolumn

{\hrule height 1mm}
\vspace*{-0pt}
\section*{\LARGE\bf \centering Supplementary Material
}
\vspace{8pt}
{\hrule height 0.1mm}
\vspace{24pt}

This appendix is structured as follows. In \Cref{sec:mckean}, we first discuss how thinning could enhance the performance of general interacting particle systems beyond MFLD, then in \Cref{sec:proof_thin_mfld} we provide proofs of our theoretical results.
Finally, in \Cref{sec:details} we provide additional experimental details for \Cref{sec:experiments}. 

\section{Thinned Interacting Particle Systems}\label{sec:mckean}

Our paper showed that thinning can be provably beneficial in the context of MFLD, reducing the computational complexity whilst maintaining control of the bias. In the following section, we discuss other interacting particle systems which could benefit from this approach and could warrant further work.

\paragraph{Thinned McKean–Vlasov Diffusion Processes} 
A McKean–Vlasov diffusion process $(X_t)_{t\geq0}$ is defined as a stochastic process whose dynamics depend on its own law. 
Formally, given measurable functions 
$G: \R^d \times \mathcal{P}(\R^d) \rightarrow \R^d, \quad \Lambda: \R^d \times \mathcal{P}(\R^d) \rightarrow \mathbb{R}^{d}$, a process $(X_t)_{t \geq 0}$ is called a McKean–Vlasov diffusion if it satisfies~\citep{del2013mean}: 
\begin{align}\label{eq:mcv}
    \dd X_t = G(X_t, \mu_t) \dd t+ \Lambda (X_t, \mu_t) \; \dd B_t, \quad \textrm{Law}(X_t) = \mu_t . 
\end{align}
The existence and uniqueness of the solution can be guaranteed by standard Lipschitz and growth conditions on $G$ and $\Lambda$. 

MFLD is a special case of McKean–Vlasov diffusion processes with $G(\bm{x}, \mu)=\nabla F_0'[\mu](\bm{x})$ and constant diffusion coefficient $\Lambda (\bm{x}, \mu)=\sqrt{2\sigma}$. 
For McKean–Vlasov diffusion processes, the drift and/or diffusion coefficients often depend on $\mu_t$ through integral operators, reflecting the fact that these equations describe the \emph{mean-field limit} of large systems of interacting particles~\citep{carmona2018probabilistic}, i.e., $G(\bm{x}, \mu) = \int g(\bm{x}, \bm{y}) \dd \mu(\bm{y})$ and $\Lambda (\bm{x}, \mu)=\int L(\bm{x}, \bm{y}) \dd \mu(\bm{y})$ for two functions $g:\R^d\times\R^d\to\R$ and $L:\R^d\times\R^d\to\R$. 
From now on, we only consider McKean–Vlasov diffusion processes whose drift and diffusion term enjoy such forms. 
Such form of McKean–Vlasov diffusion processes already cover all the forms of MFLD considered in the main text, and also include wide applications of mean-field games~\citep{lasry2007mean,bensoussan2013mean}, inverse problems~\citep{crucinio2024solving}, missing data imputation~\citep{chen2024rethinking} and computational finance~\citep{djete2022non}.  

In this case, a practical implementation of Eq.~\eqref{eq:mcv} under both time- and space- discretizations is the following interacting particle system: given an initial collection of particles $\scrx_0 = [\bm{x}_0^{(1)},\ldots, \bm{x}_0^{(N)}]$ and a time horizon $T\in\N$, for each time
$t\in\{0, \ldots, T\}$ and for each particle $i\in\{1, \ldots, N\}$, 
\begin{align}\label{eq:descent_mcv}
    \bm{x}_{t+1}^{(i)} = \bm{x}_t^{(i)} - \gamma G(\bm{x}_t^{(i)}, \mu_{\scrx_t}) + \sqrt{2 \gamma \Lambda (\bm{x}_t^{(i)}, \mu_{\scrx_t})} \; \xi_t^{(i)}, \quad \mu_{\scrx_t} = \frac{1}{N}\sum\limits_{i=1}^N \delta_{\bm{x}_t^{(i)}} , \quad \xi_t^{(i)} \sim \calN(0, \Id) . 
\end{align}
Here, both $G(\bm{x}_t^{(i)}, \mu_{\scrx_t})=\frac{1}{N}\sum_{j=1}^N g(\bm{x}_t^{(i)}, \bm{x}_t^{(j)})$ and $\Lambda (\bm{x}_t^{(i)}, \mu_{\scrx_t})=\frac{1}{N}\sum_{j=1}^N L(\bm{x}_t^{(i)}, \bm{x}_t^{(j)})$ involve an empirical average over $N$ particles, so that the implementation of the above descent scheme requires the cost of $\calO(N^2)$ at each iteration. 
The thinning techniques \ktsc and \ktc apply equally to this more general McKean–Vlasov descent scheme, in which it selects a coreset of size $M=\lceil \sqrt{N} \rceil$ and thus reduces the cost to $\calO(MN) = \calO(N^{\frac{3}{2}})$ per iteration. 
However, theoretical analysis for convergence of McKean–Vlasov diffusion processes with both space- and time- discretizations are less developed in the literature than those of mean-field Langevin dynamics (MFLD), as the former no longer corresponds to a descent scheme that minimizes a specific functional. 
Consequently, the McKean–Vlasov literature has primarily focused on convergence of particle-based discretizations to the population-level limit, rather than convergence towards the minimizer; see, for example, \cite{antonelli2002rate}.

\paragraph{Thinned MFLD of General Functionals} 
Another direction to generalize Eq.~\eqref{eq:cont_mfld} is to consider more general objectives $F:\calP_2(\R^d)\to\R$. 
In the main text, we focus on a class of non-linear functionals $F$ of the form $F(\mu)=F_0(\mu) + \frac{\zeta}{2}\E_\mu[\|\bm{x}\|^2]$, where   
\begin{align}
    F_0(\mu) = \E_{u\sim \rho} \left[ R_1\left( \int q_1(u, \bm{x}) \dd \mu(\bm{x}) \right) \right] + \frac{1}{2} \iint q_2(\bm{x}, \bm{x}^\circ) \dd \mu(\bm{x}) \dd \mu(\bm{x}^\circ).
\end{align}
The functional $F_0$ considered above involves interactions of order at most two. More generally, it can be extended to the following higher-order form:   
\begin{align}\label{eq:general_F}
    F_0(\mu) = \sum_{i=1}^p \E_{u\sim \rho} \left[ R_i \left( \int q_i(u, \bm{x}_1, \ldots, \bm{x}_i) \dd \mu^{\otimes i}(\bm{x}) \right) \right] ,
\end{align}
which recovers the original definition when $p=2$ and $R_2$ is the identity map.  
For this general class of functionals, the associated vector field in the particle update scheme requires approximating nested integrals of order $p$, which incurs a computational cost of $\calO(N^{p})$ at each iteration.
By applying the same thinning strategy, \ktsc or \ktc used in the main text, this cost can be reduced to $\calO(N^{\frac{p+1}{2}})$. 
Establishing convergence guarantees analogous to those in \Cref{thm:thinned_mfld} for MFLD of such functionals would, however, require more sophisticated theoretical analysis, due to the increased complexity induced by higher-order particle interactions.
Such form of $F_0$ with higher-order interactions arise prominently in statistical physics~\citep{frohlich1987some}. 

\paragraph{Thinned Stein Variational Gradient Descent} \label{sec:svgd}
Stein variational gradient descent (SVGD) concerns the computational task of sampling from an un-normalized target distribution $\pi \propto \exp(-V)$. 
Its update scheme can be written as follows~\citep{liu2016stein}: for initial particles $\scrx_0 = [\bm{x}_0^{(1)},\ldots, \bm{x}_0^{(N)}]$ and $t\in\{0, \ldots, T\}$ for any $T\in\N$, for $i=1, \ldots, N$, 
\begin{align*}
    \bm{x}_t^{(i+1)} = \bm{x}_t^{(i)} - \gamma \frac{1}{N} \sum_{j=1}^N \left[k(\bm{x}_t^{(i)}, \bm{x}_t^{(j)}) \nabla V(\bm{x}_t^{(j)}) - \nabla_2 k(\bm{x}_t^{(i)}, \bm{x}_t^{(j)})\right] .
\end{align*} 

The particle interaction term plays a crucial role by inducing repulsion among the $N$ particles, thereby promoting sample diversity and mitigating particle degeneracy.
However, this benefit comes at the cost of increased computational complexity: the interaction term induces a per-iteration cost of $\calO(N^2)$ due to pairwise interactions. 

Unlike the MFLD setting considered in the main text, the coreset produced by \ktsc does not in general achieve the same $\calO(N^{-\frac{1}{2}})$ for the vector field after thinning as established for MFLD in \Cref{prop:thinned_vec_field}.
The underlying reason is that the mapping $\bm{x}\mapsto k(\bm{x}^{(i)}, \bm{x}) \nabla V(\bm{x})$ does not belong to the RKHS $\calH_k$ associated with $k$ due to multiplication with $\nabla V$.
Moreover, this mapping may fail to lie in any \emph{bounded} RKHS, since $\nabla V$ is typically not bounded (for instance, in the Gaussian case one has $\nabla V(\bm{x}) \propto \bm{x}$). 
The same issue exists for other variants of SVGD as well including \emph{nonlinear Stein variational gradient descent} \citep{wang2019nonlinear} and \emph{kernel gradient descent}~\citep{chazal2025computable}. Therefore, applying thinning to SVGD and its related variants may be more challenging than for the other interacting particle systems discussed above.

\paragraph{Other Interacting Particle Systems}
Beyond the MFLD setting studied in the main text and the generalizations discussed above, there exist many other applications in which simulating interacting particles incurs at least quadratic computational cost per iteration.
Representative examples include mean field games~\citep{bensoussan2013mean}, ensemble Kalman filters~\citep{katzfuss2016understanding}, consensus-based optimization~\citep{carrillo2018analytical}, interacting Markov chain Monte Carlo methods~\citep{rainforth2016interacting}, and sequential Monte Carlo~\citep{doucet2001introduction}.
Understanding whether and how kernel thinning can be adapted to these settings is an interesting direction for future work. 

\paragraph{Other Acceleration Techniques}
Beyond particle subsampling to reduce the per-iteration cost of evaluating interactions like \texttt{KT-MFLD}, a variety of alternative acceleration strategies have been proposed for interacting particle systems. 
These include Nesterov-type accelerated gradient methods in the space of probability measures~\citep{liu2019understanding,stein2025accelerated}, regularization of kernel integral operators~\citep{he2025regularized,chen2025regularized}, and random Fourier features to approximate kernel interactions~\citep{cao2026scalable,agrawal2022random}. 
In addition, in the context of mean-field Langevin dynamics for training two-layer neural networks, the cost of computing the vector field can be reduced by subsampling the training data~\citep{suzuki2023convergence,welling2011bayesian}. 
Importantly, these methods are all complementary to \texttt{KT-MFLD}: 
for instance, data subsampling can be combined with both \ktsc and \ktc, and same for momentum-based acceleration as well. 
Understanding how these acceleration strategies interact, and whether they can be combined to yield further computational gains, is therefore an interesting direction for future work.

\section{Proofs}\label{sec:proof_thin_mfld}

In this appendix, we provide the proofs of all theoretical results in the paper and also present a number of intermediate results. More precisely,  we give the proof of \Cref{prop:thinned_property} in \Cref{sec:proof_thinned_property}, the proof of \Cref{thm:thinned_mfld} in  \Cref{app:proof_thinned_mfld}, and the proof of auxiliary propositions and lemmas in \Cref{sec:proof_thinned_vector}, \Cref{sec:thin_mfld_aux}, and \Cref{sec:lemmas}.

\paragraph{Additional Notation}
We recall that $\calH_k^{\otimes d}$ denotes the Cartesian product RKHS consisting of elements $f=(f_1, \ldots, f_d)$ with $f_i \in \mathcal{H}_k$ and with inner product $\langle f, g\rangle_{\mathcal{H}_k^{\otimes d}} = \sum_{i=1}^d\left\langle f_i, g_i\right\rangle_{\mathcal{H}_k}$. 

\subsection{Proof of \Cref{prop:thinned_property}}\label{sec:proof_thinned_property}

\begin{proof}
The proof is a direct application of Theorem 3 of \cite{shetty2021distribution} combined with its Example 3. 
Given the input points $\{\bm{x}_i \}_{i=1}^N$ with size $N$, consider using the \ktscd algorithm with a bounded kernel $k$ (i.e. $\|k\|_\infty:= \sup_{\bm{x}}k(\bm{x},\bm{x}) < \infty$.) and a hyperparameter $\mathfrak{g}\in\N_0$ which returns a thinned coreset $\{\bar{\bm{x}}_i \}_{i=1}^M$ of size $M=2^{\mathfrak{g}}\sqrt{N}$. 

From Example 3 of \cite{shetty2021distribution}, the thinned coreset satisfies the following: with probability at least $1-\frac{\delta}{2}$, for any $f\in\calH_k$, 
\begin{align}\label{eq:compress_bound}
    \left| \frac{1}{N}\sum_{i=1}^N f\left(\bm{x}^{(i)}\right) - \frac{1}{M}\sum_{i=1}^M f\left(\bar{\bm{x}}^{(i)}\right) \right| \leq \|f\|_{\calH_k} \cdot \nu_0 \sqrt{2 \log \left(\textfrac{2}{\delta}\right)},   
\end{align}
where, with probability $1-\frac{\delta}{2}$, 
\begin{align}\label{eq:nu_N}
    \nu_0 &\leq \sqrt{\log_4 N - \mathfrak{g}} \frac{4}{2^{\mathfrak{g}+1} \sqrt{N}\sqrt{3}} \sqrt{\log \left(\frac{ 12N 4^{\mathfrak{g}} (\log_4 N - \mathfrak{g}) }{2^{{\mathfrak{g}+1} }\sqrt{N}\delta} \right) \|k\|_{\infty}} \\
    &=
\frac{2}{M\sqrt{3}} \sqrt{\log_2\left(\textfrac{N}{M}\right)\log \left(\textfrac{6M}{\delta}\log_2\left(\textfrac{N}{M}\right) \right) \|k\|_{\infty}}.
\end{align}
The first probability statement is about the randomness of the thinned coreset, while the second probability statement is about a specific intrinsic thresholding step in the KT-split algorithm (Algorithm 3 in \cite{dwivedi2024kernel}). 
This concludes the proof of \Cref{prop:thinned_property}. 

Next, we consider $\mathfrak{g}=0$ as a special case such that $M= \sqrt{N}$. 
By the union bound, we therefore obtain that, with probability at least $1-\delta$, 
\begin{align*}
    \left| \frac{1}{N}\sum_{i=1}^N f\left(\bm{x}^{(i)}\right) - \frac{1}{M}\sum_{i=1}^M f\left(\bar{\bm{x}}^{(i)}\right) \right| &\leq \|f\|_{\calH_k} \cdot \sqrt{2 \log\left(\frac{2}{\delta}\right)} \cdot \sqrt{\log_4 N} \frac{2}{\sqrt{N}\sqrt{3}} \sqrt{\log \left(\frac{ 6 \sqrt{N} (\log_4 N)}{\delta} \right) \|k\|_{\infty}} \\
    &\leq \frac{3\sqrt{\log N}}{\sqrt{N}} \|f\|_{\calH_k} \sqrt{\|k\|_{\infty}} \cdot \sqrt{\log\left(\frac{2}{\delta}\right) \left(\log \left(\frac{1}{\delta}\right) + \log N \right) } .
\end{align*}
To remove the extra logarithmic term in the above upper bound, we consider $\mathfrak{g} = \lceil \log_2 \log N\rceil$. 
We have, for $N$ sufficiently large that $ (\log N)^2 \leq \sqrt{N}  \delta$, with probability at least $1-\delta$, 
\begin{align*}
    &\quad \left| \frac{1}{N}\sum_{i=1}^N f \left(\bm{x}^{(i)}\right) - \frac{1}{M}\sum_{i=1}^M f \left(\bar{\bm{x}}^{(i)}\right) \right| \\
    &\leq \|f\|_{\calH_k} \cdot \sqrt{2 \log\left(\frac{2}{\delta}\right)} \cdot \sqrt{\log_4 N} \frac{2}{(\log N) \sqrt{N}\sqrt{3}} \sqrt{ \log (6\sqrt{N} (\log N) (\log_4 N) ) \|k\|_{\infty}} \\
    &\leq \frac{6}{\sqrt{N}} \|f\|_{\calH_k} \sqrt{\|k\|_{\infty}} \cdot \sqrt{\log\left(\frac{2}{\delta}\right)} . 
\end{align*}
Therefore, with such choice of $\mathfrak{g}$, we can further avoid the logarithmic term in $N$, achieving $\calO(N^{-\frac{1}{2}})$ integration error. 
\end{proof}

\subsection{Proof of \Cref{thm:thinned_mfld}}\label{app:proof_thinned_mfld}

To simplify the statement of the proof, we introduce the following shorthand notation:
\begin{align}
    v\left(\bm{x}_t^{(i)};  \left\{\bm{x}_t^{(j)}\right\}_{j=1}^N \right) &:= \nabla F_0'[\mu_{\scrx_t}]\left(\bm{x}_t^{(i)}\right) \quad\text{and} \label{eq:v_v} \\
    \omega \left(\bm{x}_t^{(i)}; \left\{\bm{x}_t^{(j)} \right\}_{j=1}^N , \left\{\bar{\bm{x}}_t^{(j)}\right\}_{j=1}^M \right) &:= \nabla F_0'[\mu_{\bar{\scrx}_t}]\left(\bm{x}_t^{(i)}\right) - \nabla F_0'[\mu_{\scrx_t}]\left(\bm{x}_t^{(i)}\right). \label{eq:v_omega}
\end{align}
Here, $v$ denotes the velocity field of the original mean field Langevin dynamics, while $\omega$ denotes the additional approximation error induced by using a thinned coreset $\{\bar{\bm{x}}_t^{(i)}\}_{i=1}^M$ rather than the full set of particles $\{\bm{x}_t^{(i)}\}_{i=1}^N$ at each iteration. 
Now we write out the original mean field Langevin dynamics update scheme (Eq.~\eqref{eq:practical_mfld}) and the thinned mean field Langevin dynamics update scheme (Eq.~\eqref{eq:thinned_practical_mfld}): at iteration $t\in\N$, for $i = 1, \ldots, N$, 
\begin{align}\label{eq:original_mfld_appendix}
    \bm{x}_{t+1}^{(i)} &= \bm{x}_t^{(i)} - \gamma \left\{ v\left(\bm{x}_t^{(i)}; \{\bm{x}_t^{(j)}\}_{j=1}^N \right) + \zeta \bm{x}_t^{(i)} \right\} + \sqrt{2 \sigma \gamma} \; \xi_t^{(i)} \quad\text{and}\tag{MFLD} \\
    \bm{x}_{t+1}^{(i)} &= \bm{x}_t^{(i)} - \gamma \left\{ v\left(\bm{x}_t^{(i)}; \{\bm{x}_t^{(j)}\}_{j=1}^N \right) + \omega \left(\bm{x}_t^{(i)}; \{\bm{x}_t^{(j)}\}_{j=1}^N , \{\bar{\bm{x}}_t^{(j)}\}_{j=1}^M \right) + \zeta \bm{x}_t^{(i)} \right\} + \sqrt{2 \sigma \gamma} \; \xi_t^{(i)}. \tag{KT-MFLD}
\end{align}
Hereafter, we will omit the dependence on point sets $\{\bar{\bm{x}}_t^{(i)}\}_{i=1}^M$ and $\{\bm{x}_t^{(i)}\}_{i=1}^N$ for further notational simplicity. 
Our next result is proved in \Cref{sec:proof_thinned_vector}.

\begin{prop}[Thinned vector field]\label{prop:thinned_vec_field}
Suppose \Cref{asst:lipschitz,asst:kt_rkhs} hold. Let $\nabla F_0'[\mu_{{\scrx}}]:\R^d \to \R^d$ be the original vector field evaluated at the empirical distribution of all $N$ particles $\mu_{\scrx} =\frac{1}{N} \sum_{i=1}^N \delta_{\bm{x}^{(i)}}$, and let $\nabla F_0'[\mu_{\bar{\scrx}}]:\R^d \to \R^d$ be the thinned vector field evaluated at the empirical distribution of the thinned set of $M$ particles $\mu_{\bar{\scrx}} = \frac{1}{M} \sum_{i=1}^M \delta_{\bar{\bm{x}}^{(i)}}$ with $M=\lceil \sqrt{N}\rceil$, obtained via \ktscd. 
Then, there exists a probabilistic event $\calE(\delta)$ of probability at least $1-\delta$, such that for any $\bm{x}\in\R^d$, 
\begin{align} \label{eq:high_prob_bound}
    \E\left[ \norm{\nabla F_0'[\mu_{\scrx}](\bm{x}) - \nabla F_0'[\mu_{\bar{\scrx}} ](\bm{x})}^2 \cdot \indic{\eventd} \right] \leq \frac{d C}{N} \log\left(\frac{6 \sqrt{N} \log_4 N}{\delta} \right) . 
\end{align}
Here, $C$ is a constant that only depends on $Q_1,L_R, \scrQ_1,\scrQ_2$. 
\end{prop}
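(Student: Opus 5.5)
The plan is to split the difference $\nabla F_0'[\mu_{\scrx}](\bm{x}) - \nabla F_0'[\mu_{\bar{\scrx}}](\bm{x})$ according to the two terms of Eq.~\eqref{eq:vector_field}, and to bound each by an integration error $\calE(f)$ of a suitable RKHS function. We then apply the high-probability guarantee of \ktscd from \Cref{prop:thinned_property} (in its $\mathfrak{g}=0$ form derived in \Cref{sec:proof_thinned_property}). By $\|a+b\|^2\le 2\|a\|^2+2\|b\|^2$ it suffices to treat the two terms separately. The event $\eventd$ is taken to be the event on which the KT-split thresholding step and the compress guarantee both hold, i.e.\ on which $\nu_0$ obeys the bound in Eq.~\eqref{eq:nu_N}. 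This event has probability at least $1-\delta$ after a union bound.

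\textbf{The $q_2$ term.} Here the difference is $\frac1N\sum_j \nabla_1 q_2(\bm{x},\bm{x}^{(j)}) - \frac1M\sum_j \nabla_1 q_2(\bm{x},\bar{\bm{x}}^{(j)})$. Write it coordinatewise as $\calE(f_l)$ with $f_l=[\nabla_1 q_2(\bm{x},\cdot)]_l\in\calH_k$, where $\sum_l\|f_l\|_{\calH_k}^2\le \scrQ_2^2$ by \Cref{asst:kt_rkhs}.

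\textbf{The $q_1$ term.} Write $a(u)=\frac1N\sum_j q_1(u,\bm{x}^{(j)})$ and $\bar a(u)$ for the thinned average. The difference is $\E_u[(R_1'(a(u))-R_1'(\bar a(u)))\nabla_2 q_1(u,\bm{x})]$. By Jensen, the $L_R$-Lipschitzness of $R_1'$, and $\|\nabla_2 q_1\|\le Q_1$, its squared norm is at most $L_R^2Q_1^2\,\E_u[\calE(q_1(u,\cdot))^2]$, and $\|q_1(u,\cdot)\|_{\calH_k}\le\scrQ_1$.

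The key step is to control $\E[\calE(f)^2\indic{\eventd}]$ for a fixed $f$, rather than a high-probability bound. For this I would use the sub-Gaussian form behind Eq.~\eqref{eq:compress_bound}. Conditionally on the input points and on the thresholding event, the KT-split output error $\calE(f)$ is $\|f\|_{\calH_k}\nu_0$-sub-Gaussian; this is the content of Example 3 of \cite{shetty2021distribution}, since the tail $\sqrt{2\log(2/\delta')}$ holds for every $\delta'$. Integrating the tail then gives $\E[\calE(f)^2\indic{\eventd}]\lesssim \|f\|^2_{\calH_k}\nu_0^2$. Plugging in $\nu_0^2\lesssim \frac{\kappa}{N}\log_4 N\cdot\log(6\sqrt N\log_4 N/\delta)$ from Eq.~\eqref{eq:nu_N} with $\mathfrak{g}=0$ gives the claimed bound. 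The factor $d$ arises from summing $d$ coordinates in the $q_2$ term, each bounded crudely by $\scrQ_2^2$, or alternatively from $\sum_l\|f_l\|^2$. For the $q_1$ term, Fubini exchanges $\E_u$ with the expectation over thinning, which is legitimate because $\eventd$ does not depend on $u$ or $\bm{x}$.

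\textbf{Main obstacle.} The step I expect to be delicate is the conversion from the fixed-$f$, fixed-$\delta$ tail statement to a second-moment bound on a single event $\eventd$ that works uniformly over $\bm{x}$ and $u$. I would handle it by defining $\eventd$ only through the $f$-independent thresholding step. I would then use the conditional sub-Gaussianity to integrate tails for each $f$ separately, which avoids any union bound over $\bm{x}$. A further point to check is whether an extra $\log_4 N$ factor must be absorbed, either into $C$ or into the logarithm. If the displayed bound is meant literally, without the $\log N$ prefactor, I would fall back on bounding $\log_4 N\le\log(6\sqrt N\log_4N/\delta)$ and accept a squared log, or else note that the factor only affects the $(\log N)^3$ term in \Cref{thm:thinned_mfld}.
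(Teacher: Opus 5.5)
Your proposal follows the paper's proof essentially step for step. The shared steps are:
\begin{itemize}
\item the same split into the $q_1$ and $q_2$ contributions;
\item the same use of the $L_R$-Lipschitzness of $R_1'$ and of $\|\nabla_2 q_1\|\le Q_1$;
\item the same appeal to Example~3 of \cite{shetty2021distribution} for sub-Gaussianity of the \ktscd error on an $f$-independent event;
\item the same conversion to a second moment.
\end{itemize}
The paper phrases the last step as follows: for each $f$ there is a $\nu_0\|f\|_{\calH_k}$-sub-Gaussian $X_f$ equal to $\calE(f)$ on $\eventd$, so $\E[\calE(f)^2\indic{\eventd}]\le\E[X_f^2]\le\nu_0^2\|f\|_{\calH_k}^2$. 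This is the clean form of your ``conditionally on the thresholding event'' step. You also bound the $q_1$ term at the vector level and use $\sum_l\|f_l\|_{\calH_k}^2\le\scrQ_2^2$. This removes the factor $d$, which the paper only incurs by bounding each coordinate separately and summing.

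On the point you flagged, your bookkeeping is correct, and the displayed bound does not follow from the cited guarantee. With $\mathfrak{g}=0$, Eq.~\eqref{eq:nu_N} gives $\nu_0^2\le\frac{4\kappa}{3N}\log_4 N\,\log\bigl(\frac{6\sqrt{N}\log_4 N}{\delta}\bigr)$. The paper reaches the stated inequality only because its proof writes $\nu_0\le\frac{2}{\sqrt{3N}}\sqrt{\log(6\sqrt{N}\log_4 N/\delta)\,\kappa}$. That silently drops the factor $\sqrt{\log_4 N}$, which is the same $\sqrt{\log_2(N/M)}$ appearing in \Cref{prop:thinned_property} and reflects the recursion depth of Compress.

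So neither your argument nor the paper's gives the display with an $N$-independent $C$. What is actually established is the bound with an extra factor $\log_4 N$, and neither of your fallbacks removes it. Your second remark is the right resolution. Plug the corrected bound into Eq.~\eqref{eq:2_mom_omega} with $\delta=(\log_2 N)^3/N$; the first term becomes $\calO(d(\log N)^2/N)$, which is still absorbed into $c_0(\log N)^3/N$. Hence \Cref{thm:thinned_mfld} is unaffected apart from the value of $c_0$.
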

Since $C_1$ is a uniform upper bound on $\|\nabla F_0'[\mu](\bm{x})\|$ proved in \Cref{lem:bounded_lip}.  
From \Cref{prop:thinned_vec_field}, we can obtain that
\begin{align*}
    \E\left[\left\| \omega\left(\bm{x}_t^{(i)}\right)\right\|^2 \right]  = \E\left[ \norm{\nabla F_0'[\mu_{\scrx}](\bm{x}) - \nabla F_0'[\mu_{\bar{\scrx}} ](\bm{x})}^2 \right] \leq \frac{d C}{N} \log\left(\frac{6 \sqrt{N}\log_4 N}{\delta} \right) + C_1^2 \delta . 
\end{align*}
We pick $\delta=(\log_2 N)^3/N$ and then obtain
\begin{align}\label{eq:2_mom_omega}
    \E\left[\left\| \omega\left(\bm{x}_t^{(i)}\right)\right\|^2 \right] \leq \frac{dC \log (6N^{1.5}) + C_1^2 (\log_2 N)^3}{N} \leq \frac{2 dC \log N + 8 C_1^2 (\log N)^3}{N} =: \frac{c_0 (\log N)^3}{N}. 
\end{align}
Here, $C_0$ is a constant that only depends on $Q_1,L_R, \scrQ_1,\scrQ_2$. 
Now, we are ready to adapt the proof of \cite{nitanda2024improved} and \cite{nitanda2025propagation} under our setting. 

Recall that the objective functional $F(\mu)$ and $\scrF(\mu) = F_0(\mu) + \frac{\zeta}{2} \E_{\mu}[\|\bm{x}\|^2] - \sigma \mathrm{Ent}(\mu)$ defined in Eq.~\eqref{eq:functional_F}. 
Following \cite{nitanda2025propagation}, define a new auxiliary functional $\scrF^{(N)}:\calP_2((\R^d)^N) \to\R$, 
\begin{align*}
    \scrF^{(N)}\left(\mu^{(N)}\right) = N \E_{\scrx\sim\mu^{(N)}} [F_0(\mu_{\scrx})]  + \frac{\zeta}{2} \E_{\scrx\sim\mu^{(N)}}\left[ \sum_{i=1}^N \left\| \bm{x}^{(i)}\right\|^2 \right] - \sigma \mathrm{Ent}(\mu^{(N)}),
\end{align*}
where $\mu^{(N)}\in\calP_2((\R^d)^N)$ is the joint distribution of all $N$ particles $\scrx = [\bm{x}^{(1)},\ldots, \bm{x}^{(N)}]$ and $\mu_{\scrx} = \frac{1}{N}\sum_{i=1}^N \delta_{\bm{x}^{(i)}}$ represents the empirical distribution. 
Note that the sign before entropy is flipped in our definition because the entropy in \cite{nitanda2025propagation} is actually \emph{negative} entropy. 
Denote $\pi = \arg\min_{\mu} \scrF(\mu)$ and $\pi^{\otimes N}$ is the N-fold tensor product of $\pi$. 

\begin{prop}\label{prop:descent_lemma_mfld}
Suppose \Cref{asst:lipschitz,asst:kt_rkhs} hold. 
Suppose $\gamma < \frac{1}{2\zeta}$. 
Let $\mu_t^{(N)}$ be the joint distribution of the $N$ particles in Eq.~\eqref{eq:thinned_practical_mfld} at iteration $t \in \N$.  
Define $\bar{\alpha} = \frac{\zeta}{2\sigma} \exp(-\frac{4 C_3}{\zeta \sigma} \sqrt{\frac{2d}{\pi}}))$ for $C_3 = (|R_1'(0)| + L_R Q_1 + Q_2)^2$.  
Then, for any $t \in \N$  we have,  
\begin{align*}
    &\quad \E\left[\scrF^{(N)}(\mu_{t+1}^{(N)})\right] - N \scrF(\pi) - B - \frac{N \delta_\gamma + c_0 (\log N)^3}{2 \bar{\alpha}} \\
    &\leq \exp (-\bar{\alpha} \gamma \sigma) \left( \E\left[\scrF^{(N)}(\mu_t^{(N)}) \right] - N \scrF(\pi) - B - \frac{N \delta_\gamma + c_0 (\log N)^3}{2 \bar{\alpha}} \right) . 
\end{align*}
Here, $\delta_\gamma = 8 (C_2^2 + \zeta^2)(\gamma C_1^2 + \sigma d) +32 \gamma \zeta^2 (C_2^2 + \zeta^2) (d + \frac{1}{\zeta}(\frac{C_1^2}{4 \zeta}+\sigma d)) = \calO(\gamma \sigma d + \gamma^2)$ represents the time discretization error, where $B=2L_R Q_1^2 + Q_2$ is introduced in \Cref{lem:leave_one_out}, $C_1 = |R_1'(0)| + L_R Q_1 + Q_2$ and $C_2 = Q_1^2 L_R + Q_2$ are introduced in \Cref{lem:bounded_lip}, and $c_0$ is a constant defined in Eq.~\eqref{eq:2_mom_omega}. 
The expectation is taken with respect to the randomness of the thinned samples at iteration $t$. 
\end{prop}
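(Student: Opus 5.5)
We follow the one-step argument of \cite{nitanda2025propagation} (their proof of point 2 of Theorem 1), treating the thinning error $\omega$ as an additional perturbation of the drift, in the same way stochastic-gradient noise is handled in \cite{suzuki2023convergence}.

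\textbf{Step 1: interpolation.} Fix $t$ and condition on $\scrx_t$ and on the thinned coreset $\bar{\scrx}_t$. We interpolate the KT-MFLD step in continuous time over $s\in[0,\gamma]$ by setting
\[
\bm{x}^{(i)}(s)=\bm{x}_t^{(i)}-s\bigl\{v(\bm{x}_t^{(i)})+\omega(\bm{x}_t^{(i)})+\zeta\bm{x}_t^{(i)}\bigr\}+\sqrt{2\sigma}\,B^{(i)}_s ,
\]
and let $\mu^{(N)}(s)$ denote the resulting joint law. Along this path, differentiating $\scrF^{(N)}$ gives
\[
-\sigma^2\,\E\bigl\|\nabla\log\bigl(\mu^{(N)}(s)/p^{(N)}_{\mu(s)}\bigr)\bigr\|^2
\]
plus a cross term. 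Here $p^{(N)}$ is the proximal Gibbs measure, whose $i$-th score is $-(\nabla F_0'[\mu_{\scrx}]+\zeta\bm{x}^{(i)})/\sigma$. The cross term measures the mismatch between the frozen drift $v(\bm{x}_t^{(i)})+\omega(\bm{x}_t^{(i)})+\zeta\bm{x}_t^{(i)}$ and the current drift $\nabla F_0'[\mu_{\scrx(s)}](\bm{x}^{(i)}(s))+\zeta\bm{x}^{(i)}(s)$.

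\textbf{Step 2: bounding the drift mismatch.} Using Young's inequality, half of the Fisher-information term absorbs the mismatch, which leaves $\frac{1}{2}\sum_i\E\|\text{mismatch}_i\|^2$. We split the mismatch into two parts.
\begin{itemize}
\item The discretization part, $v(\bm{x}^{(i)}(s))-v(\bm{x}_t^{(i)})+\zeta(\bm{x}^{(i)}(s)-\bm{x}_t^{(i)})$, is controlled exactly as in \cite{nitanda2025propagation}. We use the boundedness $C_1$ and Lipschitz constant $C_2$ from \Cref{lem:bounded_lip}, together with the Lipschitz dependence on the empirical measure, and a uniform second-moment bound $\E\|\bm{x}_t\|^2\le d+\frac{1}{\zeta}(\frac{C_1^2}{4\zeta}+\sigma d)$, which follows from $\gamma<\frac{1}{2\zeta}$. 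Summed over $i$, this contributes $N\delta_\gamma$ (up to the factor conventions).
\item The thinning part, $\omega(\bm{x}_t^{(i)})$, is bounded using Eq.~\eqref{eq:2_mom_omega}, a consequence of \Cref{prop:thinned_vec_field} with $\delta=(\log_2N)^3/N$ and the trivial bound $C_1^2\delta$ off the good event. This gives $\E\|\omega(\bm{x}_t^{(i)})\|^2\le c_0(\log N)^3/N$, which sums over $i$ to $c_0(\log N)^3$.
\end{itemize}
A factor-2 split of the squared sum introduces constants, which are folded into $\delta_\gamma$ and $c_0$.

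\textbf{Step 3: closing the recursion.} The remaining $-\frac{\sigma^2}{2}\times$Fisher term is converted into a gap via the uniform-in-$N$ log-Sobolev inequality for the proximal Gibbs measures. These measures are $\zeta/\sigma$-log-concave perturbed by a potential whose gradient is bounded by $\sqrt{C_3}$, and Holley--Stroock/Miclo-type bounds \citep{bardet2018functional} give the LSI constant $\bar{\alpha}$. Combining this with the entropy sandwich of \cite{nitanda2025propagation}, which uses the convexity of $F_0$ (\Cref{lem:convex}) and leave-one-out stability (\Cref{lem:leave_one_out}), we obtain
\[
\sigma\,\mathrm{Fisher}\ \ge\ 2\bar{\alpha}\bigl(\scrF^{(N)}(\mu^{(N)}(s))-N\scrF(\pi)-B\bigr).
\]
This yields the differential inequality
\[
\frac{\dd}{\dd s}\Delta(s)\le-\bar{\alpha}\sigma\,\Delta(s)+\tfrac{\sigma}{2}\bigl(N\delta_\gamma+c_0(\log N)^3\bigr)
\]
for $\Delta(s)=\E\scrF^{(N)}(\mu^{(N)}(s))-N\scrF(\pi)-B$. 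Grönwall's inequality over $[0,\gamma]$, after recentering at the fixed point $\frac{N\delta_\gamma+c_0(\log N)^3}{2\bar{\alpha}}$, gives the stated contraction by $\exp(-\bar{\alpha}\gamma\sigma)$.

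\textbf{Main obstacle.} The delicate point is that $\omega$ depends on the random coreset, which is correlated with all the particles. We therefore must condition on $(\scrx_t,\bar{\scrx}_t)$ when forming the interpolation, so that the Fokker--Planck argument applies to the law given these variables. Only afterwards do we take the outer expectation and use Eq.~\eqref{eq:2_mom_omega}, which bounds the \emph{unconditional} second moment. We also need the entropy and LSI step to be applied to the conditional-then-averaged law without breaking convexity. Following \cite{suzuki2023convergence}, this is handled by the convexity of KL and Jensen's inequality.
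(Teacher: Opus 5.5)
\textbf{A step in your handling of the coreset randomness fails, and there is a constant slip.} Your Steps 1--3 follow the paper's argument: frozen-drift interpolation on $[0,\gamma]$, Young's inequality against half of the Fisher information, splitting the mismatch into the discretization error and the thinning error $\E\|\omega\|^2\le c_0(\log N)^3/N$, the uniform LSI and entropy sandwich of \cite{nitanda2025propagation}, then Gr\"onwall.

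What fails is your resolution of the ``main obstacle''. If you condition on $\scrx_t$, the interpolated law at $s=0$ is the point mass at $\scrx_t$. So $\scrF^{(N)}$ of the conditional law is $+\infty$ there (its entropy is $-\infty$), its Fisher information blows up as $s\downarrow 0$, and the Gr\"onwall recursion started at $s=0$ is vacuous. Conditioning on $\bar{\scrx}_t$ alone does not help either. The coreset points are themselves particles, so the conditional law of $\scrx_t$ given $\bar{\scrx}_t$ lives on a Lebesgue-null set and again has entropy $-\infty$.

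The Jensen/convexity device of \cite{suzuki2023convergence} works there only because the minibatch randomness is independent of the particles. That keeps the conditional initial law equal to $\mu_t^{(N)}$, and convexity of $\scrF^{(N)}$ then only has to handle the average at $s=\gamma$. Here $\bar{\scrx}_t$ is a function of $\scrx_t$, so that independence is lost, and at $s=0$ convexity points the wrong way.

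No conditioning is needed, and the paper's proof does not condition. It takes $\nu_r$ to be the unconditional joint law of the interpolated particles and bounds the thinning term by the unconditional moment bound \eqref{eq:2_mom_omega}. The Brownian increment is independent of $(\scrx_t,\bar{\scrx}_t)$, so $\nu_s$ solves a Fokker--Planck equation with drift $\E[b_0\mid\bm{y}_s]$, where $b_0$ is the frozen thinned drift. Your Step 1 computation then leaves $\E\|b(\bm{y}_s)-\E[b_0\mid\bm{y}_s]\|^2\le\E\|b(\bm{y}_s)-b_0\|^2$ by conditional Jensen, with $b$ the current full-particle drift. The correlation between coreset and particles therefore costs nothing. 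Alternatively, condition only on the internal random seed of \ktsc, which is independent of $\scrx_t$, and average at the end using convexity.

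A smaller point concerns constants. Your Step 2 yields the additive term $N\delta_\gamma+c_0(\log N)^3$, not $\frac{\sigma}{2}(N\delta_\gamma+c_0(\log N)^3)$. Gr\"onwall therefore recentres at $(N\delta_\gamma+c_0(\log N)^3)/(\bar\alpha\sigma)$. You cannot fold the factor $\sigma/2$ into $c_0$, because $c_0$ is fixed by \eqref{eq:2_mom_omega}. The paper's own Gr\"onwall step has the same mismatch, so this is a defect of the stated constant rather than of the method.
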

The proof of the above proposition can be found in \Cref{sec:thin_mfld_aux}.
$\bar{\alpha}$ is known as the \emph{log Sobolev constant} in the literature of mean field Langevin dynamics%
~\citep[Lemma 5]{suzuki2023convergence}. 
The proof is an adaptation of the proof of Theorem 2 in \cite{nitanda2024improved} in our setting where we take into account the extra error introduced by thinning at each iteration.

With \Cref{prop:descent_lemma_mfld}, we can then obtain that, for any $T\in\N$, 
\begin{align*}
    \frac{1}{N} \E\left[\scrF^{(N)}(\mu_{T}^{(N)})\right] - \scrF(\pi) &\leq \exp (-T \bar{\alpha} \gamma)\left( \frac{1}{N} \E\left[\scrF^{(N)}(\mu_0^{(N)}) \right] - \scrF(\pi) \right) + \frac{B + c_0\bar{\alpha}^{-1} (\log N)^3}{N} + \frac{\delta_\gamma}{2 \bar{\alpha}}.
\end{align*}
Since the initial particles are sampled i.i.d from some distribution $\mu_0$, we have $\mu_0^{(N)} = \mu_0^{\otimes N}$ and hence the initial error can be computed as: 
\begin{align}
    &\quad \frac{1}{N}\scrF^{(N)}(\mu_0^{(N)}) - \scrF(\pi) \nonumber \\
    &= \E[F_0(\mu_{\scrx_0})] + \frac{\zeta}{2N} \E_{\scrx\sim\mu_0^{(N)}}\left[ \sum_{i=1}^N \| \bm{x}^{(i)}\|^2 \right] - \mathrm{Ent}(\mu_0) - F_0(\pi) - \frac{\zeta}{2} \E_{\pi}[\|\bm{x}\|^2] + \mathrm{Ent}(\pi) \nonumber \\
    &= \E[F_0(\mu_{\scrx_0})] - F_0(\pi) + \frac{\zeta}{2} \E_{\bm{x}\sim\mu_0} \left[\| \bm{x}\|^2 \right] - \mathrm{Ent}(\mu_0) -\mathrm{KL}\left(\pi \| \gamma_\zeta\right) \nonumber \\
    &\leq \E[F_0(\mu_{\scrx_0})] - F_0(\pi) + \frac{\zeta}{2} \E_{\bm{x}\sim\mu_0} \left[\| \bm{x} \|^2 \right] - \mathrm{Ent}(\mu_0) := C_{\mu_0} \label{eq:initial_error} . 
\end{align}
Here, $\gamma_\zeta:=\mathcal{N}(0, \frac{1}{\zeta} I_d)$. 
The proof of \Cref{thm:thinned_mfld} can thus be concluded by the following relation proved in Lemma 1 of \cite{nitanda2025propagation}:
\begin{align*}
    \frac{\sigma}{N} \kl(\mu_{T}^{(N)} \| \pi^{\otimes N}) \leq \frac{1}{N} \E\left[\scrF^{(N)}(\mu_{T}^{(N)})\right] - \scrF(\pi) . 
\end{align*}

\subsection{Proof of \Cref{prop:thinned_vec_field}}
\label{sec:proof_thinned_vector}
\begin{proof}
Recall below the forms of $\nabla F_0'[\mu_{{\scrx}}]$ in Eq.~\eqref{eq:original_vector_field} and $\nabla F_0'[\mu_{\bar{\scrx}}]$ in Eq.~\eqref{eq:thinned_practical_mfld}:
\begin{align*}
    \nabla F_0'[\mu_{\scrx}](\bm{x}) &= \E_{u\sim \rho} \left[ R_1^\prime \left( \frac{1}{N}\sum_{j=1}^N q_1(u, \bm{x}^{(j)}) \right) \nabla_2 q_1(u, \bm{x}) \right] + \frac{1}{N}\sum_{j=1}^N \nabla_{1} q_2(\bm{x}, \bm{x}^{(j)}), \\
    \nabla F_0'[\mu_{\bar{\scrx}}](\bm{x}) 
    &= \E_{u\sim \rho} \left[ R_1^\prime \left( \frac{1}{M}\sum_{j=1}^M q_1(u, \bar{\bm{x}}^{(j)}) \right) \nabla_2 q_1(u, \bm{x}) \right] + \frac{1}{M} \sum_{j=1}^M \nabla_{1} q_2(\bm{x}, \bar{\bm{x}}^{(j)}) . 
\end{align*}
Consider the difference of the above. For any $m\in\{1, \ldots, d\}$, noting that $\nabla_2 q_1(u, \bm{x})$ is uniformly bounded, we obtain
\begin{align*}
    \left| \Big[\nabla F_0'[\mu_{\scrx}](\bm{x})\Big]_m - \Big[ \nabla F_0'[\mu_{\bar{\scrx}}](\bm{x}) \Big]_m \right| &\leq Q_1  \E_{u\sim \rho} \left[ \left| R_1^\prime \left( \frac{1}{N}\sum_{j=1}^N q_1(u, \bm{x}^{(j)}) \right) - R_1^\prime \left(\frac{1}{M}\sum_{j=1}^M q_1(u, \bar{\bm{x}}^{(j)}) \right) \right| \right] \\
    &\quad + \left| \frac{1}{N}\sum_{j=1}^N \partial_{1, m} q_2(\bm{x}, \bm{x}^{(j)}) - \frac{1}{M} \sum_{j=1}^M \partial_{1,m} q_2(\bm{x}, \bar{\bm{x}}^{(j)}) \right| \\
    &\hspace{-5cm} \leq Q_1 L_R \E_{u\sim \rho} \left| \frac{1}{N}\sum_{j=1}^N q_1(u, \bm{x}^{(j)}) - \frac{1}{M}\sum_{j=1}^M q_1(u, \bar{\bm{x}}^{(j)}) \right| + \left| \frac{1}{N}\sum_{j=1}^N \partial_{1, m} q_2(\bm{x}, \bm{x}^{(j)}) - \frac{1}{M} \sum_{j=1}^M \partial_{1, m} q_2(\bm{x}, \bar{\bm{x}}^{(j)}) \right|. 
\end{align*}
Here, $\partial_{1, m} q_2$ denotes taking the $m$-th partial derivative with respect to the first argument of $q_2$.

By \citep[Example~3]{shetty2021distribution}, there exists an event $\eventd$ of probability at most $\delta/2$ on which the \ktscd coreset is $\nu_0$ $\|f\|_\calH$-sub-Gaussian for each $f$ in the RKHS. 
Here, 
$$\nu_0 \leq \frac{2}{\sqrt{N} \sqrt{3}} \sqrt{\log \left(\frac{6 \sqrt{N} \log_4 N}{ \delta}\right)\kappa} . 
$$
In other words, for every $f$, there exists a $\nu_0\norm{f}_{\mathcal{H}}$ sub-Gaussian random variable such that $X_f$ equals the \ktscd coreset $f$-integration error on $\eventd$. Furthermore, for any $\nu_0\norm{f}_{\mathcal{H}}^2$-sub-Gaussian variable $X$, $\E[X^2] \leq \nu_0^2\norm{f}_{\mathcal{H}}^2$ \citep[Lem.~1.2]{buldygin2000metric}, so $\E[X_f^2\indic{\eventd}]\leq\E[X_f^2]\leq \nu_0^2\norm{f}_{\mathcal{H}}^2$.  
Finally we apply this bound to the appropriate test function in our $\mathcal{H}$ for each component $j$ and $\bm{x}$ and sum the results.

From \Cref{asst:kt_rkhs}, we know that both $ q_1(u, \cdot) \in \calH_k$ and $\partial_{1,m} q_2(\bm{x}, \cdot) \in \calH_k$ for any $m\in\{1, \ldots, d\}$. Hence, 
\begin{align*}
    \E\left[ \left| \frac{1}{N}\sum_{j=1}^N q_1(u, \bm{x}^{(j)}) - \frac{1}{M}\sum_{j=1}^M q_1(u, \bar{\bm{x}}^{(j)}) \right|^2 \cdot \indic{\eventd} \right]\leq \nu_0^2 \scrQ_1^2  
    \quad\text{and}\\
    \E\left[ \left| \frac{1}{N}\sum_{j=1}^N \partial_{1,m} q_2(\bm{x}, \bm{x}^{(j)}) - \frac{1}{M} \sum_{j=1}^M \partial_{1,m} q_2(\bm{x}, \bar{\bm{x}}^{(j)}) \right|^2 \cdot \indic{\eventd} \right] \leq \nu_0^2 \scrQ_2^2 . 
\end{align*} 
Therefore, we combined the above two inequalities, for any $\bm{x}\in\R^d$, 
\begin{align*}
    \E\left[\left| \Big[\nabla F_0'[\mu_{\scrx}](\bm{x})\Big]_m - \Big[\nabla F_0'[\mu_{\bar{\scrx}}](\bm{x})\Big]_m \right|^2 \cdot \indic{\eventd} \right] \leq (Q_1^2L_R^2 + 1) \nu_0^2(\scrQ_1^2 + \scrQ_2^2) . 
\end{align*}
Summing the above from $m=1$ to $d$, we obtain, for any $\bm{x}\in\R^d$, 
\begin{align*}
    \E\left[\left\| \nabla F_0'[\mu_{\scrx}](\bm{x}) - \nabla F_0'[\mu_{\bar{\scrx}}](\bm{x}) \right\|^2 \cdot \indic{\eventd} \right] \leq d (Q_1^2L_R^2 + 1) \nu_0^2(\scrQ_1^2 + \scrQ_2^2) . 
\end{align*}
Define $C:= \frac{4}{3}(Q_1^2L_R^2 + 1)(\scrQ_1^2 + \scrQ_2^2) \kappa$ and the proof is concluded. 
\end{proof}

\subsection{Proof of \Cref{prop:descent_lemma_mfld}}
\label{sec:thin_mfld_aux}

\begin{proof}
The proof is an adaptation of the proof of Theorem 2 in \cite{nitanda2025propagation}. Assumption 1 in \cite{nitanda2025propagation} holds with 
$C_1 = |R_1'(0)| + L_R Q_1 + Q_2$ and $C_2 = Q_1^2 L_R + Q_2>0$ proved in \Cref{lem:bounded_lip}; Assumption 2 in \cite{nitanda2025propagation} holds with $\bar{\alpha}=\frac{\zeta}{2 \sigma} \exp (-\frac{4 C_3}{\zeta \sigma} \sqrt{\frac{2 d}{\pi}})$ following the same Miclo’s trick~\citep[Lemma 2.1]{bardet2018functional}; Assumption 3 in \cite{nitanda2025propagation} holds since $R_1$ is convex, as proved in \Cref{lem:convex}; Assumption 4 in \cite{nitanda2025propagation} holds with $B = 2L_R Q_1^2 + Q_2$ proved in \Cref{lem:leave_one_out}. 

Recall the update scheme of the thinned MFLD at iteration $t$: for $i = 1, \ldots, N$,  
\begin{align}\label{eq:mfld_copy}
    \bm{x}_{t+1}^{(i)} &= \bm{x}_t^{(i)} - \gamma \left\{ v(\bm{x}_t^{(i)}; \{\bm{x}_t^{(j)}\}_{j=1}^N) + \omega(\bm{x}_t^{(i)}; \{\bm{x}_t^{(j)}\}_{j=1}^N , \{\bar{\bm{x}}_t^{(j)}\}_{j=1}^M) + \zeta \bm{x}_t^{(i)} \right\} + \sqrt{2 \sigma \gamma} \; \xi_t^{(i)},
\end{align}
where $v:\R^d\to\R^d$ denotes the velocity field of the original mean field Langevin dynamics defined in Eq.~\eqref{eq:v_v} and $\omega:\R^d\to\R^d$ denotes the additional approximation error induced by using a thinned coreset defined in Eq.~\eqref{eq:v_omega}. 
To aid analysis, we construct another $N$-particle update scheme which is a continuous one-step interpolation for $t$-th
iteration of the original MFLD: for $i = 1, \ldots, N$, 
\begin{align*}
    \dd \bm{y}_{r}^{(i)} &= -  \left\{ v\left(\bm{y}_0^{(i)}; \{\bm{y}_0^{(i)}\}_{i=1}^N \right) + \omega\left(\bm{y}_0^{(i)}; \{\bm{y}_0^{(i)}\}_{i=1}^N, \{\bar{\bm{y}}_0^{(i)}\}_{i=1}^M \right) + \zeta \bm{y}_0^{(i)} \right\} \dd r + \sqrt{2 \sigma} \; \dd B_r , \\
    &\text{ where } \quad \{\bm{y}_0^{(i)}\}_{i=1}^N = \{\bm{x}_t^{(i)}\}_{i=1}^N , \quad \{\bar{\bm{y}}_0^{(i)}\}_{i=1}^M = \{\bar{\bm{x}}_t^{(i)}\}_{i=1}^M, 
\end{align*}
and $B_r$ is the standard Brownian motion on $\R^d$. Since the drift term in the above stochastic difference equation is time-independent, it admits the following solution: for $i = 1, \ldots, N$ and any $r\in[0, \gamma]$, 
\begin{align}\label{eq:mfld_aux}
    \bm{y}_{r}^{(i)} = \bm{y}_{0}^{(i)} - r \left\{ v\left(\bm{y}_0^{(i)}; \{\bm{y}_0^{(i)}\}_{i=1}^N \right) + \omega\left(\bm{y}_0^{(i)}; \{\bm{y}_0^{(i)}\}_{i=1}^N, \{\bar{\bm{y}}_0^{(i)}\}_{i=1}^M \right) + \zeta \bm{y}_0^{(i)} \right\} + \sqrt{2 \sigma r} \; \xi_r^{(i)} . 
\end{align}
Therefore, comparing Eq.~\eqref{eq:mfld_aux} and Eq.~\eqref{eq:mfld_copy}, we see that $\{\bm{y}_{\gamma}^{(i)}\}_{i=1}^N$ coincide with $\{\bm{x}_{t+1}^{(i)}\}_{i=1}^N$. 
Denote by $\nu_r$ the joint distribution of the above $N$ particles $\{\bm{y}_r^{(i)}\}_{i=1}^N$ at time $r\in[0,\gamma]$. 
Recall that $\pi^{\otimes N}$ denotes the $N$-fold product measure of the target distribution $\pi$. 
Following the same derivations as (27)-(30) in \cite{nitanda2024improved}, we obtain
\begin{align*}
    \frac{\dd \scrF^{(N)}}{\dd r}(\nu_r) &\leq -\frac{\sigma^2}{2} \int \nu_r(\bm{y})\left\|\nabla \log \frac{\nu_r}{\pi^{\otimes N}}(\bm{y})\right\|^2 \dd \bm{y} + \E_{\nu_0, \nu_r} \left[ \sum_{i=1}^N \left\|v(\bm{y}_0^{(i)}) - v(\bm{y}_r^{(i)}) \right\|_2^2 \right] \\
    &\qquad + \sum_{i=1}^N \E \left[ \left\| \omega\left(\bm{y}_0^{(i)}; \{\bm{y}_0^{(j)}\}_{j=1}^N, \{\bar{\bm{y}}_0^{(j)}\}_{j=1}^M \right) \right\|^2 \right] \\
    &\leq -\frac{\sigma^2}{2} \int \nu_r(\bm{y})\left\|\nabla \log \frac{\nu_r}{\pi^{\otimes N}}(\bm{y})\right\|^2 \dd \bm{y} + N \delta_\gamma + N c_0 \frac{(\log N)^3}{N} .
\end{align*}
The last inequality holds by applying the derivations on the page of 15 in \cite{nitanda2024improved} to the second term and by applying Eq.~\eqref{eq:2_mom_omega} on the third term. Here, $\delta_\gamma = \calO(\gamma^2 + \gamma\sigma d)$ represents the time discretization error. 
Next, noticing that the first term above is the Fisher divergence between $\nu_r$ and $\pi^{\otimes N}$, we apply Lemma 2 in \cite{nitanda2025propagation} which gives
\begin{align*}
    \frac{\dd \scrF^{(N)}}{\dd r}(\nu_r) &\leq -\bar{\alpha} \sigma \left(\scrF^{(N)}(\nu_r) - N \scrF(\pi) - B \right) + N \delta_\gamma + c_0 (\log N)^3 .
\end{align*}
Since $\nu_\gamma = \mu_{t+1}$ and $\nu_0 = \mu_{t}$, we apply the Gronwall's lemma~\citep[Appendix~B.2, Gronwall's inequality]{evans2022partial} and obtain
\begin{align*}
    &\quad \scrF^{(N)}(\mu_{t+1}^{(N)}) - N \scrF(\pi) - B - \frac{N \delta_\gamma + c_0 (\log N)^3}{2 \bar{\alpha}} \\
    &\leq \exp (-\bar{\alpha} \sigma \gamma)\left(\scrF^{(N)}(\mu_t^{(N)}) - N \scrF(\pi) - B - \frac{N \delta_\gamma + c_0 (\log N)^3}{2 \bar{\alpha}} \right) . 
\end{align*}
The above inequality holds at every iteration, hence, we arrive at the desired result and conclude the proof. 
\end{proof}

\subsection{Auxiliary Lemmas}\label{sec:lemmas}
\begin{lem}[Bounded and Lipschitz vector field] \label{lem:bounded_lip}
    Suppose \Cref{asst:lipschitz} holds. There exists constants $C_1 = (|R_1'(0)| + L_R Q_1)Q_1 + Q_2$ and $C_2 = (|R_1'(0)| + L_R Q_1 + Q_2)Q_1 + Q_1^2 L_R + Q_2>0$, such that the following properties hold:
1) for any $\mu \in \calP_2(\R^d)$ and $\bm{x} \in \R^d$, we have $\|\nabla F_0'[\mu](\bm{x})\| \leq C_1$. 
2) for any $\mu, \mu^{\circ} \in \calP_2(\R^d), \bm{x}, \bm{x}^{\circ} \in \R^d$, $\| \nabla F_0'[\mu](\bm{x}) -\nabla F_0'[\mu^{\circ}](\bm{x}^{\circ}) \| \leq C_2(W_2(\mu, \mu^{\circ}) + \|\bm{x} - \bm{x}^{\circ}\|)$, 
where $W_2$ is the Wasserstein-2 distance. 
\end{lem}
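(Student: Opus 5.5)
Write $\nabla F_0'[\mu](\bm{x}) = A[\mu](\bm{x}) + D[\mu](\bm{x})$, where
\[
A[\mu](\bm{x}) := \E_{u\sim\rho}\big[R_1'(\langle q_1(u,\cdot),\mu\rangle)\,\nabla_2 q_1(u,\bm{x})\big],
\qquad
D[\mu](\bm{x}) := \int \nabla_1 q_2(\bm{x},\bm{x}^\circ)\,\dd\mu(\bm{x}^\circ),
\]
as in Eq.~\eqref{eq:vector_field}. We treat $A$ and $D$ separately and use only \Cref{asst:lipschitz}.

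\textbf{Uniform bound (part 1).} Since $|q_1|\le Q_1$, the argument $y := \int q_1(u,\cdot)\,\dd\mu$ satisfies $|y|\le Q_1$. The $L_R$-Lipschitzness of $R_1'$ then gives $|R_1'(y)|\le |R_1'(0)| + L_R Q_1$. Multiplying by $\|\nabla_2 q_1\|\le Q_1$ and averaging over $u$ yields $\|A[\mu](\bm{x})\|\le (|R_1'(0)|+L_RQ_1)Q_1$. The bound $\|\nabla_1 q_2\|\le Q_2$ gives $\|D[\mu](\bm{x})\|\le Q_2$. Adding the two gives $C_1$.

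\textbf{Lipschitz bound (part 2).} We split the difference through the intermediate point $\nabla F_0'[\mu](\bm{x}^\circ)$.

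\emph{Spatial term.} For fixed $\mu$, the mean value theorem with the Hessian bounds $\|\bH_2 q_1\|_{\op}\le Q_1$ and $\|\bH_1 q_2\|_{\op}\le Q_2$ gives
\[
\|A[\mu](\bm{x})-A[\mu](\bm{x}^\circ)\| \le (|R_1'(0)|+L_RQ_1)\,Q_1\,\|\bm{x}-\bm{x}^\circ\|,
\qquad
\|D[\mu](\bm{x})-D[\mu](\bm{x}^\circ)\| \le Q_2\,\|\bm{x}-\bm{x}^\circ\|.
\]

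\emph{Measure term.} For fixed $\bm{x}^\circ$, take an optimal $W_2$ coupling $\Gamma$ of $(\mu,\mu^\circ)$. For $A$,
\[
|R_1'(y)-R_1'(y^\circ)| \le L_R\Big|\int \big(q_1(u,\bm{z})-q_1(u,\bm{z}^\circ)\big)\,\dd\Gamma\Big| \le L_R Q_1\, W_1(\mu,\mu^\circ) \le L_R Q_1\, W_2(\mu,\mu^\circ),
\]
using that $q_1(u,\cdot)$ is $Q_1$-Lipschitz (its gradient is bounded by $Q_1$) and then Jensen/Cauchy--Schwarz. Multiplying by $\|\nabla_2 q_1\|\le Q_1$ gives a contribution $Q_1^2 L_R\, W_2$. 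For $D$, the map $\bm{x}'\mapsto \nabla_1 q_2(\bm{x}^\circ,\bm{x}')$ is Lipschitz with constant $Q_2$, giving a contribution $Q_2\, W_2$.

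\textbf{Main obstacle.} The one delicate step is this last claim: \Cref{asst:lipschitz} bounds only $\bH_1 q_2$, not the mixed derivative $\nabla_2\nabla_1 q_2$. We recover the needed control from the symmetry $q_2(\bm{x},\bm{x}^\circ)=q_2(\bm{x}^\circ,\bm{x})$: it gives $\nabla_2\nabla_1 q_2(\bm{x},\bm{x}') = \nabla_1\nabla_2 q_2(\bm{x}',\bm{x})$ up to transposition. If this identification does not transfer the $Q_2$ bound to the mixed derivative, we would read \Cref{asst:lipschitz} as implicitly including it (this is standard in \cite{nitanda2025propagation}) and keep $Q_2$ as the constant.

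\textbf{Collecting constants.} Summing the four contributions, and using $Q_2\le Q_2 Q_1 + Q_2$ where needed, gives the coefficient
\[
C_2 = (|R_1'(0)|+L_RQ_1+Q_2)\,Q_1 + Q_1^2L_R + Q_2,
\]
which dominates both the spatial and the measure Lipschitz constants, as stated.
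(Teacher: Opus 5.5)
Your argument is the paper's proof: the same split through $\nabla F_0'[\mu](\bm{x}^\circ)$, the Hessian bounds for the spatial increments, and Lipschitz test functions with $W_1\le W_2$ for the measure increments. Your collection of constants is cleaner than the paper's. The paper's last line uses $(|R_1'(0)|+L_RQ_1)Q_1+Q_2=(|R_1'(0)|+L_RQ_1+Q_2)Q_1$, which holds only when $Q_1=1$, although the stated $C_2$ still dominates both constants, as you note.

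The step you flag is a real hole, and the paper's proof has the same hole. It bounds $\|\int\nabla_1 q_2(\bm{x}^\circ,\tilde{\bm{x}})\,\dd(\mu-\mu^\circ)(\tilde{\bm{x}})\|\le Q_2W_2(\mu,\mu^\circ)$ by treating $\tilde{\bm{x}}\mapsto\nabla_1 q_2(\bm{x}^\circ,\tilde{\bm{x}})$ as $Q_2$-Lipschitz, and \Cref{asst:lipschitz} does not provide this.

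Your symmetry rescue cannot work. Symmetry gives $\bH_2 q_2(\bm{x},\bm{x}')=\bH_1 q_2(\bm{x}',\bm{x})$, but it sends the mixed block at $(\bm{x},\bm{x}')$ to the transposed mixed block at $(\bm{x}',\bm{x})$. So a bound on the pure Hessian never reaches the mixed block.

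Positive definiteness does not help either. In $d=1$, take $q_1\equiv0$, $R_1\equiv0$, and
$q_2(x,x')=\int_1^\infty xe^{-tx^2}\,x'e^{-tx'^2}\,\dd t/t=xx'E_1(x^2+x'^2)$, where $E_1(z)=\int_1^\infty e^{-zt}\,\dd t/t$.
\begin{itemize}
\item This is a symmetric positive definite Gram kernel.
\item With $r^2=x^2+x'^2$, one gets $\partial_1 q_2=x'E_1(r^2)-2x^2x'e^{-r^2}/r^2$ and $\partial_1^2 q_2=-6xx'e^{-r^2}/r^2+4x^3x'e^{-r^2}(r^{-2}+r^{-4})$.
\item Both, and $q_2$ itself, are bounded on $\R^2$: the ratios $xx'/r^2$ and $x^3x'/r^4$ are homogeneous of degree zero, and $|x'|E_1(r^2)\le|x'|E_1(x'^2)$, which is bounded. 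So \Cref{asst:lipschitz} holds.
\item Yet $\partial_1 q_2(0,a)=aE_1(a^2)$. For $\mu=\delta_a$, $\mu^\circ=\delta_0$ and $x=x^\circ=0$, the ratio in part 2 equals $E_1(a^2)\to\infty$ as $a\to0$.
\end{itemize}

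So your fallback is not optional: the lemma is false under \Cref{asst:lipschitz} as written. It needs an extra hypothesis such as $\sup_{\bm{x},\bm{x}'}\|\nabla_2\nabla_1 q_2(\bm{x},\bm{x}')\|_{\op}\le Q_2$. This holds automatically for translation-invariant $q_2$, where the mixed block is $-\bH_1 q_2$. With that hypothesis stated explicitly, your argument goes through, as does the paper's.
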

\begin{proof}
From \Cref{asst:lipschitz}, we have $R_1'\left( \int q_1(u, \bm{x}) \dd \mu(\bm{x}) \right) \leq |R_1'(0)| + L_R |\int q_1(u, \bm{x}) \dd \mu(\bm{x})| \leq |R_1'(0)| + L_R Q_1$. So we have, for any $\mu\in\calP_2(\R^d)$ and any $\bm{x}\in\R^d$, 
\begin{align*}
    \nabla F_0'[\mu](\bm{x}) &= \E_{u\sim \rho} \left[ R_1^\prime \left( \int q_1(u, \bm{x}) \dd \mu(\bm{x}) \right) \nabla_2 q_1(u, \bm{x}) \right] + \int \nabla_{1} q_2(\bm{x}, \tilde{\bm{x}}) \dd \mu(\tilde{\bm{x}}) \\
    &\leq \left( |R_1'(0)| + L_R Q_1 \right) Q_1 + Q_2. 
\end{align*}
So we have proved property (i). 
Also, we have for any $\mu,\mu^\circ\in\calP_2(\R^d)$ and any $\bm{x},\bm{x}^\circ\in\R^d$, 
\begin{align*}
    &\| \nabla F_0'[\mu](\bm{x}) -\nabla F_0'[\mu^{\circ}](\bm{x}^{\circ}) \| \\
    &\quad \leq \left\|\E_{u\sim \rho} \left[R_1^\prime \left( \int q_1(u,\bm{x}) \dd \mu(\bm{x}) \right) \nabla_2 q_1(u,\bm{x}) \right] - \E_{u\sim \rho} \left[R_1^\prime \left( \int q_1(u, \bm{x}) \dd \mu^\circ(\bm{x}) \right) \nabla_2 q_1(u, \bm{x}^\circ)  \right] \right\| \\
    &\qquad + \left\| \int \nabla_{1} q_2(\bm{x}, \tilde{\bm{x}}) \dd \mu(\tilde{\bm{x}}) - \int \nabla_{1} q_2(\bm{x}^\circ, \tilde{\bm{x}}) \dd \mu^\circ(\tilde{\bm{x}}) \right\| \\
    &\leq (|R_1'(0)| + L_R Q_1) \E_{u\sim \rho} \Big[\| \nabla_2 q_1(u,\bm{x}) - \nabla_2 q_1(u, \bm{x}^\circ) \|\Big] + Q_1 L_R \E_{u\sim \rho} \left[\left\| \int q_1(u, \bm{x}) \dd (\mu^\circ - \mu)(\bm{x}) \right\| \right] \\ 
    &\qquad + \left\| \int \nabla_{1} q_2(\bm{x}, \tilde{\bm{x}}) \dd \mu(\tilde{\bm{x}}) - \int \nabla_{1} q_2(\bm{x}^\circ, \tilde{\bm{x}}) \dd \mu(\tilde{\bm{x}}) \right\| + \left\| \int \nabla_{1} q_2(\bm{x}^\circ, \tilde{\bm{x}}) \dd(\mu - \mu^\circ)(\tilde{\bm{x}})  \right\| \\
    &\leq (|R_1'(0)| + L_R Q_1) Q_1 \| \bm{x} - \bm{x}^\circ\| + Q_1^2 L_R W_2(\mu,\mu^\circ) + Q_2 \| \bm{x} - \bm{x}^\circ\| + Q_2 W_2(\mu,\mu^\circ) \\
    &= (|R_1'(0)| + L_R Q_1 + Q_2) Q_1 \| \bm{x} - \bm{x}^\circ\| + (Q_1^2 L_R + Q_2) W_2(\mu,\mu^\circ). 
\end{align*}
The proof is thus concluded. 
The second last inequality holds because, if $g:\R^d\to\R$ is a $L$-Lipschitz function, then
$\left\|\int g(\bm{x}) \dd(\mu-\mu^\circ)(\bm{x})\right\|
\leq L W_1(\mu,\mu^\circ)
\leq L W_2(\mu,\mu^\circ)$. 
\end{proof}

\begin{lem}[Convexity]\label{lem:convex}
    Suppose \Cref{asst:lipschitz} holds. The functional $F_0:\calP_2(\R^d)\to\R$ is convex, i.e., for all $\vartheta \in(0,1)$ and for all $\mu,\nu\in\calP_2(\R^d)$, there is $F_0(\vartheta \mu+(1-\vartheta) \nu) \leq \vartheta F_0(\mu)+(1-\vartheta) F_0(\nu)$. 
\end{lem}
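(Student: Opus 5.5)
The plan is to prove convexity of $F_0$ term by term, since it is a sum of two functionals and convexity is preserved under addition. Fix $\mu,\nu\in\calP_2(\R^d)$, $\vartheta\in(0,1)$, and write $\mu_\vartheta := \vartheta\mu + (1-\vartheta)\nu$. The boundedness of $q_1$ and $q_2$ in \Cref{asst:lipschitz} guarantees that all the integrals below are finite, so the manipulations are legitimate.

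\textbf{First term.} For each fixed $u$, the map $\mu\mapsto \int q_1(u,\bm{x})\,\dd\mu(\bm{x})$ is linear (affine) in $\mu$. Hence
\begin{align*}
\int q_1(u,\cdot)\,\dd\mu_\vartheta = \vartheta \int q_1(u,\cdot)\,\dd\mu + (1-\vartheta)\int q_1(u,\cdot)\,\dd\nu.
\end{align*}
Since $R_1$ is convex by \Cref{asst:lipschitz}, composing it with this affine map gives
\begin{align*}
R_1\Big(\int q_1(u,\cdot)\,\dd\mu_\vartheta\Big) \le \vartheta R_1\Big(\int q_1(u,\cdot)\,\dd\mu\Big) + (1-\vartheta) R_1\Big(\int q_1(u,\cdot)\,\dd\nu\Big)
\end{align*}
pointwise in $u$. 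Taking expectation over $u\sim\rho$, which is monotone and linear, yields convexity of the first term.

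\textbf{Second term.} Let $G(\mu) = \frac12\iint q_2\,\dd\mu\,\dd\mu$. Expanding the bilinear form gives the identity
\begin{align*}
\vartheta G(\mu) + (1-\vartheta)G(\nu) - G(\mu_\vartheta) = \frac{\vartheta(1-\vartheta)}{2}\iint q_2(\bm{x},\bm{x}^\circ)\,\dd(\mu-\nu)(\bm{x})\,\dd(\mu-\nu)(\bm{x}^\circ).
\end{align*}
It therefore suffices to show that $\iint q_2\,\dd\eta\,\dd\eta \ge 0$ for the finite signed measure $\eta = \mu-\nu$.

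\textbf{Main obstacle.} The step I expect to need care is passing from positive definiteness of $q_2$ to nonnegativity of this double integral. Positive definiteness in the sense of \citep[Def.~4.15]{steinwart2008support} only gives $\sum_{i,j}c_ic_j q_2(\bm{x}_i,\bm{x}_j)\ge 0$ for finite sums. I would route the argument through the RKHS feature map $\Phi(\bm{x}) = q_2(\bm{x},\cdot)\in\calH_{q_2}$. Because $q_2$ is bounded, $\|\Phi(\bm{x})\|^2 = q_2(\bm{x},\bm{x})\le Q_2$, so $\Phi$ is Bochner integrable against $|\eta|$; measurability follows from continuity of $q_2$, which is implied by its bounded gradient. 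The kernel mean embedding $m_\eta = \int \Phi\,\dd\eta$ is then well defined, and
\begin{align*}
\iint q_2\,\dd\eta\,\dd\eta = \|m_\eta\|^2_{\calH_{q_2}} \ge 0.
\end{align*}
An alternative route is to approximate $\eta$ by finitely supported signed measures and pass to the limit using bounded continuity of $q_2$. Either way the second term is convex, and adding the two terms proves the lemma.
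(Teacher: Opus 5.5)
Your proof is correct and follows the same decomposition as the paper. The first term is a convex $R_1$ composed with a functional that is linear in $\mu$ (then averaged over $u\sim\rho$), and the second term is a quadratic functional of $\mu$. The paper simply asserts that the quadratic term is ``hence convex.'' You make explicit that this needs the standing positive-definiteness of $q_2$, and you prove $\iint q_2\,\dd(\mu-\nu)\,\dd(\mu-\nu)\ge 0$ via the kernel mean embedding, which fills in a step the paper leaves implicit.
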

\begin{proof}
    The proof is concluded by noticing that the first component of $F$ is a composition of a convex function $R_1$ and a linear functional $\mu\mapsto \int q_1(u, \bm{x}) \dd \mu(\bm{x})$; and noticing that the second component of $F$ is a quadratic functional of $\mu$ and hence convex. 
\end{proof}
\begin{defi}[Bregman divergence]
    For any $\mu, \mu^{\circ} \in \calP_2(\R^d)$, define the Bregman divergence of the functional $F$ as $B_{F}(\mu, \mu^{\circ}) := F(\mu)-F(\mu^{\circ}) -\int  F'[\mu^{\circ}](\bm{x}) \dd(\mu-\mu^{\circ})(\bm{x})$. 
\end{defi}
\begin{lem}[Leave-one-out stability]\label{lem:leave_one_out}
    Suppose \Cref{asst:lipschitz} holds. There exists a constant $B = 2L_R Q_1^2 + Q_2$ such that for any $\scrx = \{\bm{x}^{(i)}\}_{i=1}^N \in (\R^{d})^{N}$, any $\bm{x} \in \R^d$, and any $i \in\{1,2, \ldots, N\}$,
\begin{align*}
    B_{F_0}\left( \frac{1}{N}\sum_{i=1}^N \delta_{\bm{x}^{(i)}}, \frac{1}{N}\sum_{i=1}^N \delta_{\bm{x}^{(i)}} - \frac{1}{N}\delta_{\bm{x}^{(i)}} + \frac{1}{N}\delta_{\bm{x}} \right) \leq \frac{B}{N^2} . 
\end{align*}
\begin{proof}
    Recall functional $F_0$ of the form $
    F_0(\mu)= \E_{u\sim\rho}[R_1\left( \int q_1(u,\bm{x}) \dd \mu(\bm{x}) \right)] + \iint q_2(\bm{x}, \tilde{\bm{x}}) \dd \mu(\bm{x}) \dd \mu(\tilde{\bm{x}})$ in Eq.~\eqref{eq:functional_F}. 
    We consider the Bregman divergence of the two components separately. Denote the first component by $F_1(\mu)= \E_{u\sim\rho}[R_1\left( \int q_1(u,\bm{x}) \dd \mu(\bm{x}) \right)]$.  
    \begin{align*}
        &B_{F_1}\left( \frac{1}{N}\sum_{j=1}^N \delta_{\bm{x}^{(j)}}, \frac{1}{N}\sum_{j\neq i}^N \delta_{\bm{x}^{(j)}} + \frac{1}{N}\delta_{\bm{x}} \right) \\
        &\qquad = \E_{u\sim\rho}\left[ R_1\left( \frac{1}{N} \sum_{j=1}^N q_1(u, \bm{x}^{(j)}) \right) \right] - \E_{u\sim\rho}\left[ R_1\left( \frac{1}{N}\sum_{j\neq i}^N q_1(u, \bm{x}^{(j)}) + \frac{1}{N} q_1(u, \bm{x}) \right) \right]\\
        &\qquad\qquad - \E_{u\sim\rho}\left[R_1'\left( \frac{1}{N}\sum_{j\neq i}^N q_1(u,\bm{x}^{(j)}) + \frac{1}{N} q_1(u,\bm{x}) \right)\left( \frac{1}{N} q_1(u,\bm{x}^{(i)}) - \frac{1}{N} q_1(u,\bm{x}) \right) \right] \\
        &\qquad \leq \frac{L_R}{2} \E_{u\sim\rho}\left[\left( \frac{1}{N} q_1(u,\bm{x}^{(i)}) - \frac{1}{N} q_1(u,\bm{x})\right)^2 \right] \leq \frac{2L_R Q_1^2}{N^2} . 
    \end{align*}
    The second to last inequality holds because $R_1$ is $L_R$-Lipschitz. Hence $R_1(y)- R_1(y^\circ) - R_1'(y^\circ)(y-y^\circ) \leq \frac{L_R}{2}(y -y^\circ)^2$ for any $y,y^\circ\in\R$. The last inequality holds by the uniform bound on $q_1$ in \Cref{asst:lipschitz}. 

    Next, we consider the second component in $F$, denoted by $F_2(\mu)= \iint q_2(\bm{x}, \tilde{\bm{x}}) \dd \mu(\bm{x}) \dd \mu(\tilde{\bm{x}})$. 
    Note that the Bregman divergence equals
    \begin{align*}
        &\iint q_2(\bm{x}, \tilde{\bm{x}}) \dd \mu(\bm{x}) \dd \mu(\tilde{\bm{x}}) - \iint q_2(\bm{x}, \tilde{\bm{x}}) \dd \mu^\circ(\bm{x}) \dd \mu^\circ(\tilde{\bm{x}}) - 2 \iint  q_2(\bm{x}, \tilde{\bm{x}}) \dd \mu^\circ(\tilde{\bm{x}}) \dd(\mu-\mu^{\circ})(\bm{x}) \\
        &= \iint q_2(\bm{x}, \tilde{\bm{x}}) \dd \mu(\bm{x}) \dd \mu(\tilde{\bm{x}}) + \iint  q_2(\bm{x}, \tilde{\bm{x}}) \dd\mu(\bm{x}) \dd \mu^\circ(\tilde{\bm{x}}) - 2 \iint  q_2(\bm{x}, \tilde{\bm{x}}) \dd \mu^\circ(\tilde{\bm{x}}) \dd\mu(\bm{x}) \\
        &= \iint q_2(\bm{x}, \tilde{\bm{x}}) \dd(\mu-\mu^{\circ})(\bm{x}) \dd(\mu-\mu^{\circ})(\tilde{\bm{x}}) . 
    \end{align*}
    Therefore, from the uniform bound on $q_2$ in \Cref{asst:lipschitz}, we have 
    \begin{align*}
        B_{F_2} \left( \frac{1}{N}\sum_{j=1}^N \delta_{\bm{x}^{(j)}}, \frac{1}{N}\sum_{j\neq i}^N \delta_{\bm{x}^{(j)}} + \frac{1}{N}\delta_{\bm{x}} \right) \leq \frac{Q_2}{N^2} . 
    \end{align*}
    The proof is concluded by combining the above two bounds. 
\end{proof}
\end{lem}

\begin{lem}[Variance for random subsampling] \label{lem:random_thinning}
Let $X_1,\ldots,X_N$ be $N$ not necessarily i.i.d. random variables in $\R^d$, and let
$\tilde X_1,\ldots,\tilde X_M$ be sampled i.i.d \emph{with replacement} from
$\{X_1,\ldots,X_N\}$.  
For any function $f:\R^d\to\R$ with $\Var(f(X_1))<\infty$, we have
\[
\Var\!\left(
\frac{1}{M}\sum_{j=1}^M f(\tilde X_j) - \frac{1}{N}\sum_{i=1}^N f(X_i)
\right) = \frac{1}{M} \cdot \mathbb{E}\left[\frac{1}{N} \sum_{i=1}^N\left(f\left(X_i\right)-\frac{1}{N} \sum_{\ell=1}^N f\left(X_{\ell}\right)\right)^2\right] .
\]
\end{lem}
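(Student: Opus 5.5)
Write $Y_i := f(X_i)$ and $\bar Y := \frac{1}{N}\sum_{i=1}^N Y_i$. The plan is to condition on the whole point set $\{X_1,\ldots,X_N\}$ and then use the law of total variance. Let $D := \frac{1}{M}\sum_{j=1}^M f(\tilde X_j) - \bar Y$. Given $\{X_i\}$, the draws $f(\tilde X_1),\ldots,f(\tilde X_M)$ are i.i.d., each uniform on the multiset $\{Y_1,\ldots,Y_N\}$. This gives
\[
\E\big[f(\tilde X_j)\,\big|\,X_{1:N}\big]=\bar Y ,
\qquad
\Var\big(f(\tilde X_j)\,\big|\,X_{1:N}\big)=\frac{1}{N}\sum_{i=1}^N (Y_i-\bar Y)^2 .
\]
Hence $\E[D\mid X_{1:N}]=0$. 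Since the draws are conditionally independent and $\bar Y$ is fixed given $X_{1:N}$, we also get $\Var(D\mid X_{1:N}) = \frac{1}{M}\cdot\frac{1}{N}\sum_{i}(Y_i-\bar Y)^2$.

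Next apply the law of total variance, $\Var(D)=\E[\Var(D\mid X_{1:N})]+\Var(\E[D\mid X_{1:N}])$. The second term vanishes because the conditional mean is identically zero. This leaves exactly $\frac{1}{M}\E\big[\frac1N\sum_i(Y_i-\bar Y)^2\big]$, which is the claimed identity.

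The only real obstacle is integrability, so that the conditional variances and the total-variance decomposition are justified. The lemma assumes only $\Var(f(X_1))<\infty$, and since the $X_i$ need not be identically distributed, I will read this as (or add the mild requirement) $\E[Y_i^2]<\infty$ for every $i$. Under that, $\frac1N\sum_i(Y_i-\bar Y)^2\le \frac1N\sum_i Y_i^2$ is integrable. Each $f(\tilde X_j)$ is one of the $Y_i$, so $D$ is square integrable and all steps are legitimate. No independence among the $X_i$ is needed anywhere, because the randomness driving the variance comes entirely from resampling with replacement.
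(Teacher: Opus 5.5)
Your proposal is correct and matches the paper's proof essentially step for step: condition on $X_{1:N}$, observe that the resampled values are conditionally i.i.d.\ with mean $\frac{1}{N}\sum_i f(X_i)$ so the conditional mean of the difference is zero, compute the conditional variance, and conclude by the law of total variance. Your integrability remark (requiring $\E[f(X_i)^2]<\infty$ for every $i$, since the hypothesis on $X_1$ alone does not control the other $X_i$ when they are not identically distributed) is a legitimate tightening that the paper's proof leaves implicit.
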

\begin{proof}
Let
\[
\Delta \coloneqq \frac{1}{M}\sum_{j=1}^M f(\tilde X_j)\;-\;\frac{1}{N}\sum_{i=1}^N f(X_i).
\]
Sampling with replacement means there exist i.i.d.\ indices $I_1,\ldots,I_M\sim \mathrm{Unif}\{1,\ldots,N\}$,
independent of $(X_i)_{i=1}^N$, such that $\tilde X_j=X_{I_j}$. Conditioning on $(X_i)_{i=1}^N$,
the variables $f(X_{I_1}),\ldots,f(X_{I_M})$ are i.i.d.\ with mean $\frac1N\sum_{i=1}^N f(X_i)$, hence
$\E[\Delta\mid X_{1:N}]=0$ and
\begin{align*}
\Var(\Delta\mid X_{1:N})
&=\Var\!\left(\frac{1}{M}\sum_{j=1}^M f(X_{I_j})\Bigm|X_{1:N}\right)
=\frac{1}{M}\left(\frac1N\sum_{i=1}^N f(X_i)^2-\Big(\frac1N\sum_{i=1}^N f(X_i)\Big)^2\right)
\\
&=\frac{1}{M}\cdot\frac1N\sum_{i=1}^N\Big(f(X_i)-\frac1N\sum_{\ell=1}^N f(X_\ell)\Big)^2.
\end{align*}
By the law of total variance and $\E[\Delta\mid X_{1:N}]=0$,
\[
\Var(\Delta)=\E[\Var(\Delta\mid X_{1:N})]
=\frac{1}{M}\,\E\!\left[\frac1N\sum_{i=1}^N\Big(f(X_i)-\frac1N\sum_{\ell=1}^N f(X_\ell)\Big)^2\right] . 
\]
\end{proof}

\begin{rem}[Variance for random subsampling and kernel thinning] \label{rem:random_vs_kt}
    \Cref{lem:random_thinning} shows that, when $M=\lceil\sqrt{N}\rceil$, if the expected empirical variance of \(\{f(X_i)\}_{i=1}^N\) is uniformly bounded away from zero and infinity, the variance induced by random thinning scales as $\Omega(N^{-\frac{1}{2}})$. Substituting this rate into Theorem 2 of \cite{suzuki2023convergence} yields an additional error term of order $\Omega(N^{-\frac{1}{2}})$ in the convergence upper bound of MFLD with random thinning at each iteration. 
    In contrast, the additional error introduced by \ktsc scales as $\tilde{\mathcal{O}}(N^{-1})$ as proved in \Cref{thm:thinned_mfld}. 
\end{rem}

\begin{lem}[Pro posterior] \label{lem:post_bayes_lemma}
    Let $\varkappa:\calY\times\calY\to\R$ be a bounded symmetric positive semi-definite reproducing kernel. Let $q_2(\bm{x},\bm{x}^\circ) = \iint \varkappa(y, y') p_{\bm{x}}(y) p_{\bm{x}^\circ}(y') \dd y \dd y'$. 
    Suppose that $\iint |\varkappa(y, y')| \sup_{\bm{x}\in\R^d} |p_{\bm{x}}(y)| \dd y \dd y' < \infty$ and that $\iint |\varkappa(y, y')| \sup_{\bm{x}\in\R^d}\|\nabla_{\bm{x}} p_{\bm{x}}(y)\| \dd y \dd y' < \infty$. 
    Suppose that the mapping $\bm{x} \mapsto p_{\bm{x}}(y) \in\calH_k$ with $\calH_k$ being an RKHS associated with another bounded kernel $k:\R^d\times\R^d\to\R$ and $\sup_{y\in\calY} \|\bm{x} \mapsto p_{\bm{x}}(y)\|_{\calH_k} < \infty$. Then, we have $\nabla_1 q_2(\bm{x}, \cdot) \in\calH_k^{\otimes d}$ for all $\bm{x}$. 
\end{lem}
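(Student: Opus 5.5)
We prove \Cref{lem:post_bayes_lemma} by writing $q_2$ as an integral of RKHS elements and pushing the gradient onto the $\bm{x}$ argument only. Set $g_y := (\bm{x}\mapsto p_{\bm{x}}(y)) \in \calH_k$ and $G := \sup_{y}\|g_y\|_{\calH_k} < \infty$. Then
\[
q_2(\bm{x},\bm{x}^\circ) = \iint \varkappa(y,y')\, p_{\bm{x}}(y)\, g_{y'}(\bm{x}^\circ)\, \dd y\, \dd y' .
\]
The first step is to differentiate under the integral in $\bm{x}$. The integrability assumption $\iint |\varkappa(y,y')| \sup_{\bm{x}}\|\nabla_{\bm{x}} p_{\bm{x}}(y)\| \dd y\, \dd y' < \infty$ supplies an $\bm{x}$-independent dominating function, so dominated convergence (via the mean value theorem) gives, for each $m\in\{1,\ldots,d\}$,
\[
\partial_{1,m} q_2(\bm{x},\bm{x}^\circ) = \iint \varkappa(y,y')\, \partial_{x_m} p_{\bm{x}}(y)\, g_{y'}(\bm{x}^\circ)\, \dd y\, \dd y' .
\]

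The second step is to show that, for fixed $\bm{x}$, the map $\bm{x}^\circ \mapsto \partial_{1,m} q_2(\bm{x},\bm{x}^\circ)$ lies in $\calH_k$. To this end, define the weight $w_m(y,y') := \varkappa(y,y')\,\partial_{x_m} p_{\bm{x}}(y)$. By the second assumption, $\iint |w_m| < \infty$. Now consider the Bochner integral
\[
h_m := \iint w_m(y,y')\, g_{y'}\, \dd y\, \dd y' \in \calH_k .
\]
This integral is well defined because $\|g_{y'}\|_{\calH_k} \le G$ and $w_m$ is integrable, which gives $\|h_m\|_{\calH_k} \le G \iint |w_m|$. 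One technical point is that Bochner integrability also requires measurability of $y' \mapsto g_{y'}$ into $\calH_k$. I would get this from weak measurability: $\langle g_{y'}, k(\bm{z},\cdot)\rangle = p_{\bm{z}}(y')$ is measurable in $y'$, and Pettis' theorem applies provided $\calH_k$ is separable. Separability holds, for instance, when $k$ is continuous; I would add that as a mild standing assumption if it is needed.

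Evaluation at $\bm{x}^\circ$ is the continuous functional $\langle\cdot, k(\bm{x}^\circ,\cdot)\rangle$, so it commutes with the Bochner integral. Hence $h_m(\bm{x}^\circ) = \partial_{1,m} q_2(\bm{x},\bm{x}^\circ)$, and therefore $\nabla_1 q_2(\bm{x},\cdot) = (h_1,\ldots,h_d) \in \calH_k^{\otimes d}$.

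The same norm bound also gives $\sup_{\bm{x}}\|\nabla_1 q_2(\bm{x},\cdot)\|_{\calH_k^{\otimes d}} \le \sqrt{d}\, G \iint |\varkappa(y,y')| \sup_{\bm{x}}\|\nabla_{\bm{x}} p_{\bm{x}}(y)\|\, \dd y\, \dd y'$. This means $\scrQ_2 < \infty$, which is the quantity \Cref{asst:kt_rkhs} actually uses.

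I expect the main obstacle to be the measurability and Bochner-integral justification, together with the interchange of differentiation and integration. The algebra itself is immediate. If the Bochner route is awkward, the fallback is a Riesz-representation argument: the linear functional $f \mapsto \iint w_m(y,y')\, \langle f, g_{y'}\rangle\, \dd y\, \dd y'$ is bounded on $\calH_k$ with norm at most $G\iint|w_m|$. Its Riesz representer $h_m$ then satisfies $h_m(\bm{x}^\circ) = \partial_{1,m} q_2(\bm{x},\bm{x}^\circ)$ when $f = k(\bm{x}^\circ,\cdot)$ is plugged in. This argument needs only scalar measurability, which is given.
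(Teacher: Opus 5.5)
Your proposal is correct and follows the same route as the paper. Both arguments first differentiate under the integral using the integrability of $\sup_{\bm x}\|\nabla_{\bm x}p_{\bm x}(y)\|$ (the paper cites Dudley's Theorem A.7 where you use dominated convergence), and then realize $\nabla_1 q_2(\bm x,\cdot)$ as a Bochner integral of $y'\mapsto p_{\cdot}(y')$ in $\calH_k$, with norm at most $\sup_{y'}\|p_\cdot(y')\|_{\calH_k}\iint|\varkappa(y,y')|\,\|\nabla_{\bm x}p_{\bm x}(y)\|\,\dd y\,\dd y'$. Your version is more careful than the paper's, since you make explicit the pointwise identification via evaluation functionals and the measurability issue, where your Riesz fallback neatly avoids any separability assumption; the only implicit step is that your dominating function also needs $|p_{\bm x^\circ}(y')|\le\sqrt{\kappa}\,\sup_{y'}\|p_\cdot(y')\|_{\calH_k}$, which follows immediately from the reproducing property.
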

\begin{proof}
    First, we want to justify that we can interchange derivative and integration here: 
    \begin{align*}
        \nabla_{\bm{x}} \iint \varkappa(y, y') p_{\bm{x}}(y) \dd y \dd y' = \iint \varkappa(y, y') \nabla_{\bm{x}} p_{\bm{x}}(y) \dd y \dd y' . 
    \end{align*}
    To this end, we are going to apply Theorem A.7 in \cite{dudley2014uniform}. Since $\iint \sup_{\bm{x}\in\R^d} |\varkappa(y, y') p_{\bm{x}}(y)| \dd y \dd y' < \infty$, so (A.2a) in \cite{dudley2014uniform} is satisfied; since $\iint |\varkappa(y, y')| \sup_{\bm{x}\in\R^d} \|\nabla_{\bm{x}} p_{\bm{x}}(y)\| \dd y \dd y'< \infty$, so (A.2b) and (A.8) in \cite{dudley2014uniform} is satisfied. Therefore, all conditions in Theorem A.7 in \cite{dudley2014uniform} hold, and we have
    \begin{align*}
        \nabla_{1} q_2(\bm{x}, \bm{x}^\circ) = \iint \varkappa(y, y') \nabla_{\bm{x}} p_{\bm{x}}(y) p_{\bm{x}^\circ}(y') \; \dd y \dd y' , \quad \forall \ell \in \{1, \ldots, d\} .  
    \end{align*}
    From the Bochner intergrability of $(y, y')\mapsto \varkappa(y, y') \nabla_{\bm{x}} p_{\bm{x}}(y) p_{\cdot}(y')$ for any $\bm{x}\in\R^d$ with respect to the Lebesgue measure over $\calY\times\calY$~\citep[ Definition A.5.20]{steinwart2008support}, we have 
    \begin{align*}
        \| \nabla_1 q_2(\bm{x}, \cdot) \|_{\calH_k^{\otimes d}} &= \left\| \iint \varkappa(y, y') \nabla_{\bm{x}} p_{\bm{x}}(y) p_{\cdot}(y') \dd y \dd y' \right\|_{\calH_k^{\otimes d}} \leq \iint |\varkappa(y, y')| \|\nabla_{\bm{x}} p_{\bm{x}}(y)\| \left\| p_{\cdot}(y')\right\|_{\calH_k} \dd y \dd y' \\
        &\leq \sup_{y'} \left\| p_{\cdot}(y')\right\|_{\calH_k} \cdot \iint |\varkappa(y, y')| \|\nabla_{\bm{x}} p_{\bm{x}}(y)\| \dd y \dd y' < \infty . 
    \end{align*}
    So we have concluded the proof that $\nabla_1 q_2(\bm{x}, \cdot) \in\calH^{\otimes d}$ for all $\bm{x}$. 
\end{proof}

\begin{lem}[RKHS norm of mean-field neural network] \label{lem:rhks_norm_1}
    Let $\{z_s\}_{s=1}^n$ be fixed. 
    Define $k(\bm{x},\bm{x}^\circ) = \sum_{s=1}^n \Psi(z_s, \bm{x})\Psi(z_s, \bm{x}^\circ)$, and let $\calH_k$ be its associated finite-rank RKHS. Then, $\|\Psi(z_s, \cdot)\|_{\calH_k} \leq 1$ for any $s\in\{1, \ldots, n\}$. 
\end{lem}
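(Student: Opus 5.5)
The plan is to realise $\calH_k$ concretely as a finite-dimensional feature space and compute the norm of $\Psi(z_s,\cdot)$ through the standard characterisation of RKHSs generated by explicit feature maps. Define the feature map $\phi:\R^d\to\R^n$ by $\phi(\bm{x}) = (\Psi(z_1,\bm{x}),\ldots,\Psi(z_n,\bm{x}))^\top$, so that $k(\bm{x},\bm{x}^\circ)=\langle \phi(\bm{x}),\phi(\bm{x}^\circ)\rangle_{\R^n}$. By the feature-map characterisation of RKHSs (e.g.\ \citet[Thm.~4.21]{steinwart2008support}), $\calH_k=\{f_w:=\langle w,\phi(\cdot)\rangle : w\in\R^n\}$, with norm
\begin{align*}
\|f\|_{\calH_k}=\inf\{\|w\|_2 : f = f_w\}.
\end{align*}
This is the finite-rank RKHS referred to in the statement.

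Next I would exhibit a representing coefficient vector for $\Psi(z_s,\cdot)$. Taking $w=e_s$, the $s$-th standard basis vector of $\R^n$, gives $f_{e_s}(\bm{x})=\langle e_s,\phi(\bm{x})\rangle=\Psi(z_s,\bm{x})$ for every $\bm{x}$. Hence $\Psi(z_s,\cdot)\in\calH_k$. Because the RKHS norm is the infimum over all representing $w$, we get $\|\Psi(z_s,\cdot)\|_{\calH_k}\le\|e_s\|_2=1$.

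The only subtle point is that the features $\Psi(z_s,\cdot)$ need not be linearly independent, so the representation is not unique. This is exactly why the bound is an inequality rather than an equality, and why I would rely on the infimum characterisation rather than on an orthonormality claim. I would include a one-line justification of that characterisation, or simply cite it. Concretely, let $\mathcal N=\{w:f_w\equiv 0\}$. Then $\calH_k$ is isometric to $\mathcal N^\perp$ with the Euclidean norm. Checking the reproducing property, $\langle f_w,k(\bm{x},\cdot)\rangle=\langle P_{\mathcal N^\perp}w,\phi(\bm{x})\rangle=f_w(\bm{x})$, confirms this. The minimal-norm representer of $\Psi(z_s,\cdot)$ is then $P_{\mathcal N^\perp}e_s$, whose norm is at most $1$.
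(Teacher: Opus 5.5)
Your proposal is correct and follows essentially the same route as the paper: both write elements of $\calH_k$ as $\sum_{r=1}^n a_r \Psi(z_r,\cdot)$, use the norm $\inf\{\|a\|_2\}$ over all such representations, and take $a=e_s$ to obtain the bound. Your projection onto $\mathcal N^\perp$, with its reproducing-property check, is a correct self-contained justification of the infimum characterisation, which the paper simply asserts.
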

\begin{proof}
    Since $\{z_s\}_{s=1}^n$ is fixed, the kernel $k$ is deterministic. 
    By the finite-rank construction of $k$, every $f\in\calH_k$ can be written as
    $f(\cdot)=\sum_{r=1}^n a_r\Psi(z_r,\cdot)$, with RKHS norm
    $
        \|f\|_{\calH_k}
        =
        \inf\left\{
        \|a\|_2:
        f(\cdot)=\sum_{r=1}^n a_r\Psi(z_r,\cdot)
        \right\}.
    $
    In particular, taking $f(\cdot)=\Psi(z_s,\cdot)$, the coefficient vector
    $a=e_s$ is an admissible representation, where $e_s$ is the $s$-th standard
    basis vector in $\mathbb{R}^n$. Hence
    \[
    \|\Psi(z_s,\cdot)\|_{\calH_k}
        =
        \inf\left\{
        \|a\|_2:
        \Psi(z_s,\cdot)=\sum_{r=1}^n a_r\Psi(z_r,\cdot)
        \right\} \le
        \|e_s\|
        =
        1.
    \]
    Therefore $\|\Psi(z_s,\cdot)\|_{\calH_k}\le 1$. 
\end{proof}

\section{Additional Experimental Details}\label{sec:details}
Here, we provide additional experimental details for \Cref{sec:experiments}.

\textbf{Mean Field Neural Networks}
The teacher network is defined as $h_{\text{teacher}}(z)= \frac{1}{N_0} \sum_{j=1}^{N_0} \Psi(z, \bm{w}^{(j)})$, where we set the number of teacher neurons to $N_0=100$. 
The parameters $\{\bm{w}^{(j)}\}_{j=1}^{N_0}$ are sampled independently from a mixture of 10 Gaussian distributions in $\mathbb{R}^{12}$ in order to induce heterogeneity across neurons.
Specifically, we first sample a fixed set of Gaussian mode centers from $\mathcal{N}(0,4)$.
Each teacher neuron is then independently assigned to one of these modes, and its parameters are obtained by adding small isotropic Gaussian noise drawn from $\mathcal{N}(0,0.2)$ around the corresponding mode center.
Each parameter vector $\bm{w}^{(j)}$ consists of a first-layer weight $\bm{w}_1^{(j)} \in \mathbb{R}^{d-1}$, a first-layer bias $\bm{b}_1^{(j)} \in \mathbb{R}$, and a second-layer weight $\bm{w}_2^{(j)} \in \mathbb{R}$.
The first- and second-layer weights are rescaled by a common factor of 0.8 to control the overall magnitude of the network output.
In this experiment, we take $\Psi(z, \bm{w}) = \bm{w}_2 a(\bm{w}_1^\top z + \bm{b}_1)$ where the activation $a$ is ReLU. To ensure that \Cref{asst:lipschitz} is satisfied, we apply a smooth clipping to the network output at levels $\pm 1e3$. 

The dataset is generated with additive Gaussian noise corruption: $y = h_{\text{teacher}}(z) + \varsigma$ with $\varsigma\sim\calN(0, 10^{-4})$, where the covariates $z$ are drawn from a Gaussian distribution $\calN(0, 10^{-4})$. 
The student network shares the same functional form as the teacher,
$h_{\text {student }}(z)=\frac{1}{N} \sum_{j=1}^N \Psi(z, \bm{x}^{(j)})$. 
The student parameters $\{\bm{x}^{(j)}\}_{j=1}^N$ are initialized independently from $\mathcal{N}(0, 0.05)$ and are subsequently trained using \emph{noisy} gradient descent on the regularized empirical squared loss.
During training, each update of a student particle uses a freshly generated batch of samples, corresponding to an online teacher–student learning setting~\citep{ren2025emergence,arous2021online}.

We employ $\ell_2$-regularization with strength $\zeta=10^{-4}$, and inject additive Gaussian noise of scale $\sigma=10^{-3}$ at each iteration. 
For a consistent comparison across methods, we use the same step size $\gamma=0.01$ throughout all baselines. 
All methods are trained for $100$ epochs. 

The Sobolev kernel used in $\texttt{KT-MFLD}$ is the sum of $k(\bm{x}, \bm{y}) = \sum_{s=1}^3 k(\bm{x}, \bm{y}; s)$ where for each $s\in\{1,2,3\}$, 
\begin{align}\label{eq:sobolev}
    k(\bm{x}, \bm{y}; s) := -1 + \prod_{j=1}^d \left(1+ \frac{(-1)^{s-1} (2\pi)^{2s}}{(2s)!} B_{2 s}\left( \{\bm{x}_j\} -\{\bm{y}_j\} \right) \right) .
\end{align}
In the above definition, $B_{2s}$ denotes the Bernoulli polynomial of order $2s$ and $\{x\} = x - \lfloor x \rfloor$ denotes the fractional part of $x$. This kernel has been widely used in Quasi-Monte Carlo~\citep{dick2014discrepancy} and density estimation~\citep{kazashi2023density}. 
It admits an efficient Cython implementation in the \texttt{goodpoints} package. 
Specifically, \ktsc corresponds to the function \texttt{goodpoints.compress.compress\_kt} in the \texttt{goodpoints} package with the \textsc{skip\_swap} flag set to \textsc{True}, whereas \ktc corresponds to the same function with the \textsc{skip\_swap} flag set to \textsc{False}.
The failure probability is set to be $0.5$ for both \ktsc and \ktc.

\begin{rem}[On the $\Omega(N^2)$ cost of training mean-field neural networks] \label{rem:precompute}
In the context of mean-field neural networks, the update for each particle $\bm{x}_t^{(i)}$ at iterate $t$ is 
\begin{talign*}
    \E_{(z,y) \sim \rho} \left[ R_1^\prime \left( \frac{1}{N}\sum_{j=1}^N q_1(z, \bm{x}_t^{(j)}) , y \right) \nabla_2 q_1(z, \bm{x}_t^{(i)}) \right] . 
\end{talign*}
When the data distribution is empirical $\rho=\frac{1}{n} \sum_{s=1}^n \delta_{(z_s,y_s)}$, a pre-computation strategy exists.  Specifically, one may first compute
$R_1^\prime ( \frac{1}{N}\sum_{j=1}^N q_1(z_s, \bm{x}_t^{(j)}) , y_s)$ for all $s\in\{1, \ldots, n\}$ in $\Omega(N)$ time and save these quantities in memory. 
Given these precomputed values, the update for each particle can then be evaluated in $\calO(1)$, yielding an overall $\calO(N)$ cost per iteration. 
This strategy therefore reduces the cost from $\Omega(N^2)$ to $\calO(N)$.
However, this pre-computation strategy is not applicable in our online setting. In particular, for each particle at each iteration, a fresh batch of data samples is drawn from the teacher network, so the quantities above must be recomputed and cannot be shared across particles. 
Hence, the $\Omega(N^2)$ cost is explicit.
\end{rem}

\textbf{Quantization with MMD Minimization} 
We follow the experimental setup of \cite[Section 5.1]{chen2025stationary}, where $\nu$ is chosen to be a mixture of Gaussian distributions on $\R^2$ and the goal is to find a good approximation of $\nu$ (in the sense of MMD) using a finite number of particles. 
The kernel $\varkappa$ used to compute the MMD is a Gaussian kernel with a fixed lengthscale $1$, and the same kernel is used in both versions of \texttt{KT-MFLD}. The failure probability is set to be $0.5$ for both \ktsc and \ktc. 
It admits an efficient Cython implementation in the \texttt{goodpoints} package. 
For a consistent comparison across methods, we use the same step size $\gamma=1.0$ throughout all baselines.

\textbf{PrO Posterior Inference}
We consider the Lotka–Volterra model (LVM), which captures the cyclical dynamics arising from predation and reproduction~\citep{wangersky1978lotka}.
The deterministic dynamics are governed by the coupled ordinary differential equations
\begin{align*}
\begin{aligned}
\frac{\dd u_1}{\dd \tau} &= \mathfrak{a} u_1 - \mathfrak{b} u_1 u_2,
\qquad u_1(0) = \xi_1, \\
\frac{\dd u_2}{\dd \tau} &= \mathfrak{c} u_1 u_2 - \mathfrak{d} u_2,
\qquad u_2(0) = \xi_2,
\end{aligned}
\end{align*}
where $u_1,u_2$ represent the prey and predator populations, respectively.

Observations are generated instead from a stochastic Lotka–Volterra model, in which the above dynamics are perturbed by stochastic noise, yielding a system of stochastic differential equations ($\epsilon_1= \epsilon_2 =0.4$):
\begin{align*}
\begin{aligned}
\dd u_1 &= (\mathfrak{a} u_1 - \mathfrak{b} u_1 u_2) \dd \tau + \epsilon_1 \dd B_\tau, 
\qquad u_1(0) = \xi_1, \\
\dd u_2 &= (\mathfrak{c} u_1 u_2 - \mathfrak{d} u_2) \dd \tau + \epsilon_2 \dd B_\tau, \qquad u_2(0) = \xi_2,
\end{aligned}
\end{align*}
Here, the additional stochasticity is used to represent the complexities of real predator-prey interactions that are not captured by the simple ODE model. The data-generating parameters that we used for the stochastic LVM were $\mathfrak{a}=\operatorname{logit}^{-1}(-2)$, $\mathfrak{b}=\operatorname{logit}^{-1}(-4)$, $\mathfrak{c}=0.4$, $\mathfrak{d}=0.02$.  
The initial prey and predator population were $\xi_1=10$ and $\xi_2=10$. The SDE model  with intrinsic noise $\epsilon_1=\epsilon_2=0.4$ was discretised for numerical simulation using the reversible Heun's method~\citep{kloeden1977numerical} with time step $\mathrm{d} \tau = 0.01$ and simulated from $\tau=0$ to $\tau=60$. 
Data were extracted at times $\tau_{1: n}$ ranging from $0$ to $60$ in time increments of $1.0$, with Gaussian measurement noise of variance $\sigma^2=1$ added. 
The statistical model used $P_{\bm{x}}$ is outlined in the main text, where $\bm{u}_{\bm{x}} (\tau_i)$ is the outcome of the deterministic LVM simulated until time $\tau_i$. Specifically, the parameters $\bm{x}=[\bm{x}_1, \bm{x}_2]$ hence $d=2$ are $\bm{x}_1:=\operatorname{logit}(\alpha)$ and $\bm{x}_2=\operatorname{logit}(\beta / \alpha)$. The rest of the coefficients $\mathfrak{c}, \mathfrak{d}$ for the deterministic LVM remains the same as stochastic LVM. 

The kernel used in both versions of $\texttt{KT-MFLD}$ is the sum of $k(\bm{x}, \bm{y}) = \sum_{s=1}^3 k(\bm{x}, \bm{y}; s)$ where $k(\bm{x}, \bm{y}; s)$ is the same as defined above in Eq.~\eqref{eq:sobolev}. 
The failure probability is set to be $0.5$ for both \ktsc and \ktc. 
The initial particles are drawn from $\calN(0, 0.1)$.

\textbf{Mean-field Games}
We solve the Hamilton–Jacobi–Bellman and continuity equations using a particle-based forward–backward sweep scheme under time discretizations with $\Delta t = 0.01$ within a fixed time interval of $[0, 1]$. 
Each iteration involves a backward sweep and a forward sweep. 
In the backward sweep, we solve the Hamilton–Jacobi–Bellman equation for the value function $\phi$ backward in time, from the terminal condition $\phi(T, \cdot)$ to $\phi(0, \cdot)$, using a discrete-time Euler scheme. 
The interaction term $\int K(\bm{x}, \bm{y}) \dd \rho_t(\bm{y})$ is approximated by an empirical average over the wholse set of particles for \texttt{MFG}, over a randomly selected subset for \texttt{Random-MFG}, and over a thinned subset produced by \ktsc for \texttt{KT-MFG}$^1$
and \ktc for \texttt{KT-MFG}$^2$. 
The kernel used for both versions of \texttt{KT-MFG} is a Gaussian kernel with fixed length scale of $1.0$. 
The failure probability is set to be $0.5$ for both \ktsc and \ktc. 
Then, in the forward sweep, we evolve the particles forward in time according to the velocity field $\nabla \phi$ induced by the gradient of the value function. The initial particles are all sampled from $\calN(0, 0.1)$. 
The forward and backward sweeps are alternated for $100$ iterations where convergence is empirically observed. 

The random batch method~\citep{jin2020random} is not applicable in this setting, since the Hamilton–Jacobi–Bellman equation is solved backward in time to obtain $\phi$, followed by a forward-in-time solution of the continuity equation; as a result, it is no longer an interacting particle system. 
\end{appendices}

\newpage

\end{document}